\title{Robust Linear Regression: Phase-Transitions and Precise Tradeoffs for General Norms}
\date{Early March, 2023}
\author{
Elvis Dohmatob\\
Meta -- FAIR\\
\texttt{dohmatob@meta.com}
  \And
    Meyer Scetbon\\
  Microsoft Research$^{\dagger}$\\
  \texttt{t-mscetbon@microsoft.com}
}
\begin{document}
\maketitle



\begin{abstract}
In this paper, we investigate the impact of test-time adversarial attacks on linear regression models and determine the optimal level of robustness that any model can reach while maintaining a given level of standard predictive performance (accuracy). Through quantitative estimates, we uncover fundamental tradeoffs between adversarial robustness and accuracy in different regimes. We obtain a precise characterization which distinguishes between regimes where robustness is achievable without hurting standard accuracy and regimes where a tradeoff might be unavoidable. Our findings are empirically confirmed with simple experiments that represent a variety of settings. This work applies to feature covariance matrices and attack norms of any nature, and extends beyond previous works in this area.
\end{abstract}


\section{Introduction}
Machine learning models are known to be highly sensitive to small perturbations known as \emph{adversarial examples}~\citep{szegedy2013intriguing}, which are often imperceptible to humans.
While various strategies such as adversarial training~\citep{madry2017towards} can mitigate this vulnerability empirically, 
the situation remains highly problematic for many safety-critical applications like autonomous vehicles or health, and motivates a better theoretical understanding of what mechanisms may be causing this.

From a theoretical perspective, the case of classification is rather well-understood. \cite{tsipras18} showed that adversarial robustness could be at odds with accuracy. The hardness of classification under adversarial attacks has been crisply characterized \citep{bhagoji2019,bubeck2018}. In the special case of linear classification, explicit lower-bounds on sample complexity have been obtained \citep{schmidt2018,bhattacharjee2020sample}.

However, the case of regression is relatively understudied. Recently, in the setting of Euclidean-norm attacks on isotropic features, \cite{Javanmard2020PreciseTI} has initiated a theoretical study of a possible tradeoff between standard risk (a.k.a. generalization error) and adversarial risk (a.k.a. robust generalization error) for linear regression
with isotropic features,
where an adversary is allowed to attack the input data point at test-time. The authors computed exact Pareto optimal curves that reveal a tradeoff between standard and adversarial risk.


Our work mostly considers the following question:
\begin{restatable}{question}{}
\label{question:main}
In the context of linear regression, if a model has "small" standard risk, how "big" / "small" can its adversarial risk be ? Is possible to be robust while being accurate ? 
\end{restatable}
In the context of classification, an analogous question was considered in \cite{tsipras18}, where the authors constructed a high-dimensional problem for which every model with standard accuracy $1-\epsilon$ has adversarial accuracy at most $c\epsilon$, where $c$ is an absolute constant. Such a result is reminiscent of a tradeoff between standard generalization and robustness, and our results will have this flavor.

\paragraph{Summary of Our Contributions.}
The main contributions of this work precise quantitative estimates which allow us to distinguish between regimes where robustness is achievable without hurting standard accuracy and regimes where a tradeoff is might be unavoidable. Our main findings can be broken down as follows.
\begin{itemize}
\item \emph{Analytic Formula for Optimal Robustness.}
As a function of the attack strength, we obtain analytic estimates of the optimal adversarial risk.
Importantly, the model which attains optimal robustness is a regularized version of the generative model (the labelling function) with explicit regularization parameter. In the special case of Euclidean-norm attacks, it is a ridge estimator, and we recover a simplified formulation of the result obtained in~\cite{Xing2021}.
\item \emph{Free Lunch and Tradeoffs for Robustness.}
At any given attack strength, we establish a threshold on the standard risk above which no tradeoff between standard predictive performance (accuracy) and robustness is needed.
These results answer Question \ref{question:main} quantitatively.
Importantly, we show
that 
the model achieving the above accuracy / robustness tradeoff is a regularized estimate of the ground-truth / generative model, with regularization parameter given explicitly in terms of the attack strength and the accuracy tolerance.
\item \emph{Phase-Transition Diagrams.}
Our analytic results allow us to identify phase-transitions for robustness in different regimes. As concrete examples, we focus on two regimes: (i) the case of Euclidean-norm attacks on linear regression under polynomially-decaying spectral and source conditions, and (ii) the setting of $\ell_p$-norm attacks on distributions where the covariance matrix is isotropic, with various structural assumptions (e.g, sparsity) on the generative model. For both settings, we compute the complete phase-transition diagram illustrating the tradeoffs between standard accuracy and adversarial robustness.
\end{itemize}
Importantly, unlike previous works like \cite{Javanmard2020PreciseTI,Xing2021}, our analysis applies to general attack norms (not just Euclidean) and covariance matrices (not just isotropic).

\paragraph{Related works.}
The theoretical understanding of adversarial examples is now an active area of research.
Below is a list of works which are most relevant to our current paper. A detailed overview of the literature is discussed in the appendix / suppmat.

In the setting of classification, \cite{tsipras18} considers a specific data distribution where good accuracy implies poor robustness.
\citep{goldstein,saeed2018, gilmerspheres18,dohmatob19} show that for high-dimensional data distributions which have concentration property (e.g., multivariate Gaussians, distributions satisfying log-Sobolev inequalities, etc.), an imperfect classifier will admit adversarial examples. \cite{dobriban2020provable} studies tradeoffs in Gaussian mixture classification problems, highlighting the impact of class imbalance. On the other hand, \cite{closerlook2020} observed empirically that natural images are well-separated, and so locally-lipschitz classifiers should not suffer any kind of test error vs robustness tradeoff.

In the setting of linear regression (the setup considered in our work), \cite{Xing2021} studied Euclidean-norm attacks with general covariance matrices. They showed that the optimal robust model is a ridge regression whose ridge parameter depends implicitly on the strength of the attacks. \cite{Javanmard2020PreciseTI} studied tradeoffs between ordinary and adversarial risk in linear regression, and computed exact Pareto optimal curves in the case of Euclidean-norm attacks on isotropic features. Their results show a tradeoff between ordinary and adversarial risk for adversarial training. \cite{javanmard2021adversarial} also revisited this tradeoff for latent models and show that this tradeoff is mitigated when the data enjoys a low-dimensional structure. The analysis in \cite{Javanmard2020PreciseTI} is based on \emph{Gordon's Comparison Inequality} \cite{gordon88,Thrampoulidis15,Thrampoulidis18}, which is a very versatile tool in the analysis of regularized estimators but fails to produce analytic results when one deviates from the setting of Euclidean-norm attacks on isotropic features. In contrast, our analysis is based on basic Langrangian duality. It relies on some approximations which turn out to only introduce multiplicative absolute constants in the final result, but are completely harmless for the final analysis and interpretation.

The study of robustness in linear regression for general norms and feature covariance matrices has been initiated in \cite{MeyerAndDohmatob2023} which gave sufficient conditions for the generative model $w_0$ (and its estimators like gradient descent, ridge regression, etc.) to be robust. However, the the question of tradeoffs was not considered.

Finally, \cite{DohmatobAndBietti} established tradeoffs between accuracy and robustness to Euclidean-norm attacks on two-layer neural networks in different learning regimes.

\section{Problem Formulation}
\paragraph{Notaitons.} 
Given a positive-definite matrix $M$, the induced Mahahanobis norm $\|\cdot\|_M$ is define by $\|z\|_M := \|M^{1/2} z\|_2$.
The notation $f(d) = O(g(d))$; also written $f(d) \lesssim g(d)$, means  there exists an absolute constant $K$ such that $f(d) \le K\cdot g(d)$, perhaps for sufficiently large $d$. Likewise, $f(d)=\Omega(g(d))$ (or $f(d) \gtrsim g(d)$) means $g(d)=O(f(d))$. We write $f(d) \asymp g(d)$ (or $f(d) = \Theta(g(d))$) to mean $f(d) \lesssim g(d) \lesssim f(d)$. Finally, $f(d) = o(g(d))$ (or $f(d) \ll g(d)$) means $f(d)/g(d) \to 0$ perhaps for sufficientl large $d$. In particular, $f(d) = o(1)$ (or $f(d) \ll 1$) means that $f(d) \to 0$ perhaps for sufficiently large $d$.

\subsection{Data Distribution}
In this work, we consider linear regression problem given by the following distribution $P$ over a $d$-dimensional feature vector $x \in \mathbb R^d$ and labels $y \in \mathbb R$
\begin{eqnarray}
\label{eq:generator}
\begin{split}
&\textbf{(Features) }x  \sim P_{x } := N(0,\Sigma),\\
&\textbf{(Label) }y  = x^\top w_0 + z,\text{ with }z \sim N(0,\sigma^2),\text{ independent of }x.
\end{split}
\end{eqnarray}

Thus, the marginal distribution $P_x$ of the features is a multivariate Gaussian with $d \times d$ positive-definite covariance matrix $\Sigma$. The generative model for the labels is a linear model defined by $x \mapsto f_{w_
0}(x):=x^\top w_0$, for some fixed vector of parameters $w_0 \in \mathbb R^d$. To avoid trivialities, we will assume WLOG that $w_0 \ne 0$. The scalar $\sigma \ge 0$ is the strength of the label-noise $z  \sim N(0,\sigma^2)$. The input-dimension $d$ is not assumed fixed, and in fact, for the better part of this paper, we shall consider phenomena happening in the limit $d \to \infty$. One should keep in mind that in such a case, we are actually considering a sequence of problems (i.e. distributions $P(d)$) indexed by $d$.

Such a data distribution is also the setup of previous works like \cite{Javanmard2020PreciseTI,Xing2021,MeyerAndDohmatob2023}. Note however that in \cite{Javanmard2020PreciseTI}, the covariance matrix is trivial / isotropic, i.e. $\Sigma=I_d$. In contrast, as in \cite{MeyerAndDohmatob2023}, our work considers general covariance matrices $\Sigma$.

\subsection{Linear Models, Standard and Adversarial Risks}
This work considers regression over linear models $f_w(x):=x^\top w$, parametrized by a weights vector $w \in \mathbb R^d$. An adversarial attack replaces a clean data point $(x,y) \sim P$ by a perturbed version $(x+\delta,y)$. The size of the perturbation $\delta=\delta(x,y)$ is measured w.r.t a pre-specified norm $\|\cdot\|$ on the feature space $\mathbb R^d$. Note that the attacker is only allowed to change the feature vector $x$, and not the label $y$. By an attack of strength $r \ge 0$, we mean that the constraint $\|\delta\| \le r$ is enforced. The goal of the attacker is to make the prediction $f_w(x+\delta)$ on the corrupt feature vector $x+\delta$ deviate from the ground-truth label $y$ of clean feature vector $x$, as much as possible.


\begin{restatable}[Risks]{df}{}
Given $w \in \mathbb R^d$, attack budget $r \ge 0$ w.r.t a arbitrary norm $\|\cdot\|$ on $\mathbb R^d$, the adversarial risk (a.k.a adversarial generalization error) of the linear model $f_w$ is defined by
\begin{equation}
E(w,r)  = E^{\|\cdot\|}(w,r) := \mathbb E\left[\sup_{\|\delta\| \le r}(f_w(x+\delta)-y)^2\right]\text{ with }(x,y) \sim P.
\label{eq:Ewr}
\end{equation}
Also, recall the definition of the standard risk (a.k.a standard generalization error) of $f_w$, namely
\begin{equation}
    E(w) :=  \mathbb E[(f_w(x)- y)^2]  =   \|w-w_0\|_\Sigma^2 + \sigma^2.
\end{equation}
\end{restatable}
Of course, $E(w,r) \ge E(w,0) = E(w)$ for any $w \in\mathbb R^d$ and $r \ge 0$, with equality if $r=0$.  

\begin{restatable}{rmk}{}
Note that by definition, $E(w,r)$ depends on the attacker's norm $\|\cdot\|$. To simplify the notations, we will omit its dependency in the following and precise the norm if needed.
\end{restatable}

Let $\|\cdot\|_\star$ be the dual of  $\|\cdot\|$, defined by $\|w\|_\star := \sup_{\|\delta\| \le 1} \delta^\top w$.
The following lemma established in \cite{MeyerAndDohmatob2023} (also see \cite{Xing2021,javanmard2022precise} for the special case of Euclidean-norm attacks) gives an analytic formula for the adversarial risk which will be very useful in the sequel.
\begin{restatable}{lm}{analytic}
\label{lm:analytic}
$E(w,r) = E(w)+ r^2\|w\|_\star^2 + 2\sqrt{2/\pi} r\|w\|_\star\sqrt{E(w)}$ for any $w \in \mathbb R^d$ and $r \ge 0$.
\end{restatable}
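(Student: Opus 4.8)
The plan is to reduce everything to a one-dimensional optimization over the perturbation $\delta$ and then take expectations. Fix $w \in \mathbb R^d$ and $r \ge 0$, and write the prediction residual on the perturbed point as $f_w(x+\delta) - y = (f_w(x) - y) + \delta^\top w$. Setting $\xi := f_w(x) - y = x^\top(w - w_0) - z$, the inner supremum becomes $\sup_{\|\delta\| \le r} (\xi + \delta^\top w)^2$. Since we are free to choose the sign of $\delta$ and to scale it up to the budget, the optimal choice aligns $\delta^\top w$ with the sign of $\xi$ and saturates the constraint, giving $\sup_{\|\delta\| \le r} \delta^\top w = r\|w\|_\star$ by definition of the dual norm; hence $\sup_{\|\delta\| \le r}(\xi + \delta^\top w)^2 = (|\xi| + r\|w\|_\star)^2$.

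Next I would expand the square: $(|\xi| + r\|w\|_\star)^2 = \xi^2 + 2r\|w\|_\star |\xi| + r^2\|w\|_\star^2$, and take the expectation over $(x,y)\sim P$. The first term gives $\mathbb E[\xi^2] = E(w)$ by definition of the standard risk. The third term is deterministic and contributes $r^2\|w\|_\star^2$. The middle term gives $2r\|w\|_\star\, \mathbb E|\xi|$, so the whole computation hinges on evaluating $\mathbb E|\xi|$. Here I use that $\xi = x^\top(w-w_0) - z$ is a centered Gaussian (a linear combination of the jointly Gaussian $x$ and independent Gaussian $z$), with variance $\mathbb E[\xi^2] = \|w-w_0\|_\Sigma^2 + \sigma^2 = E(w)$. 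For a centered Gaussian $\xi \sim N(0, v)$ one has the standard identity $\mathbb E|\xi| = \sqrt{2v/\pi}$, so $\mathbb E|\xi| = \sqrt{2/\pi}\sqrt{E(w)}$. Substituting back yields $E(w,r) = E(w) + r^2\|w\|_\star^2 + 2\sqrt{2/\pi}\, r\|w\|_\star\sqrt{E(w)}$, as claimed.

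The only genuinely delicate point is justifying the exchange of expectation and the inner supremum (i.e. that $\mathbb E[\sup_{\|\delta\|\le r}(\cdot)^2]$ really equals $\mathbb E[(|\xi| + r\|w\|_\star)^2]$); this is immediate here because the supremum is attained pointwise by an explicit $\delta = \delta(x,y)$ and the resulting integrand $(|\xi| + r\|w\|_\star)^2$ is a measurable function of $(x,y)$ with finite expectation (all Gaussian moments exist). A secondary minor point is handling the degenerate case $w = 0$ or $\|w\|_\star = 0$, where $\|w\|_\star = 0$ forces $w = 0$ and the formula trivially reduces to $E(0,r) = E(0) = \mathbb E[y^2]$. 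Everything else is bookkeeping. Since the statement is quoted from \cite{MeyerAndDohmatob2023}, I would either cite it directly or include this short self-contained derivation for completeness.
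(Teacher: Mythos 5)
Your proposal is correct and follows essentially the same route as the paper: reduce the inner supremum pointwise to $(|\xi|+r\|w\|_\star)^2$, expand, and evaluate $\mathbb E|\xi|=\sqrt{2/\pi}\sqrt{E(w)}$ using that $\xi\sim N(0,E(w))$. The only (cosmetic) difference is that you justify the pointwise identity by the elementary observation that $\{\delta^\top w:\|\delta\|\le r\}$ is the symmetric interval $[-r\|w\|_\star,r\|w\|_\star]$ and a quadratic is maximized at an endpoint, whereas the paper derives the same identity via a Fenchel--Legendre dualization of $z\mapsto z^2/2$; both are valid.
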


\paragraph{Genuinely small / Imperceptible Adversarial Attacks.}
In practice, one is usually only concerned with adversarial attacks which are \emph{genuinely small} to the eye. Formally, this means that the attack strength $r$ is restricted to be much smaller than the average norm of a random data point, i.e. 
\begin{eqnarray}
\label{eq:genuinely small}
r = o_d(R(\Sigma)),\text{ with } R(\Sigma):= \mathbb E\,\|x\|\text{ for }x \sim P_x = N(0,\Sigma).
\end{eqnarray}
For example, in the case of Euclidean-norm attack on isotropic features with covariance matrix $\Sigma=I_d$, we have $R(\Sigma) = R(I_d) \asymp \sqrt{\mbox{tr}(\Sigma)} \asymp \sqrt d$ in the limit $d \to \infty$. Thus, in this case, an attack is only genuinely small in the sense of the above definition iff $r/\sqrt d = o(1)$. For example, $r = \sqrt{d/\log d}$ would be genuinely small. On the other hand, if $\Sigma=(1/d)I_d$, then $R(\Sigma) \asymp \sqrt{\mbox{tr}(I_d)/d} = 1$. Thus, in this case, a Euclidean-norm attack of size $r=r(d)$ would be genuinely small iff $r = o(1)$.


\section{Analysis of the Optimal Robustness}
\begin{restatable}
{df}{}
Given an attack strength $r \ge 0$, let  $E_{opt}(r)$ be the optimal adversarial risk,
\begin{eqnarray}
E_{opt}(r) := \min_{w \in \mathbb R^d} E(w,r).
\end{eqnarray}
Furthermore, let $w_{opt}(r)$ denote any $w \in \mathbb R^d$ which achieves this optimum.
\end{restatable}

Observe that the expression for adversarial risk $E(w,r)$ given in Lemma \ref{lm:analytic} exhibits a tension between the standard risk $E(w)$, which is minimized by the generative model $w_0$, and the dual norm $\|w\|_\star$, which is minimized by the null model $w=0$. Thus, for a given attack strength $r$, one would expect that the optimal robust model $w_{opt}(r)$ would have to somehow interpolate between $w_0$ and $0$.
In this section, we show that this is indeed the case (Theorem \ref{thm:Eopt}).

\subsection{Adversarial Risk Proxies}
Even though the adversarial risk functional $E$ admits an analytic formula thanks to Lemma \ref{lm:analytic}, that expression does not lend itself well to analysis. Following \cite{MeyerAndDohmatob2023}, we shall resort to multiplicative approximations defined as follows.
 For any linear model $w \in \mathbb R^d$ and attack strength $r \ge 0$, set
\begin{align}
\overline E(w,r) & = \overline E^{\|\cdot\|}(w,r) := \sigma^2 + \|w-w_0\|_\Sigma^2 + r^2\|w\|_\star^2,\\
\widetilde E(w,r) &= \widetilde E^{\|\cdot\|}(w,r) := \sigma^2 + K(w,r)^2, \text{ with }K(w,r):= \|w-w_0\|_\Sigma + r\|w\|_\star.
\end{align}
The following lemma
shows that $\overline E(w,r)$ and $\widetilde E(w,r)$ are indeed proxies (i.e. multiplicative approximations) of the adversarial risk $E(w,r)$. 
\begin{restatable}{lm}{proxy}
\label{lm:proxy}
There exists absolute constants $c_1$ and $c_2$ such that for a general attacker norm $\|\cdot\|$,
\begin{eqnarray}
\widetilde E(w,r) \le E(w,r) \le c_1\widetilde E(w,r)\text{ and }\overline E(w,r) \le E(w,r) \le c_2\overline E(w,r),\text{ for all }w \in \mathbb R^d,\, r \ge 0.
\end{eqnarray}
\end{restatable}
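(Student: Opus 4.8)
The plan is to reduce the whole statement to a handful of one-line scalar inequalities. Write $a := \|w-w_0\|_\Sigma \ge 0$ and $b := r\|w\|_\star \ge 0$, so that $E(w) = a^2 + \sigma^2$. By Lemma~\ref{lm:analytic},
\[
E(w,r) = a^2 + b^2 + \sigma^2 + 2\sqrt{2/\pi}\, b\sqrt{a^2+\sigma^2},
\]
whereas $\overline E(w,r) = a^2 + b^2 + \sigma^2$ and $\widetilde E(w,r) = (a+b)^2 + \sigma^2 = a^2 + 2ab + b^2 + \sigma^2$. Thus all three quantities are elementary quadratic forms in $(a,b,\sigma)$ that agree on their "diagonal" part $a^2+b^2+\sigma^2$ and differ only through a single nonnegative cross term: $2\sqrt{2/\pi}\,b\sqrt{a^2+\sigma^2}$ for $E$, $2ab$ for $\widetilde E$, and none for $\overline E$. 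The entire lemma amounts to showing that each cross term is dominated by, and dominates a constant multiple of, the diagonal part.

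First I would handle the $\overline E$ comparison. The lower bound $\overline E(w,r) \le E(w,r)$ is immediate because the cross term of $E$ is nonnegative. For the reverse direction, AM--GM gives $2b\sqrt{a^2+\sigma^2} \le b^2 + (a^2+\sigma^2) = \overline E(w,r)$, hence $E(w,r) \le (1+\sqrt{2/\pi})\,\overline E(w,r)$; so $c_2 := 1+\sqrt{2/\pi} \le 2$ works, and this constant depends on none of $\|\cdot\|$, $\Sigma$, $\sigma$, $d$.

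Next I would get the $\widetilde E$ comparison by routing through $\overline E$ rather than comparing $\widetilde E$ to $E$ directly (a direct comparison is awkward because the two cross terms $2ab$ and $2\sqrt{2/\pi}\,b\sqrt{a^2+\sigma^2}$ are not ordered in general — e.g. when $\sigma$ dominates $a$). Comparing $\widetilde E$ to $\overline E$ is trivial: $\widetilde E(w,r) - \overline E(w,r) = 2ab$, and $0 \le 2ab \le a^2+b^2 \le \overline E(w,r)$, so $\overline E(w,r) \le \widetilde E(w,r) \le 2\,\overline E(w,r)$. Chaining with the previous step yields $\tfrac12\widetilde E(w,r) \le \overline E(w,r) \le E(w,r) \le (1+\sqrt{2/\pi})\,\overline E(w,r) \le (1+\sqrt{2/\pi})\,\widetilde E(w,r)$, i.e. $E \asymp \widetilde E$ with absolute constants (one may take $c_1 = 2$, the matching lower comparison holding with constant $1/2$, which is all that is used in the sequel).

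The argument uses no probabilistic or Gaussian-specific input beyond Lemma~\ref{lm:analytic} itself; everything downstream is scalar. Consequently the only thing to be careful about — the "obstacle", such as it is — is purely bookkeeping: verifying that every constant produced along the way (the AM--GM constants, and the passage between $(a+b)^2$ and $a^2+b^2$) is a genuine absolute constant, independent of the attacker's norm, the covariance $\Sigma$, the noise level $\sigma$, and the dimension $d$. This independence is exactly what makes $\overline E$ and $\widetilde E$ harmless proxies for the asymptotic ($d\to\infty$) analysis carried out in the rest of the paper.
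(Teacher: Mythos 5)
Your argument is correct, and for the $\overline E$ comparison it is essentially the paper's own proof: the paper's Lemma~\ref{lm:abc} applied with $c=c_0=\sqrt{2/\pi}$, $a\mapsto\sqrt{E(w)}$, $b\mapsto r\|w\|_\star$ is exactly your AM--GM step, and you recover the same constant $c_2=1+\sqrt{2/\pi}$. Where you genuinely diverge is the $\widetilde E$ comparison: the paper compares $E(w,r)$ to $\bigl(\sqrt{E(w)}+r\|w\|_\star\bigr)^2$ via the first line of Lemma~\ref{lm:abc}, whereas you route through $\overline E$ using $0\le 2ab\le a^2+b^2$. Your route is the safer one, and your side remark that the two cross terms $2ab$ and $2\sqrt{2/\pi}\,b\sqrt{a^2+\sigma^2}$ are not ordered is in fact the key point: the literal lower bound $\widetilde E(w,r)\le E(w,r)$ asserted in the lemma is \emph{false} in general (take $\sigma=0$ and $a=b>0$, giving $\widetilde E=4a^2>\,(2+2\sqrt{2/\pi})a^2=E$), and the paper's quantity $\bigl(\sqrt{E(w)}+r\|w\|_\star\bigr)^2$ is not equal to $\widetilde E(w,r)=\sigma^2+(a+b)^2$, which is where that slip enters. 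What actually holds, and what the rest of the paper uses, is the two-sided multiplicative equivalence $E\asymp\widetilde E\asymp\overline E$ with absolute constants; your chain $\tfrac12\widetilde E\le\overline E\le E\le(1+\sqrt{2/\pi})\,\overline E\le(1+\sqrt{2/\pi})\,\widetilde E$ delivers exactly that (one can sharpen the lower constant to $\tfrac{1+\sqrt{2/\pi}}{2}$ via the paper's Lemma~\ref{lm:abc}, but this changes nothing downstream). Also note the paper's claimed $c_1=2/(1+c_0)\approx 1.11$ cannot be right as an upper constant (at $a=0$, $b=\sigma$ the ratio $E/\widetilde E$ equals $1+c_0\approx 1.8$); your $c_1=1+\sqrt{2/\pi}$ is the correct one. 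So: no gap in your proof --- if anything, it corrects the statement.
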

 The first part was established in \cite{MeyerAndDohmatob2023}. The second part follows similar arguments as the first one. See appendix for the full proof, including explicit values $c_1$ and $c_2$.
\begin{restatable}{rmk}{}
The multiplicative approximations given in Lemma~\eqref{lm:proxy} will be sufficient for our purposes whereby we will only be interested in understanding the orders of magnitude of the adversarial risk of models relative to the optimum value $E_{opt}(r)$, as a function of the attack strength $r$.
\end{restatable}

\subsection{Robustness via Regularization}
For any $\lambda \ge 0$, let $w^{prox}(\lambda)$ be the unique minimizer of $\overline E(w,\sqrt \lambda)$ over $w \in \mathbb R^d$. Thus,
\begin{eqnarray}
\label{eq:wt}
w^{prox}(\lambda) := \arg\min_{w \in \mathbb R^d} \|w-w_0\|_\Sigma^2 + \lambda\|w\|_\star^2.
\end{eqnarray}
Thus, $w^{prox}$ is the \textit{proximal operator} w.r.t the squared-Mahalanobis norm $\|\cdot\|_\Sigma^2$, of the square of the dual $\lambda \|\cdot\|_\star^2$ of the attacker's norm, evaluated at the point $w_0$. Of course, it implicitly depends on the choice of the norm $\|\cdot\|$ of the attacker. For example, in the special case of Mahalanobis-norm attacks w.r.t. any positive definite matrix $B$, that is when $\|\cdot\| = \|\cdot\|_B$, we have the closed-form expression $w^{prox}(\lambda) = (\Sigma + \lambda B^{-1})^{-1} \Sigma w_0$.
The structure of $w^{prox}(\lambda)$ in the case of more general norms (e.g $\ell_p$, etc.) is discussed in Appendix \ref{sec:structure}.

Define auxiliary functions $G:\mathbb R_+ \to \mathbb R_+$ and $F:\mathbb R_+^2 \to \mathbb R_+$ by
\begin{align}
    G(\lambda) &= G^{\|\cdot\|}(\lambda) := \|w^{prox}(\lambda)-w_0\|_\Sigma^2,\,F(r,\lambda) = F^{\|\cdot\|}(r,\lambda) := G(\lambda) + r^2\|w^{prox}(\lambda)\|_\star^2\label{eq:FG},
\end{align}
The following result which holds for any choice of the attacker's norm $\|\cdot\|$ is one of our main results.
\begin{restatable}{thm}{Eopt}
\label{thm:Eopt}
With $\lambda = r^2$, it holds that $E_{opt}(r) \asymp E(w^{prox}(\lambda),r) \asymp \sigma^2 + F(r,r^2)$. That is, up to within multiplicative absolute constants, $w^{prox}(\lambda=r^2)$ attains the optimal adversarial risk $E_{opt}(r)$.
\end{restatable}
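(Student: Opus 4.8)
The plan is to reduce everything to the proxy $\overline E$ introduced above, whose minimizer is, essentially by construction, $w^{prox}$. The two ingredients I would use are: (i) Lemma~\ref{lm:proxy}, which gives $\overline E(w,r) \le E(w,r) \le c_2\,\overline E(w,r)$ for \emph{all} $w \in \mathbb R^d$ and $r \ge 0$, with $c_2$ an absolute constant; and (ii) the elementary identity $\min_{w}\overline E(w,r) = \overline E(w^{prox}(r^2),r) = \sigma^2 + F(r,r^2)$. The latter is immediate: writing $\lambda = r^2$, the definition~\eqref{eq:wt} says $w^{prox}(\lambda)$ minimizes $\|w-w_0\|_\Sigma^2 + \lambda\|w\|_\star^2$, hence also $\overline E(w,\sqrt\lambda) = \sigma^2 + \|w-w_0\|_\Sigma^2 + \lambda\|w\|_\star^2$; evaluating at the minimizer and using the definitions~\eqref{eq:FG} of $G$ and $F$ gives $\overline E(w^{prox}(r^2),r) = \sigma^2 + G(r^2) + r^2\|w^{prox}(r^2)\|_\star^2 = \sigma^2 + F(r,r^2)$.

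Next I would chain inequalities. Let $w^\star$ denote a minimizer of $E(\cdot,r)$, so $E_{opt}(r) = E(w^\star,r)$, and abbreviate $\widehat w := w^{prox}(r^2)$. For the lower bound, using the left inequality of Lemma~\ref{lm:proxy} and then minimality of $\widehat w$ for $\overline E(\cdot,r)$,
$$E_{opt}(r) = E(w^\star,r) \ge \overline E(w^\star,r) \ge \overline E(\widehat w, r) = \sigma^2 + F(r,r^2).$$
For the upper bound, using optimality of $w^\star$ for $E(\cdot,r)$ and then the right inequality of Lemma~\ref{lm:proxy},
$$E_{opt}(r) \le E(\widehat w,r) \le c_2\,\overline E(\widehat w,r) = c_2\bigl(\sigma^2 + F(r,r^2)\bigr).$$
Combining the two displays yields $\sigma^2 + F(r,r^2) \le E_{opt}(r) \le E(\widehat w,r) \le c_2\bigl(\sigma^2 + F(r,r^2)\bigr)$, which simultaneously gives $E_{opt}(r) \asymp \sigma^2 + F(r,r^2)$, $E(w^{prox}(r^2),r) \asymp \sigma^2 + F(r,r^2)$, and $E_{opt}(r) \asymp E(w^{prox}(r^2),r)$, which is exactly the claim.

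Honestly, there is no genuinely hard step here: once Lemma~\ref{lm:proxy} is in hand the argument is pure bookkeeping. The one point that deserves care is that the constants $c_1,c_2$ in Lemma~\ref{lm:proxy} are \emph{absolute}, hence independent of $w$; this uniformity is precisely what lets the $\min_w$ operation pass through the sandwich $\overline E \le E \le c_2\overline E$ to yield a sandwich for the optimal values. One should also note that the lower-bound chain never actually invokes a minimizer of $E(\cdot,r)$ — one may replace $w^\star$ by an arbitrary $w$ and pass to the infimum — so the argument is insensitive to whether $E(\cdot,r)$ attains its infimum, whereas $\widehat w = w^{prox}(r^2)$ is well-defined by strong convexity of $w \mapsto \|w-w_0\|_\Sigma^2$. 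Finally, it is worth remarking that the second proxy $\widetilde E$ plays no role in this particular proof; $\overline E$ alone suffices precisely because its minimizer is the explicit object $w^{prox}$.
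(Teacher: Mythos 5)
Your proof is correct, and it is a genuinely more direct route than the one the paper takes. The paper obtains Theorem~\ref{thm:Eopt} as a corollary of the constrained result, Theorem~\ref{thm:freelunch}: it invokes Lemma~\ref{lm:bigeps} to identify $E_{opt}(r)$ with $E_{opt}(r,1)$, Lemma~\ref{lm:implicit} to get $\lambda_{opt}(r,1)=r^2$, and then part~(A) of the main theorem, whose own proof runs through a Lagrangian argument for the constrained problem $\inf_{w\in\mathcal W_\epsilon}\overline E(w,r)$. You bypass all of that: since $w^{prox}(r^2)$ is by construction the unconstrained minimizer of $\overline E(\cdot,r)$, the two-sided sandwich $\overline E \le E \le c_2\overline E$ from Lemma~\ref{lm:proxy} immediately transfers to the optimal values, and the chain $\sigma^2+F(r,r^2) \le E_{opt}(r) \le E(w^{prox}(r^2),r) \le c_2(\sigma^2+F(r,r^2))$ gives all three asserted equivalences at once. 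The core mechanism (uniformity of the absolute constant in Lemma~\ref{lm:proxy}, plus the identification of $\overline E(w^{prox}(r^2),r)$ with $\sigma^2+F(r,r^2)$) is the same one the paper uses inside the proof of Theorem~\ref{thm:freelunch} in the free-lunch regime, so nothing is lost; what your version buys is a self-contained, shorter argument that does not depend on Lemmas~\ref{lm:bigeps} and~\ref{lm:implicit} or on the correctness of the constrained theorem, while the paper's route buys economy of exposition once the more general theorem is already established. Your side remarks (that the lower bound only needs the infimum, not an attained minimizer of $E(\cdot,r)$, and that $w^{prox}(r^2)$ is well defined by strong convexity of $\|\cdot\|_\Sigma^2$) are accurate and worth keeping.
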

Note that the above result is valid for any attacker norm. The special case of Euclidean-norm attacks was handled in \cite{Xing2021} where it was shown that $w_{opt}(r) = (\Sigma + \lambda I_d)^{-1}\Sigma w_0$, for some $\lambda\in [0, \infty]$ which depends on $r$, $w_0$, and $\Sigma$, via a fixed-point equation. Even, in this scenario, our result above gives a much clearer understanding, since it proposes to use the explicit ridge parameter $\lambda=r^2$, which clearly highlights the the dependence on the attack strength $r$.


\section{
Characterizing  Accuracy Tradeoffs for Robustness}
\label{sec:lagrangian}
\subsection{Preliminaries}
\label{subsec:goal}
In view of addressing Question \ref{question:main}, our objective is to compute bounds on the optimal adversarial risk (with respect to any given norm) over all linear models which attain a certain level of standard risk. 
For any linear model $w \in \mathbb R^d$, let $\Delta(w) := (E(w)-\sigma^2)/\|w_0\|_\Sigma^2$ be the (normalized) excess standard risk of $w$.
The division by $\|w_0\|_\Sigma^2$ ensures that  $\Delta(w_0) = 0$ while $\Delta(0) = 1$.
\paragraph{Optimal Robustness of Accurate Models.}
For any $r,\epsilon \ge 0$, let $\mathcal W_\epsilon$ be the set of all $\epsilon$-accurate models, i.e.
\begin{eqnarray}
\label{eq:Weps}
\mathcal W_\epsilon := \{w \in \mathbb R^d \mid \Delta(w) \le \epsilon^2\} = \{w \in \mathbb R^d \mid \|w-w_0\|_\Sigma \le \epsilon\|w_0\|_\Sigma\},
\end{eqnarray}
and let $E_{opt}(r,\epsilon)$ be the optimal adversarial risk of such models against attacks of strength $r$, i.e.
\begin{eqnarray}
\label{eq:Eopteps}
    E_{opt}(r,\epsilon) := \min_{w \in \mathcal W_\epsilon}E(w,r).
\end{eqnarray}
Finally, let $w_{opt}(r,\epsilon)$ denote any $w \in \mathbb R^d$ which achieves the above optimum. $E_{opt}(r,\epsilon)$ will be the main object of study of our paper as it captures the sacrifice (if any!) in robust that must be made by accurate models. 

The following lemma shows that this constrained formulation of the adversarial risk minimization problem can only differ from the unconstrained one when $\epsilon \in [0,1)$.
\begin{restatable}{lm}{bigeps}
\label{lm:bigeps}
For any $r \ge 0$ and $\epsilon \ge 1$, it holds that $E_{opt}(r,\epsilon) = E_{opt}(r)$. 
\end{restatable}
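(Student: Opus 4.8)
The plan is to establish the two inequalities $E_{opt}(r,\epsilon) \ge E_{opt}(r)$ and $E_{opt}(r,\epsilon) \le E_{opt}(r)$ separately, the first being immediate and the second carrying all the content. Since $\mathcal W_\epsilon \subseteq \mathbb R^d$, minimizing the same objective over the smaller set can only increase the optimal value, so $E_{opt}(r,\epsilon) \ge E_{opt}(r)$ holds for every $\epsilon \ge 0$. Moreover, for $\epsilon \ge 1$ we have the nesting $\mathcal W_1 \subseteq \mathcal W_\epsilon$ (directly from the defining inequality $\|w-w_0\|_\Sigma \le \epsilon\|w_0\|_\Sigma$), hence $E_{opt}(r,\epsilon) \le E_{opt}(r,1)$; it therefore suffices to prove $E_{opt}(r,1) \le E_{opt}(r)$.

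For this, I would show that every unconstrained minimizer $w_{opt}(r)$ of $w \mapsto E(w,r)$ already lies in $\mathcal W_1$; existence of such a minimizer follows from continuity of $E(\cdot,r)$ together with coercivity, which is inherited from the lower bound $E(w,r)\ge \overline E(w,r)=\sigma^2+\|w-w_0\|_\Sigma^2+r^2\|w\|_\star^2$ of Lemma~\ref{lm:proxy}. Suppose, for contradiction, that $w_{opt}(r)\notin\mathcal W_1$, i.e. $\|w_{opt}(r)-w_0\|_\Sigma > \|w_0\|_\Sigma$. Compare $w_{opt}(r)$ against the null model $w=0$, which is feasible since $\|0-w_0\|_\Sigma=\|w_0\|_\Sigma$. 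Because $\|0\|_\star=0$, Lemma~\ref{lm:analytic} gives $E(0,r)=E(0)=\|w_0\|_\Sigma^2+\sigma^2$, whereas $E(w_{opt}(r),r)\ge E(w_{opt}(r))=\|w_{opt}(r)-w_0\|_\Sigma^2+\sigma^2 > \|w_0\|_\Sigma^2+\sigma^2 = E(0,r)$, contradicting optimality of $w_{opt}(r)$. Hence $w_{opt}(r)\in\mathcal W_1$, so $E_{opt}(r,1)\le E(w_{opt}(r),r)=E_{opt}(r)$, and together with the reverse inequality we conclude $E_{opt}(r,\epsilon)=E_{opt}(r)$ for all $\epsilon\ge 1$.

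The argument is elementary; the only thing to get right is the choice of comparison point, and the reason the null model is the correct one is conceptual: the condition $\epsilon\ge 1$ (equivalently $w_{opt}(r)\notin\mathcal W_1$) says precisely that $0$ has \emph{smaller} standard risk than the purported minimizer, while $0$ simultaneously has vanishing dual norm, so its adversarial risk coincides with its standard risk and no robustness penalty is paid by shrinking all the way to $0$. I do not expect a genuine obstacle here; the one mild technicality — attainment of the unconstrained minimum — can be avoided altogether by phrasing the comparison at the level of an arbitrary $w\in\mathbb R^d$: either $w\in\mathcal W_1$, or the computation above yields $E(0,r)<E(w,r)$ with $0\in\mathcal W_1$, so in both cases $\inf_{\mathcal W_1}E(\cdot,r)\le E(w,r)$, and taking the infimum over $w$ gives $E_{opt}(r,1)\le E_{opt}(r)$.
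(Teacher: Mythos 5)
Your proof is correct and follows essentially the same route as the paper's: both arguments hinge on the observation that any $w \notin \mathcal W_1$ satisfies $E(w,r) \ge E(w) > E(0) = E(0,r)$ with $0 \in \mathcal W_1$, so the unconstrained infimum is already attained (or approached) inside $\mathcal W_1 \subseteq \mathcal W_\epsilon$. Your closing remark, which phrases the comparison at the level of an arbitrary $w$ to sidestep attainment of the minimizer, is in fact exactly how the paper's proof proceeds.
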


\paragraph{Free Lunch Threshold.} Given any $r \ge 0$, the quantity $\epsilon_{FL}(r)  \in [0,1]$ defined by
\begin{eqnarray}
    \epsilon_{FL}(r) := \sqrt{\Delta(w^{prox}(r^2))} = \sqrt{ G(r^2)}/\|w_0\|_\Sigma,
    \label{eq:threshold}
\end{eqnarray}
will be called the "free lunch" threshold for attacks of strength $r$, a terminology that will become clear shortly in Theorem \ref{thm:freelunch}. In addition we also need to introduce the following lemma.

\begin{restatable}{lm}{implicit}
\label{lm:implicit}
For any $r \ge 0$ and $0 \le \epsilon \le \epsilon_{FL}(r)$, the scalar equation
\begin{eqnarray}
    G(\lambda) = \epsilon^2\|w_0\|_\Sigma^2
    \label{eq:implicit}
\end{eqnarray}
has a unique solution $\lambda_{opt}(r,\epsilon)$ in $[0,r^2]$. 
\label{lm:decreasing}
We also extend the definition of $\lambda_{opt}(r,\epsilon)$ to all $\epsilon \in [0,1]$ by setting $\lambda_{opt}(r,\epsilon) = r^2$ whenever $\epsilon \ge \epsilon_{FL}(r)$.
\end{restatable}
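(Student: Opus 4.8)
The plan is to establish that the function $G$ is continuous and strictly increasing on $[0,r^2]$, with $G(0)=0$ and $G(r^2)=\epsilon_{FL}(r)^2\|w_0\|_\Sigma^2$. Granting this, the intermediate value theorem produces a solution of $G(\lambda)=\epsilon^2\|w_0\|_\Sigma^2$ in $[0,r^2]$ for every $\epsilon\in[0,\epsilon_{FL}(r)]$, strict monotonicity makes it unique, and the stated extension to $\epsilon\in(\epsilon_{FL}(r),1]$ is by definition, consistent with the construction at $\epsilon=\epsilon_{FL}(r)$ (where the solution is $\lambda=r^2$).

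The two endpoint values are immediate: $w^{prox}(0)=\arg\min_w\|w-w_0\|_\Sigma^2=w_0$ gives $G(0)=0$, and $G(r^2)=\epsilon_{FL}(r)^2\|w_0\|_\Sigma^2$ is just \eqref{eq:threshold} rewritten using $\Delta(w)=\|w-w_0\|_\Sigma^2/\|w_0\|_\Sigma^2$. For continuity I would invoke continuity of the regularization path $\lambda\mapsto w^{prox}(\lambda)$: the map $(w,\lambda)\mapsto\|w-w_0\|_\Sigma^2+\lambda\|w\|_\star^2$ is jointly continuous and, since $\Sigma\succ 0$, strongly convex in $w$ for each fixed $\lambda$, so the minimizer is unique; comparing with the feasible point $w=0$ shows $\|w^{prox}(\lambda)-w_0\|_\Sigma\le\|w_0\|_\Sigma$, so the path stays in a fixed compact $\Sigma$-ball, and a standard compactness-plus-uniqueness argument then forces $w^{prox}(\lambda_n)\to w^{prox}(\lambda)$ whenever $\lambda_n\to\lambda$; hence $G(\lambda)=\|w^{prox}(\lambda)-w_0\|_\Sigma^2$ is continuous.

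The heart of the proof is monotonicity. Writing $h(w):=\|w-w_0\|_\Sigma^2$, $\phi(w):=\|w\|_\star^2$ and $w_i:=w^{prox}(\lambda_i)$ for $\lambda_1<\lambda_2$, adding the two optimality inequalities $h(w_1)+\lambda_1\phi(w_1)\le h(w_2)+\lambda_1\phi(w_2)$ and $h(w_2)+\lambda_2\phi(w_2)\le h(w_1)+\lambda_2\phi(w_1)$ gives $(\lambda_2-\lambda_1)(\phi(w_1)-\phi(w_2))\ge 0$, hence $\phi(w_2)\le\phi(w_1)$; substituting back into the first inequality yields $h(w_1)\le h(w_2)$, so $G$ is non-decreasing. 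For strictness, suppose $G(\lambda_1)=G(\lambda_2)$, i.e.\ $h(w_1)=h(w_2)$ (the case $\lambda_1=0$ being trivial since $G(\lambda)>0$ for $\lambda>0$). Then the first inequality gives $\phi(w_1)\le\phi(w_2)$, so $\phi(w_1)=\phi(w_2)$, whence $w_2$ also minimizes $h+\lambda_1\phi$ and strong convexity forces $w_1=w_2=:\bar w$. Thus $\bar w$ minimizes both $h+\lambda_1\phi$ and $h+\lambda_2\phi$, so there exist $g,g'\in\partial\phi(\bar w)$ with $-\nabla h(\bar w)=\lambda_1 g=\lambda_2 g'$. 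If $\bar w\ne 0$, Euler's identity for the positively $2$-homogeneous convex function $\phi$ gives $g^\top\bar w=g'^\top\bar w=2\phi(\bar w)$, so $\lambda_1\phi(\bar w)=\lambda_2\phi(\bar w)$, forcing $\phi(\bar w)=0$ and thus $\bar w=0$ --- a contradiction. If $\bar w=0$, then $\partial\phi(0)=\{0\}$ because $\phi(w)\lesssim\|w\|^2=o(\|w\|)$ near the origin (all norms on $\mathbb R^d$ being equivalent), so the first-order condition reads $2\Sigma w_0=-\nabla h(0)=0$, impossible since $\Sigma\succ 0$ and $w_0\ne 0$. Hence $G$ is strictly increasing on $[0,r^2]$, which is all that remains.

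I expect the main obstacle to be exactly this strict-monotonicity step when the attacker's dual norm is non-smooth (e.g.\ $\ell_1$ or $\ell_\infty$ attacks), where $\phi$ is not differentiable and one cannot simply differentiate the regularization path; the subgradient argument above sidesteps this, relying only on positive $2$-homogeneity of $\phi$ and positive-definiteness of $\Sigma$.
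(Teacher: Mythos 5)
Your proof is correct and follows essentially the same route as the paper: show $G$ is continuous and (strictly) increasing on $[0,r^2]$ with $G(0)=0$ and $G(r^2)=\epsilon_{FL}(r)^2\|w_0\|_\Sigma^2$, then apply the intermediate value theorem. The only difference is that the paper outsources the monotonicity of $G$ to an external result (\cite[Lemma 4]{NonconvexSuvrit}), whereas you prove continuity, monotonicity, and --- via the Euler-identity subgradient argument --- strictness from scratch, which is a welcome self-contained addition since uniqueness genuinely requires the strict version.
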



\subsection{Main Result}
We are now ready to present the main result of this work and show in the following theorem tradeoffs between standard accuracy and adversarial robustness in the setting of general feature covariance matrix $\Sigma$ and attacker norm $\|\cdot\|$.




 
\begin{restatable}{thm}{freelunch}
For any attack strength $r \ge 0$ and tolerance $\epsilon \in [0,1]$, the following hold.

(A) (\textbf{Accuracy vs Robustness Tradeoff}) It holds that 
\begin{eqnarray}
    E_{opt}(r,\epsilon) \asymp E(w^{prox}(\lambda_{opt}(r,\epsilon)),r) \asymp \sigma^2 + F(r,\lambda_{opt}(r,\epsilon)),
\end{eqnarray}
where $\lambda_{opt}(r,\epsilon) \in [0,r^2]$ is as in Lemma \ref{lm:decreasing} and $\lambda \mapsto w^{prox}(\lambda)$ is as defined in \eqref{eq:wt}.
That is, up to within multiplicative absolute constants, with the choice $\lambda=\lambda_{opt}(r,\epsilon)$ the vector  $w^{prox}(\lambda)$ attains the optimal adversarial risk $E_{opt}(r,\epsilon)$ over all $\epsilon$-accurate models.

 (B) (\textbf{Free Lunch}) If $\epsilon \ge \epsilon_{FL}(r)$, then it holds that
$E_{opt}(r,\epsilon) \asymp E_{opt}(r)$.
That is, no accuracy / robustness tradeoff is needed when the excess risk level $\epsilon$ is greater than the threshold $\epsilon_{FL}(r)$: there is always an $\epsilon$-accurate model which achieves the absolute optimal (up to within multiplicative absolute constants) adversarial risk $E_{opt}(r)$.
\label{thm:freelunch}
\end{restatable}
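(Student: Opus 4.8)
The plan is to reduce the constrained adversarial-risk minimization problem \eqref{eq:Eopteps} to the unconstrained proxy problem, and use Lagrangian duality on the proxy $\overline E(w,r)$, whose level sets are convex. First I would invoke Lemma~\ref{lm:proxy} to pass from $E(w,r)$ to its proxy: since $\widetilde E(w,r) \le E(w,r) \le c_1 \widetilde E(w,r)$ (and similarly for $\overline E$), minimizing $E(\cdot,r)$ over $\mathcal W_\epsilon$ is equivalent, up to multiplicative absolute constants, to minimizing $\overline E(w,r) = \sigma^2 + \|w-w_0\|_\Sigma^2 + r^2\|w\|_\star^2$ over $\mathcal W_\epsilon = \{w : \|w-w_0\|_\Sigma \le \epsilon\|w_0\|_\Sigma\}$. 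So the target becomes $\min_{\|w-w_0\|_\Sigma \le \epsilon\|w_0\|_\Sigma} \bigl(\|w-w_0\|_\Sigma^2 + r^2\|w\|_\star^2\bigr)$, which I will show is $\asymp F(r,\lambda_{opt}(r,\epsilon))$, and then identify the minimizer with $w^{prox}(\lambda_{opt}(r,\epsilon))$.

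For part (A), consider the Lagrangian relaxation: for $\mu \ge 0$, minimizing $\|w-w_0\|_\Sigma^2 + r^2\|w\|_\star^2 + \mu\bigl(\|w-w_0\|_\Sigma^2 - \epsilon^2\|w_0\|_\Sigma^2\bigr)$ over $w$ is, after rescaling, the same as minimizing $\|w-w_0\|_\Sigma^2 + \tfrac{r^2}{1+\mu}\|w\|_\star^2$, whose solution is exactly $w^{prox}(\lambda)$ with $\lambda = r^2/(1+\mu) \in [0,r^2]$. The KKT/complementary-slackness condition says the constraint is active, i.e. $\|w^{prox}(\lambda) - w_0\|_\Sigma^2 = \epsilon^2\|w_0\|_\Sigma^2$, which is precisely $G(\lambda) = \epsilon^2\|w_0\|_\Sigma^2$ — the defining equation \eqref{eq:implicit} of $\lambda_{opt}(r,\epsilon)$ from Lemma~\ref{lm:implicit}. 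Since $G$ is continuous and nondecreasing in $\lambda$ with $G(0)=0 < \epsilon^2\|w_0\|_\Sigma^2$ (when $\epsilon > 0$) and $G(r^2) = \epsilon_{FL}(r)^2\|w_0\|_\Sigma^2 \ge \epsilon^2\|w_0\|_\Sigma^2$ for $\epsilon \le \epsilon_{FL}(r)$, such a $\lambda_{opt}$ exists in $[0,r^2]$; for a convex program with a Slater point the Lagrangian value equals the primal value, so $E_{opt}(r,\epsilon) \asymp \sigma^2 + G(\lambda_{opt}) + r^2\|w^{prox}(\lambda_{opt})\|_\star^2 = \sigma^2 + F(r,\lambda_{opt}(r,\epsilon))$, and the same $w^{prox}(\lambda_{opt}(r,\epsilon))$ is feasible and near-optimal, giving the three-way $\asymp$ chain. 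I also need to handle the boundary cases $\epsilon = 0$ (forcing $w = w_0$, $\lambda_{opt}=0$, recovering $G(0)=0$) and $\epsilon \in [\epsilon_{FL}(r),1]$ separately: there $\lambda_{opt} := r^2$ by the extension in Lemma~\ref{lm:implicit}, and I must check that $w^{prox}(r^2)$ is actually in $\mathcal W_\epsilon$, which holds because $\|w^{prox}(r^2)-w_0\|_\Sigma = \epsilon_{FL}(r)\|w_0\|_\Sigma \le \epsilon\|w_0\|_\Sigma$ by definition \eqref{eq:threshold} of $\epsilon_{FL}$.

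For part (B), the free-lunch claim, observe that when $\epsilon \ge \epsilon_{FL}(r)$ the unconstrained near-optimal model $w^{prox}(r^2)$ — which by Theorem~\ref{thm:Eopt} achieves $E(w^{prox}(r^2),r) \asymp E_{opt}(r)$ — lies in $\mathcal W_\epsilon$, since $\Delta(w^{prox}(r^2)) = \epsilon_{FL}(r)^2 \le \epsilon^2$ directly from \eqref{eq:threshold}. Hence $E_{opt}(r) \le E_{opt}(r,\epsilon) \le E(w^{prox}(r^2),r) \asymp E_{opt}(r)$, and the sandwich gives $E_{opt}(r,\epsilon) \asymp E_{opt}(r)$. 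This part also follows as a special case of (A) with $\lambda_{opt} = r^2$, since then $F(r,\lambda_{opt}) = F(r,r^2)$ and Theorem~\ref{thm:Eopt} gives $\sigma^2 + F(r,r^2) \asymp E_{opt}(r)$.

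The main obstacle I anticipate is justifying strong duality / zero duality gap with explicit control of the constants: the true objective $E(w,r)$ is not convex as written, so I must work with the convex proxy $\overline E(w,r)$ throughout, carefully track that each replacement $E \leftrightarrow \overline E$ costs only a fixed multiplicative factor, and verify that the feasible set $\mathcal W_\epsilon$ (a Mahalanobis ball around $w_0$) has nonempty interior so Slater's condition holds — and then argue that the parametrization $\mu \mapsto \lambda = r^2/(1+\mu)$ sweeps exactly $[0,r^2]$ as $\mu$ ranges over $[0,\infty)$, matching the range asserted in Lemma~\ref{lm:implicit}. A secondary subtlety is that $\|\cdot\|_\star^2$ need not be strictly convex for general norms, so uniqueness of $w^{prox}(\lambda)$ relies on the strict convexity contributed by $\|w-w_0\|_\Sigma^2$ (which is fine since $\Sigma$ is positive-definite), but the argmin characterization via first-order optimality must be stated in terms of subgradients of $\|\cdot\|_\star^2$ rather than gradients.
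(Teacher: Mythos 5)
Your proposal is correct and follows essentially the same route as the paper: pass to the convex proxy $\overline E$ via Lemma~\ref{lm:proxy}, identify the Lagrangian of the constrained problem with the definition of $w^{prox}(\lambda)$ under the reparametrization $\lambda = r^2/(1+\mu)$, pin down $\lambda_{opt}(r,\epsilon)$ via the active constraint $G(\lambda)=\epsilon^2\|w_0\|_\Sigma^2$, and handle $\epsilon \ge \epsilon_{FL}(r)$ by feasibility of $w^{prox}(r^2)$. The only (cosmetic) difference is that the paper verifies the zero duality gap by a direct two-line computation at $w^{prox}(\lambda)$ rather than invoking Slater's condition and KKT, which sidesteps the regularity concerns you flag at the end.
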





\section{Some Consequences of Our Results}
\label{sec:consequences}
We now apply our Theorems \ref{thm:Eopt} and \ref{thm:freelunch}, to obtain some concrete consequences in a variety of settings. Section \ref{sec:experiments} will provide some empirical confirmation of these predicted consequences.



\subsection{Sparsity with Isotropic Features}
Consider the case where the feature covariance matrix $\Sigma$ and the generative model $w_0$ are given by
\begin{eqnarray}
\Sigma=(1/d)I_d,\,w_1=\ldots=w_s = 1,\,w_{s+1} = \ldots = w_d=0,
\label{eq:sparse-isotropic}
\end{eqnarray}
for some sparsity parameter $s \in [d]$. We consider $\ell_p$-norm attacks, for some fixed $p \in [1,\infty]$.

First observe, that for $\ell_\infty$-norm attacks of strength $r$, the adversarial risk of the generative mode $w_0$ is given by $E(w_0,r) = \sigma^2 + r^2\|w_0\|_1^2 = \sigma^2 + s^2r^2$, while for $\ell_p$-norm attacks with $p \in [1,\infty)$,  we have $E(w_0,r) = \sigma^2 + r^2\|w_0\|_q^2 = \sigma^2 + r^2 s^{2/q}$ where $q :=  \in [1,\infty]$ is the harmonic conjugate of $p$.

\begin{restatable}{thm}{isotropicEps}
Recall the notations of Theorem \ref{thm:freelunch}. Let the attack norm be an $\ell_p$ with $p \in [1,\infty]$. For any $r \ge 0$ and $\epsilon \in [0,1]$, the robustness profile is given as in Table \ref{tab:isotropic-eps}.

In particular, in the limit $d \to \infty$, we have that, if
\begin{itemize}
\item[--] $p \in [1,\infty)$, $1 \ll s \le d$, and we take $r \asymp 1/s^{1/q}$, OR
\item[--] $p=\infty$, $\sqrt{d/\log d} \ll s \le d$, and we take $r \asymp 1/s$, 
\end{itemize}
then for $\epsilon \in [0,1)$, it holds that
\begin{eqnarray}
   E_{opt}(r) = o(1),\, E_{opt}(r,\epsilon) =  \Theta((1-\epsilon)^2).
   \label{eq:dust}
\end{eqnarray}
\label{thm:isotropic-eps-different-p}
\end{restatable}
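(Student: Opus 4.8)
The strategy is to instantiate the general machinery of Theorems \ref{thm:Eopt} and \ref{thm:freelunch} in the concrete sparse-isotropic setting \eqref{eq:sparse-isotropic}, compute the proximal operator $w^{prox}(\lambda)$ and the auxiliary functions $G$ and $F$ explicitly, and then read off the robustness profile. First I would observe that with $\Sigma = (1/d)I_d$, the Mahalanobis geometry is just a rescaled Euclidean one: $\|w-w_0\|_\Sigma^2 = \|w-w_0\|_2^2/d$ and $\|w_0\|_\Sigma^2 = \|w_0\|_2^2/d = s/d$. Hence the problem defining $w^{prox}(\lambda)$ in \eqref{eq:wt} decouples coordinate-wise once we expand $\|w\|_\star^2 = \|w\|_q^2$. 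For $p=\infty$ (so $q=1$) this is a separable $\ell_1$-type proximal problem (soft-thresholding-like on each coordinate), and for $p\in[1,\infty)$ (so $q\in(1,\infty]$) one solves a simple power-law shrinkage; in both cases, by symmetry, the optimal $w^{prox}(\lambda)$ has its first $s$ coordinates equal to a common value $t=t(\lambda)\in[0,1]$ and the remaining $d-s$ equal to $0$. Plugging this back in gives $G(\lambda) = s(1-t(\lambda))^2/d$ and $\|w^{prox}(\lambda)\|_\star^2 = s^{2/q} t(\lambda)^2$ (with the convention $s^{2/q}=s^2$ when $q=\infty$, i.e. $p=1$), so that $F(r,\lambda) = s(1-t)^2/d + r^2 s^{2/q} t^2$.

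Next I would compute the free-lunch threshold $\epsilon_{FL}(r)$ and the implicit parameter $\lambda_{opt}(r,\epsilon)$. From \eqref{eq:threshold}, $\epsilon_{FL}(r)^2 = G(r^2)/\|w_0\|_\Sigma^2 = (1-t(r^2))^2$, i.e. $\epsilon_{FL}(r) = 1 - t(r^2)$. Solving the scalar equation \eqref{eq:implicit}, $G(\lambda)=\epsilon^2\|w_0\|_\Sigma^2$, amounts to forcing $1-t(\lambda)=\epsilon$, i.e. $t(\lambda_{opt}(r,\epsilon)) = 1-\epsilon$ whenever $\epsilon\le\epsilon_{FL}(r)$. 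Then by Theorem \ref{thm:freelunch}(A),
\begin{eqnarray}
E_{opt}(r,\epsilon) \asymp \sigma^2 + F(r,\lambda_{opt}(r,\epsilon)) = \sigma^2 + s(1-(1-\epsilon))^2/d + r^2 s^{2/q}(1-\epsilon)^2 = \sigma^2 + s\epsilon^2/d + r^2 s^{2/q}(1-\epsilon)^2. \nonumber
\end{eqnarray}
One also gets $E_{opt}(r) \asymp \sigma^2 + F(r,r^2) = \sigma^2 + s\epsilon_{FL}(r)^2/d + r^2 s^{2/q} t(r^2)^2$ from Theorem \ref{thm:Eopt}. These closed forms, together with an analysis of the map $\lambda\mapsto t(\lambda)$ (explicit for $\ell_1$ and $\ell_\infty$ attacks, monotone and computable in general), are exactly what populates Table \ref{tab:isotropic-eps}; this part is bookkeeping across the ranges of $r$ relative to the scales $1/s^{1/q}$ (resp. $1/s$) and $1/\sqrt{d}$.

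Finally, for the asymptotic statement \eqref{eq:dust} I would specialize to the stated scaling. Take $p\in[1,\infty)$, $1\ll s\le d$, and $r\asymp 1/s^{1/q}$ (the case $p=\infty$ is identical with $q=1$, $s^{1/q}=s$, and the extra hypothesis $\sqrt{d/\log d}\ll s$ needed to make the attack genuinely small and to control $s/d$). With $r^2 s^{2/q}\asymp 1$, the formula gives $E_{opt}(r,\epsilon)\asymp \sigma^2 + s\epsilon^2/d + (1-\epsilon)^2$; for fixed $\epsilon\in[0,1)$ and $\sigma=o(1)$ along with $s/d = o(1)$ (which holds in both bullet regimes — trivially when $s\ll d$, and when $s\asymp d$ one must check the claim degenerates as stated), the dominant term is $(1-\epsilon)^2$, giving $E_{opt}(r,\epsilon)=\Theta((1-\epsilon)^2)$. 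For $E_{opt}(r)=o(1)$ I would check that $t(r^2)\to 1$, equivalently $\epsilon_{FL}(r)\to 0$, at this attack scale (the shrinkage is negligible because the "price" $r^2 s^{2/q}\|\cdot\|_\star^2$ is only of constant order while the "gain" in standard risk is $O(1/d)$ per coordinate), so $F(r,r^2)=o(1)$ and hence $E_{opt}(r)=o(1)$. The main obstacle I anticipate is the $\ell_p$ case with $p\in(1,\infty)$: there $\|w\|_q$ with $q\in(1,\infty)$ does not give a closed-form proximal map, so establishing the symmetric structure of $w^{prox}(\lambda)$ and the monotonicity/asymptotics of $t(\lambda)$ requires a careful (though elementary) Lagrangian/KKT argument rather than an explicit formula; everything else reduces to the scalar computations above.
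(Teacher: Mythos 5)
Your route is genuinely different from the paper's. The paper does not compute $w^{prox}(\lambda)$ at all here: it checks that $r_0=\|w_0\|_\Sigma/\|w_0\|_q$ and $r_1=\|\Sigma w_0\|_p/\|w_0\|_\Sigma$ coincide (both equal $s^{1/p-1/2}/\sqrt d$), so the problem is well-conditioned ($\eta_0=1$), and then reads off $E_{opt}(r)\asymp\sigma^2+\|w_0\|_\Sigma^2\min(r/r_0,1)^2$ and $E_{opt}(r,\epsilon)\asymp\sigma^2+\|w_0\|_\Sigma^2H(r/r_0,\epsilon)^2$ from Theorems \ref{thm:well-conditioned} and \ref{thm:lousy} (whose proofs rest on projecting onto the chord $\langle w_0\rangle$, Propositions \ref{prop:keyprop}--\ref{prop:shrinky}). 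Your plan instead instantiates Theorems \ref{thm:Eopt} and \ref{thm:freelunch} by computing the prox directly. That works, and in fact more cleanly than you anticipate: once symmetry and sign-invariance force $w^{prox}(\lambda)=(t,\dots,t,0,\dots,0)$ with $t\in[0,1]$, the restricted objective is the scalar quadratic $(s/d)(t-1)^2+\lambda s^{2/q}t^2$ for \emph{every} $q\in[1,\infty]$, giving the closed form $t(\lambda)=r_0^2/(r_0^2+\lambda)$; the KKT analysis you flag as the main obstacle for $p\in(1,\infty)$ is not needed. Your identities $G(\lambda)=(s/d)(1-t(\lambda))^2$, $t(\lambda_{opt})=1-\epsilon$, and $F(r,\lambda_{opt})=(s/d)\epsilon^2+r^2s^{2/q}(1-\epsilon)^2$ are correct and reproduce the table (up to the paper's own $\Sigma=I_d$ vs.\ $(1/d)I_d$ rescaling in the $p=2$ row).

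There is, however, one concrete error in your final step. At the stated attack scale $r\asymp 1/s^{1/q}$ one has $r^2/r_0^2=d/s\ge 1$, hence $t(r^2)=1/(1+d/s)=s/(s+d)\to 0$ and $\epsilon_{FL}(r)=1-t(r^2)=d/(s+d)\to 1$ --- exactly the opposite of your claim that $t(r^2)\to 1$ and $\epsilon_{FL}(r)\to 0$. Your claim cannot be right, since $\epsilon_{FL}(r)\to 0$ would put every fixed $\epsilon\in(0,1)$ in the free-lunch regime and force $E_{opt}(r,\epsilon)\asymp E_{opt}(r)=o(1)$, contradicting the tradeoff $E_{opt}(r,\epsilon)=\Theta((1-\epsilon)^2)$ you are proving. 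The correct mechanism is the reverse of your parenthetical: keeping $w_0$ costs $r^2\|w_0\|_q^2\asymp 1$ while shrinking to zero costs only $\|w_0\|_\Sigma^2=s/d=o(1)$, so the optimal robust model is heavily shrunk and $F(r,r^2)=(s/d)\cdot\frac{d/s}{1+d/s}\asymp s/d=o(1)$, which still yields $E_{opt}(r)=o(1)$ (note this, like the paper's own proof, tacitly needs $s/d=o(1)$, which you rightly flag for the $s\asymp d$ endpoint). With that sign of the asymptotics corrected, your argument goes through.
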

\begin{table}[!h]
\begin{center}
\begin{tabular}{c|c|c|c|c} 
 \hline
 & $\epsilon_{FL}(r)$  & $\lambda_{opt}(r,\epsilon)$ & $E_{opt}(r)$ & $E_{opt}(r,\epsilon)$\\
\hline
$p=2$ & $r^2/(1+r^2)$ & $\epsilon/(1-\epsilon)$ & $\sigma^2 + (s/d)\min(r\sqrt{d},1)^2$ & $\sigma^2 + (s/d)H(r\sqrt d,\epsilon)^2$\\
$p \ne 2$ &  -- & -- & $\sigma^2 + (s/d)\min(r/r_0(p),1)^2$ & $\sigma^2 + (s/d)H(r/r_0(p),\epsilon)^2$\\
\hline
\end{tabular}
\end{center}
    \caption{Details of Theorem \ref{thm:isotropic-eps-different-p}. Here, $r_0(p) = s^{1/p-1/2}/\sqrt d$. In particular, $r_0(2) = 1/\sqrt d$, $r_0(\infty) = 1/\sqrt{sd}$. The function $H$ is defined by $H(r,\epsilon) = r$ if $r \le 1$; else $H(r,\epsilon) = \epsilon + (1-\epsilon)r$.}
\label{tab:isotropic-eps}
\end{table}

\subsection{Polynomial Spectral Decay}
\label{subsec:polydecay}
Let $\Sigma = \sum_{k \ge 1} \lambda_k \phi_k\phi_k^\top$ be the spectral decomposition of the feature covariance matrix $\Sigma$ and $c_k = \phi_k^\top w_0$ be the $k$-th alignment coefficient of the generative model $w_0$, so that $w_0 = \sum_{k \ge 1} c_k \phi_k$. 
We place ourselves in the high-dimensional setting ($d \to \infty$), and assume spectral information $(\lambda_k,c_k)_{k \ge 1}$ is given by the following polynomial (aka power-law) scalings
\begin{eqnarray}
\lambda_k \asymp k^{-\beta},\, \text{ and }c_k^2 \asymp k^{-\delta}\text{ for all }k,
\label{eq:polyregime}
\end{eqnarray}
where $\beta > 1$ and $\delta \ge 0$ are constants. This model is well-studied in the literature \cite{caponnetto2007,justinterpolate} because (1) it usually leads to tractable analysis, and (2) it can be used to approximate the the macroscopic structure of certain neural networks in the kernel regime (i.e. wide neural networks) \cite{Bahri2021,Cui2022,Wei2022MoreThanAToy}.
In this setting, observe that $\mbox{tr}(\Sigma) \asymp \sum_k k^{-\beta} = \Theta(1)$, $\|w_0\|_\Sigma^2  \asymp \sum_k k^{-\beta-\delta} = \Theta(1)$, while
\begin{eqnarray}
\begin{split}
\|w_0\|_2^2 &\asymp \sum_k k^{-\delta}
\asymp \begin{cases}
d^{1-\delta},&\mbox{ if }0 \le \delta < 1,\\
\log d,&\mbox{ if }\delta= 1,\\
1,&\mbox{ if }\delta > 1.
\end{cases}
\end{split}
\end{eqnarray}
Also note that for Euclidean-norm attacks, we have $R(\Sigma) \asymp \sqrt{\mathrm{tr}(\Sigma)} = \Theta(1)$.

The following result is one of our main contributions.
\begin{restatable}{thm}{polydecay}
For Euclidean-norm attacks of small strength $r  > 
 0$, the conclusions of Theorem \ref{thm:freelunch} prevail, and the quantities $\epsilon_{FL}(r)$, $\lambda_{opt}(r,\epsilon)$, $E_{opt}(r)$, and $E_{opt}(r,\epsilon)$ are as given in Table \ref{tab:poly}. Thus, as regards robustness, $\delta=1$ is a critical value for the source exponent in \eqref{eq:polyregime}: For $\delta \in [0,1]$, accuracy (controlled by the excess risk tolence $\epsilon$) has to be traded for robustness, while for $\delta \in (1,\infty)$, the generative model $w_0$ is so smooth that robustness and accuracy are aligned.


Consider the particular regime where $0 \le \delta \le 1$. For small $\sigma^2 \ge 0$, $\epsilon  > 0$, and $r=r(\epsilon)$ given by
\begin{eqnarray}
r = \begin{cases}
\epsilon^\phi,&\mbox{ if }0 \le \delta < 1,\\ \sqrt{1/\log(1/\epsilon)},&\mbox{ if }\delta=1
\end{cases}
\end{eqnarray}
with $\theta := (1-\delta)/\beta \ge 0$ and $\phi := \theta/(1-\theta) \ge 0$, it holds that
\begin{eqnarray}
    E_{opt}(r) = o(1),\, E_{opt}(r,\epsilon) = \Theta(1).
\end{eqnarray}
That is, even though robustness to imperceptible attacks is achievable in this setting, accurate models (especially the generative model $w_0$ itself) are non-robust.
\label{thm:polydecay}
\end{restatable}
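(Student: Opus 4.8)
The plan is to instantiate the general machinery of Theorem~\ref{thm:freelunch} in the power-law setting \eqref{eq:polyregime} and then read off the asymptotics. The first step is to understand the proximal map $w^{prox}(\lambda)$ for Euclidean-norm attacks, which by the remark after \eqref{eq:wt} is exactly ridge regression on the population problem: $w^{prox}(\lambda) = (\Sigma + \lambda I_d)^{-1}\Sigma w_0$, so in the eigenbasis the $k$-th coordinate is $c_k\lambda_k/(\lambda_k+\lambda)$. Hence I would compute
\begin{align}
G(\lambda) &= \sum_k \lambda_k c_k^2\Bigl(\tfrac{\lambda}{\lambda_k+\lambda}\Bigr)^2 \asymp \sum_k k^{-\beta-\delta}\Bigl(\tfrac{\lambda}{k^{-\beta}+\lambda}\Bigr)^2,\\
\|w^{prox}(\lambda)\|_2^2 &= \sum_k c_k^2\Bigl(\tfrac{\lambda_k}{\lambda_k+\lambda}\Bigr)^2 \asymp \sum_k k^{-\delta}\Bigl(\tfrac{k^{-\beta}}{k^{-\beta}+\lambda}\Bigr)^2.
\end{align}
Both sums are standard power-law sums controlled by the ``effective cutoff'' $k_\star \asymp \lambda^{-1/\beta}$ (the index where $k^{-\beta}\asymp\lambda$): split each sum at $k_\star$, bound $\lambda/(k^{-\beta}+\lambda)$ by $1$ for $k\gtrsim k_\star$ and by $\lambda k^{\beta}$ for $k\lesssim k_\star$, and vice versa for the other factor. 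This yields $G(\lambda)\asymp \lambda^{(\delta+\beta-1)/\beta}$ when $\delta<1$ (so that the exponent is $<1$ and the cutoff term dominates), with a logarithmic correction at $\delta=1$; and $\|w^{prox}(\lambda)\|_2^2\asymp \lambda^{-(1-\delta)/\beta}=\lambda^{-\theta}$ for $\delta<1$, $\asymp\log(1/\lambda)$ for $\delta=1$, and $\asymp 1$ for $\delta>1$.

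With these in hand, the remaining quantities follow mechanically. Since $\|w_0\|_\Sigma^2\asymp 1$, the free-lunch threshold is $\epsilon_{FL}(r)=\sqrt{G(r^2)}/\|w_0\|_\Sigma\asymp G(r^2)^{1/2}\asymp r^{(\delta+\beta-1)/\beta}$ (for $\delta<1$), which $\to 0$ as $r\to 0$ — this is exactly why no free lunch is available for fixed $\epsilon$ and small $r$. Solving the implicit equation $G(\lambda)=\epsilon^2\|w_0\|_\Sigma^2$ of Lemma~\ref{lm:implicit} gives $\lambda_{opt}(r,\epsilon)\asymp\epsilon^{2\beta/(\delta+\beta-1)}$ when this is $\le r^2$, i.e. precisely when $\epsilon\le\epsilon_{FL}(r)$, and $\lambda_{opt}=r^2$ otherwise. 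Plugging into $F(r,\lambda)=G(\lambda)+r^2\|w^{prox}(\lambda)\|_2^2$ and invoking Theorem~\ref{thm:Eopt} and Theorem~\ref{thm:freelunch}(A), one gets $E_{opt}(r)\asymp\sigma^2+F(r,r^2)\asymp\sigma^2+r^2\cdot r^{-2\theta}=\sigma^2+r^{2(1-\theta)}$ (for $\delta<1$), and $E_{opt}(r,\epsilon)\asymp\sigma^2+\epsilon^2+r^2\lambda_{opt}^{-\theta}$; these populate Table~\ref{tab:poly}. For the final display: choosing $r=\epsilon^\phi$ with $\phi=\theta/(1-\theta)$ makes $r^{2(1-\theta)}=\epsilon^{2\phi(1-\theta)}=\epsilon^{2\theta}\to 0$, so with $\sigma^2$ small, $E_{opt}(r)=o(1)$; meanwhile for this $r$ one checks $\epsilon<\epsilon_{FL}(r)$ fails to help —  one is on the tradeoff branch — and the dominant term in $E_{opt}(r,\epsilon)$ is $\Theta(\epsilon^2)=\Theta(1)$ for fixed $\epsilon$. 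The $\delta=1$ case is identical with $\lambda^{-\theta}$ replaced by $\log(1/\lambda)$ and $r=1/\sqrt{\log(1/\epsilon)}$ chosen so that $r^2\log(1/r^2)=o(1)$ while $\epsilon^2=\Theta(1)$. The case $\delta>1$ is the easy one: $\|w_0\|_2^2\asymp 1$, so $w_0$ itself has $E(w_0,r)=\sigma^2+r^2\|w_0\|_2^2\asymp\sigma^2+r^2=o(1)$, giving $E_{opt}(r,\epsilon)\asymp E_{opt}(r)$ for every $\epsilon\ge 0$ — robustness and accuracy align.

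The main obstacle is the careful evaluation of the two power-law sums $G(\lambda)$ and $\|w^{prox}(\lambda)\|_2^2$ uniformly in the small parameter $\lambda$, in particular getting the exponent $(\delta+\beta-1)/\beta$ right and correctly identifying that it lies in $(0,1)$ precisely when $\delta\in[0,1)$ — this is what makes the cutoff (rather than head or tail) term dominate $G$ and is the source of the $\delta=1$ phase transition. A secondary technical point is verifying that the regime $r=r(\epsilon)$ we select genuinely lands in the ``tradeoff'' region $\epsilon\ge\epsilon_{FL}(r)$ or its complement as claimed, so that the right branch of $\lambda_{opt}$ is used; this amounts to comparing the exponents $\phi$ and $\theta$ and is a short computation. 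Everything else — the closed form for the ridge proximal map, monotonicity of $G$, and the reduction of $E_{opt}(r,\epsilon)$ to $\sigma^2+F(r,\lambda_{opt})$ — is supplied verbatim by the earlier results and needs only to be quoted.
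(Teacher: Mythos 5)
Your overall route is the same as the paper's: use the closed form $w^{prox}(\lambda)=(\Sigma+\lambda I_d)^{-1}\Sigma w_0$, evaluate $G(\lambda)$ and $\|w^{prox}(\lambda)\|_2^2$ as power-law sums split at the cutoff $k_\star\asymp\lambda^{-1/\beta}$ (this is exactly Lemma~\ref{lm:fracture} and Corollary~\ref{cor:polydecay} in the paper), and then feed the resulting $\epsilon_{FL}$, $\lambda_{opt}$, and $F(r,\lambda)$ into Theorems~\ref{thm:Eopt} and~\ref{thm:freelunch}. Two points, one small and one substantive. The small one: the logarithmic correction to $G$ occurs at $\delta=\beta+1$ (where $m=n-1/\beta$ with $n=1+\delta/\beta$, $m=2$), not at $\delta=1$; the log at $\delta=1$ lives in $\|w^{prox}(\lambda)\|_2^2$, which you do state correctly, so this slip does not propagate.

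The substantive gap is in the last step, which is the actual content of the theorem. You write that for the coupling $r=\epsilon^\phi$ the dominant term in $E_{opt}(r,\epsilon)$ is ``$\Theta(\epsilon^2)=\Theta(1)$ for fixed $\epsilon$.'' This is inconsistent with your own claim that $E_{opt}(r)\asymp\sigma^2+\epsilon^{2\theta}=o(1)$: that requires $\epsilon\to0$ (with $\theta>0$), and then $\epsilon^2=o(1)$ as well, so $\epsilon^2$ cannot be the $\Theta(1)$ term. The correct mechanism, and the whole point of the phase transition, is that under $r=\epsilon^\phi$ and $\lambda_{opt}(r,\epsilon)\asymp\epsilon^{2/(1-\theta)}$ the residual penalty term satisfies
\begin{equation}
r^2\,\lambda_{opt}(r,\epsilon)^{-\theta}\asymp \epsilon^{2\phi}\cdot\epsilon^{-2\theta/(1-\theta)}=\epsilon^{2\phi}\cdot\epsilon^{-2\phi}=1,
\end{equation}
so $E_{opt}(r,\epsilon)\asymp\sigma^2+\epsilon^2+r^2\lambda_{opt}^{-\theta}=\Theta(1)$ even as $\epsilon\to0$, while the unconstrained optimum $E_{opt}(r)\asymp\sigma^2+r^{2(1-\theta)}=\sigma^2+\epsilon^{2\theta}=o(1)$. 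In other words the $\Theta(1)$ comes from the dual-norm part of the risk of every $\epsilon$-accurate model, not from the excess standard risk. You should also verify explicitly that $\epsilon\le\epsilon_{FL}(r)=\epsilon^{\theta(1-\theta)\cdot(\text{const})}$ for the chosen coupling so that the tradeoff branch of $\lambda_{opt}$ applies (you gesture at this but the sentence as written is garbled). The analogous balancing $r^2\log(1/\lambda_{opt})\asymp\Theta(1)$ is what is needed at $\delta=1$ with $r=1/\sqrt{\log(1/\epsilon)}$.
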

\begin{table}[!h]
\begin{center}
\begin{tabular}{c|c|c|c|c|c} 
 \hline
Regime & $\epsilon_{FL}(r)$  & $\lambda_{opt}(r,\epsilon)$ & $E_{opt}(r)$ & $E_{opt}(r,\epsilon)$ & FL ?\\
\hline
$0 \le \delta < 1$ & $r^{2(1-\theta)}$ & $\epsilon^{2/(1-\theta)}$ & $\sigma^2 + r^{2(1-\theta)}$ & $\sigma^2 + \epsilon^2 + r^2\epsilon^{-2\phi}$ & No \\
$\delta=1$ & $r^4\log(1/r)$ & $e^{W(-\Theta(\epsilon^2))/2}$ & $\sigma^2 + r^2\log(1/r)$ & $\sigma^2 + \epsilon^2 + r^2\log(1/\epsilon)$ & No \\
$\delta >1$ & $r^4$ & $\epsilon$ & $\sigma^2 + r^2$ & $\sigma^2 + \epsilon^2 + r^2$ & Yes!\\
\hline
\end{tabular}
\end{center}
\caption{Details for Theorem \ref{thm:polydecay}. Here, $W$ is an appropriate branch of the Lambert function. Note that except for the first column, all the entries in the table are given only within multiplicative absolute constants.
The last column records whether there is free lunch (FL), wherein robustness is achievable without sacrificing accuracy.
}
\label{tab:poly}
\end{table}

Thus, there is a phase-transition at $\delta=1$ whereby accurate models (including generative model $w_0$, i.e. the case $\epsilon=0$) switch from non-robust to robust. This is also empirically confirmed in Section \ref{sec:experiments}.

\subsection{A Non-Euclidean Setting: $\ell_\infty$-Norm Attacks on Isotropic Features}
Let us now present an example of non-Euclidean scenario where generative model $w_0$ fails to be robust to genuinely small adversarial perturbations.
Still in high dimensions ($d \to \infty$), consider the setting where the feature covariance matrix is $\Sigma=I_d$ while the coefficients of the generative model have the following "harmonic" distribution $w_0$
\begin{eqnarray}
(w_0)_k=1/k,\text{ for all }k \in [d].
\label{eq:power2}
\end{eqnarray}
For $\ell_\infty$-norm attacks, note that $R(\Sigma) \asymp \sqrt{\log(\mathrm{tr}(\Sigma))} \asymp \sqrt{\log d}$. We have the following.
\begin{restatable}
{thm}{peetre}
\label{thm:peetre}
In the limit $d \to \infty$, it holds for $\ell_\infty$-norm attacks of strength $r$ with $1/\sqrt d \le r = o(1)$ that $E_{opt}(r) \asymp \sigma^2 + r^2 \log (1/r)^2$.

In particular, for $r \asymp 1/\log d$ and $\sigma^2=o(1)$, it holds that
\begin{align}
E(w_0,r) = \Theta(1),\, E_{opt}(r) = o(1).
\end{align}
That is, even though robustness is achievable, the generative model $w_0$ is itself non-robust.
\end{restatable}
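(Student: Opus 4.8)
The plan is to specialize Theorem \ref{thm:Eopt} (with $\lambda = r^2$) to the present setting, so that $E_{opt}(r) \asymp \sigma^2 + F(r,r^2) = \sigma^2 + G(r^2) + r^2\|w^{prox}(r^2)\|_\star^2$, and then estimate the two non-trivial terms $G(r^2)$ and $r^2\|w^{prox}(r^2)\|_\infty^{\,?}$... wait, here $\|\cdot\| = \ell_\infty$, so $\|\cdot\|_\star = \|\cdot\|_1$, and since $\Sigma = I_d$ the proximal operator in \eqref{eq:wt} becomes $w^{prox}(\lambda) = \arg\min_{w} \|w - w_0\|_2^2 + \lambda\|w\|_1^2$. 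The first step is therefore to understand this minimizer: because $\|\cdot\|_1^2$ is not separable, I would pass through its variational / dual description. Concretely, $\lambda\|w\|_1^2 = \min_{t>0}(\lambda t\|w\|_1 + \lambda\|w\|_1^2/(4t))$-type tricks reduce it to a soft-thresholding problem with an effective threshold $\tau = \tau(\lambda)$ determined self-consistently by $\tau \asymp \lambda\|w^{prox}\|_1$; equivalently, $w^{prox}(\lambda)_k = \operatorname{sign}((w_0)_k)(|(w_0)_k| - \tau)_+$ for a scalar $\tau$ tied to $\lambda$. Since $(w_0)_k = 1/k$, this keeps coordinates $k \lesssim 1/\tau$ (shrunk by $\tau$) and kills the rest.

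The second step is to compute the two quantities as functions of the threshold $\tau$ and then back out the dependence on $r$ (with $\lambda = r^2$). With $m := \lfloor 1/\tau\rfloor$: $G(r^2) = \|w^{prox} - w_0\|_2^2 = \sum_{k \le m}\tau^2 + \sum_{k > m} 1/k^2 \asymp m\tau^2 + 1/m \asymp \tau$ (both terms are $\Theta(\tau)$ since $m\tau \asymp 1$); and $\|w^{prox}\|_1 = \sum_{k\le m}(1/k - \tau) \asymp \log m - m\tau \asymp \log(1/\tau)$. The self-consistency relation $\tau \asymp \lambda\|w^{prox}\|_1 = r^2\log(1/\tau)$ then pins down $\tau \asymp r^2\log(1/r)$ (to leading order $\log(1/\tau)\asymp \log(1/r)$ when $r = o(1)$). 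Plugging back, $r^2\|w^{prox}\|_1^2 \asymp r^2\log(1/r)^2$ and $G(r^2) \asymp \tau \asymp r^2\log(1/r)$, the latter being dominated by the former; hence $F(r,r^2) \asymp r^2\log(1/r)^2$, giving $E_{opt}(r) \asymp \sigma^2 + r^2\log(1/r)^2$ for $r = o(1)$, valid once $r \gtrsim 1/\sqrt d$ so that the truncation level $m \lesssim 1/\tau$ fits inside $[d]$ (this is where the hypothesis $1/\sqrt d \le r$ enters). For the particular claim, take $r \asymp 1/\log d$: then $r^2\log(1/r)^2 \asymp (\log\log d / \log d)^2 = o(1)$, so $E_{opt}(r) = o(1)$ once $\sigma^2 = o(1)$; whereas $E(w_0,r) = \sigma^2 + r^2\|w_0\|_1^2 \asymp r^2(\log d)^2 \asymp 1 = \Theta(1)$ by Lemma \ref{lm:analytic} and $\|w_0\|_1 = \sum_{k\le d}1/k \asymp \log d$.

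I expect the main obstacle to be the first step — rigorously handling the non-separable $\ell_1^2$ penalty and establishing the soft-thresholding characterization of $w^{prox}(\lambda)$ together with the self-consistent relation between the threshold $\tau$ and $\lambda$, since unlike the Mahalanobis case there is no closed form. One clean route is to use the identity $\|w\|_1^2 = \min\{\sum_k w_k^2/\theta_k : \theta_k \ge 0,\ \sum_k\theta_k \le 1\}$ (the variational form of the $\ell_1$ norm squared), which turns the problem into a jointly-convex problem in $(w,\theta)$; optimizing over $w$ first gives $w_k = w_0{}_k\theta_k/(\theta_k + \lambda\|w\|_1)$ and reduces everything to a one-dimensional fixed point for the scalar $c := \|w^{prox}\|_1$, whose monotonicity in $\lambda$ makes the order-of-magnitude estimates above routine. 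Everything downstream is elementary harmonic-sum asymptotics ($\sum_{k\le m}1/k \asymp \log m$, $\sum_{k>m}1/k^2 \asymp 1/m$), and the constants are irrelevant since Theorem \ref{thm:Eopt} only controls $E_{opt}(r)$ up to absolute multiplicative constants. An alternative, if one wants to avoid the prox characterization entirely, is to prove the upper bound on $E_{opt}(r)$ by exhibiting the explicit truncated-and-shrunk candidate $w$ with $w_k = (1/k - \tau)_+$ for $\tau \asymp r^2\log(1/r)$ and bounding $\overline E(w,r)$ directly via Lemma \ref{lm:proxy}, and the matching lower bound by the general bound $E_{opt}(r) \ge \min_w \overline E(w,r)$ combined with a short argument that any $w$ with small $\|w - w_0\|_2$ must have $\|w\|_1 \gtrsim \log(1/\|w-w_0\|_2)$; I would present whichever of these is shorter in the final writeup.
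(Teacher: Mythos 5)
Your proposal is correct, and it reaches the stated asymptotics by a genuinely different route from the paper. The paper works with the additive proxy $\widetilde E$ from Lemma \ref{lm:proxy}, so that $E_{opt}(r) \asymp \sigma^2 + \bigl(\inf_w \|w-w_0\|_2 + r\|w\|_1\bigr)^2$, recognizes this infimum as the Peetre-type $K$-functional $\kappa_a(t) = \inf_u t\|u-a\|_2 + \|u\|_1$ of the normalized harmonic vector $a=(1/(kH_d))_k$ at $t=1/r$, and then simply cites the known asymptotic $\kappa_a(t) = (2\log t + O(1))/\log d$ for $1\ll t\le\sqrt d$ from \cite{CanonneDistributionTesting2019} (this is exactly where the hypothesis $1/\sqrt d \le r = o(1)$ enters). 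You instead go through Theorem \ref{thm:Eopt} and the quadratic proxy $F(r,r^2)=G(r^2)+r^2\|w^{prox}(r^2)\|_1^2$, characterizing $w^{prox}(\lambda)$ as soft-thresholding of $w_0$ at a self-consistent level $\tau=\lambda\|w^{prox}(\lambda)\|_1$ --- this characterization is not something you need to invent: it is precisely equation \eqref{eq:st} in Appendix \ref{sec:structure} of the paper (first-order optimality with $\partial\|\cdot\|_1^2 = 2\|\cdot\|_1\,\partial\|\cdot\|_1$ gives it exactly, with no $\asymp$ needed) --- and then solving $\tau\asymp r^2\log(1/\tau)$ to get $\tau\asymp r^2\log(1/r)$, $G(r^2)\asymp\tau$, and $r^2\|w^{prox}\|_1^2\asymp r^2\log(1/r)^2$ dominating. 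Your harmonic-sum estimates and the fixed-point asymptotics check out, and the second part of the theorem ($E(w_0,r)=\Theta(1)$ vs. $E_{opt}(r)=o(1)$ at $r\asymp 1/\log d$) is computed identically in both arguments. What your route buys is self-containedness (no external citation for the hard estimate) plus explicit information about the optimal model (which coordinates are zeroed out and by how much); what the paper's route buys is brevity, at the cost of importing the $K$-functional asymptotic as a black box.
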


\section{Empirical Verification}
\label{sec:experiments}
We provide a series of simple experiments on simulated data to empirically verify our theoretical results. Additional experiments are provided in the suppmat / appendix.

\textbf{Experiment 1 (Verification of Theorem \ref{thm:isotropic-eps-different-p}).}
For this experiment, we fix the input dimension $d=400$, while the covariance matrix $\Sigma$ and generative model $w_0$ are as in \eqref{eq:sparse-isotropic} for different values of the sparsity parameter $s \in \{10, 20, d=400\}$. For different values of sample size $n$ from $d$ to $10^4$, we generate (5 runs) an iid dataset $\mathcal D_n = \{(x_1,y_1),\ldots,(x_n,y_n)\}$ and construct a ordinary least-squares (OLS) estimate $\widehat w_n$ for $w_0$. Note that this estimator is known to be consistent in the regime considered. Next, we compute the excess standard risk of $\widehat w_n$, namely $\epsilon = \epsilon_n := (\|\widehat w_n - w_0\|_\Sigma^2 - \sigma^2)/\|w_0\|_\Sigma^2$. We consider $\ell_p$-norm attacks with $p \in \{2,\infty\}$. The attack strength $r$ is set as in the second part of Theorem \ref{thm:isotropic-eps-different-p}. We compute the adversarial risk of $\widehat w_n$, alongside the adversarial risk of the generative model $w_0$, via the formula given in Lemma \ref{lm:analytic}.
The results for the experiment are shown in Figure \ref{fig:isotropic-Eopteps}. From the figure, we clearly see that the predictions of Theorem \ref{thm:isotropic-eps-different-p} are confirmed.

\begin{figure}[!h]
    \centering
     \begin{subfigure}{(a) Euclidean-norm (i.e. $p=2$) attack. Here we take $r  = 1 / \sqrt s$.}{}
         \centering
    \includegraphics[width=1\textwidth]{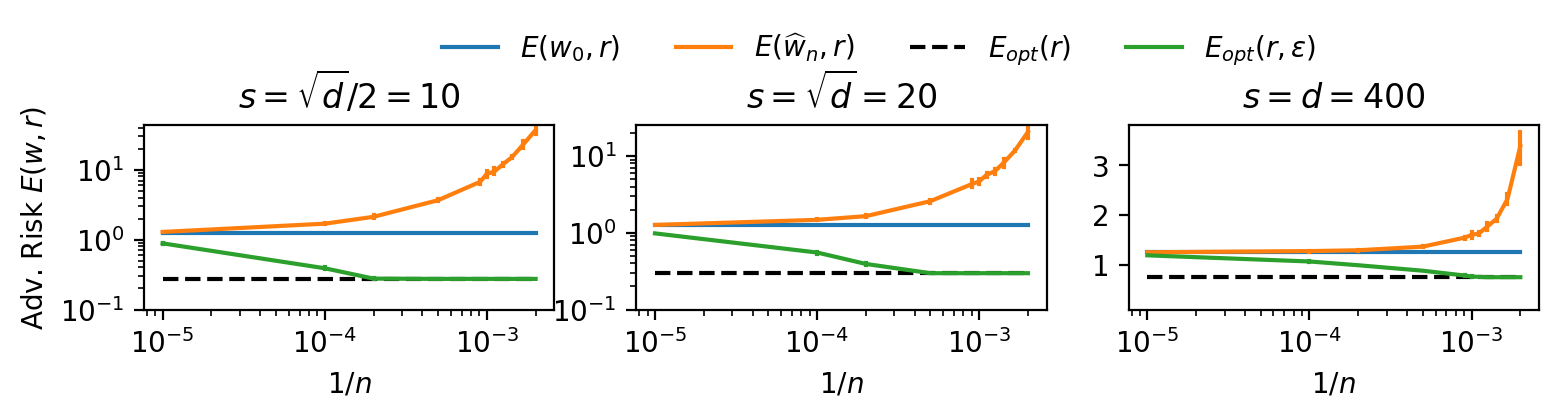}
    \end{subfigure}
     \begin{subfigure}{(b) $\ell_\infty$-norm attack. Here we take $r  = 1 / s$.}{}
    \includegraphics[width=1\textwidth]{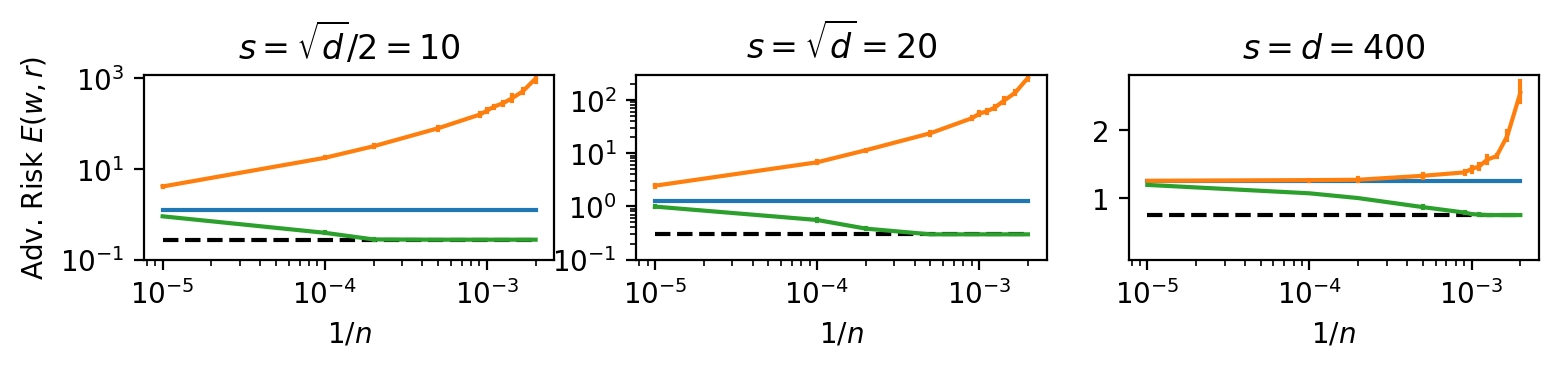}
\end{subfigure}
\vspace{-.6cm}
    \caption{ (\textbf{Experiment 1}) Empirical verification of Theorem \ref{thm:isotropic-eps-different-p}, for different levels of sparsity $s$ of the generative model $w_0$. Here the input dimension is set to $d=400$ and $n$ is the sample size. The theoretical curves $E_{opt}(r)$ and $E_{opt}(r,\epsilon)$ are as given by the theorem. Error bars correspond to 5 different runs of computing $\widehat w_n$ (OLS). Notice the conformity with the theorem's predictions.}
    \label{fig:isotropic-Eopteps}
\end{figure}

\textbf{Experiment 2 (Verification of Theorem \ref{thm:polydecay}).}
The setup for this experiment is as in \textbf{Experiment 1}, but with input dimension $d=10^4$, feature covariance matrix $\Sigma$ and generative model $w_0$ given as in \eqref{eq:polyregime}. We consider Euclidean-norm attacks with strength $r$ as given in Theorem \ref{thm:polydecay}.

The results for the experiment are shown in Figure \ref{fig:poly-decay-eps}. As predicted by the theorem, we see that for $\delta \in [0, 1]$, the ground-truth model $w_0$ is non-robust. Furthermore, for $\delta > 1$, $w_0$ becomes robust to small adversarial perturbations (blue curve and broken black line coincide) as predicted.
As $n \to \infty$ (i.e. $1/n \to 0$), the adversarial risk $E(\widehat w_n,r)$ of the estimator $\widehat w_n$ approaches that of the ground-truth model, namely $E(w_0,r)$; we see from the figure that  is optimal in the smooth regime where $\delta > 1$, but catastrophic in the non-smooth regime ($\delta \in [0,1]$), in conformity the theorem.

We also consider the case $\epsilon=0$, corresponding the ground-truth model, and vary the attack strength $r$. The results are shown in Figure \ref{fig:poly-Eopt}. Here again, the predictions of Theorem \ref{thm:polydecay} are confirmed.


\begin{figure}[!h]
    \centering
    \includegraphics[width=1\textwidth]{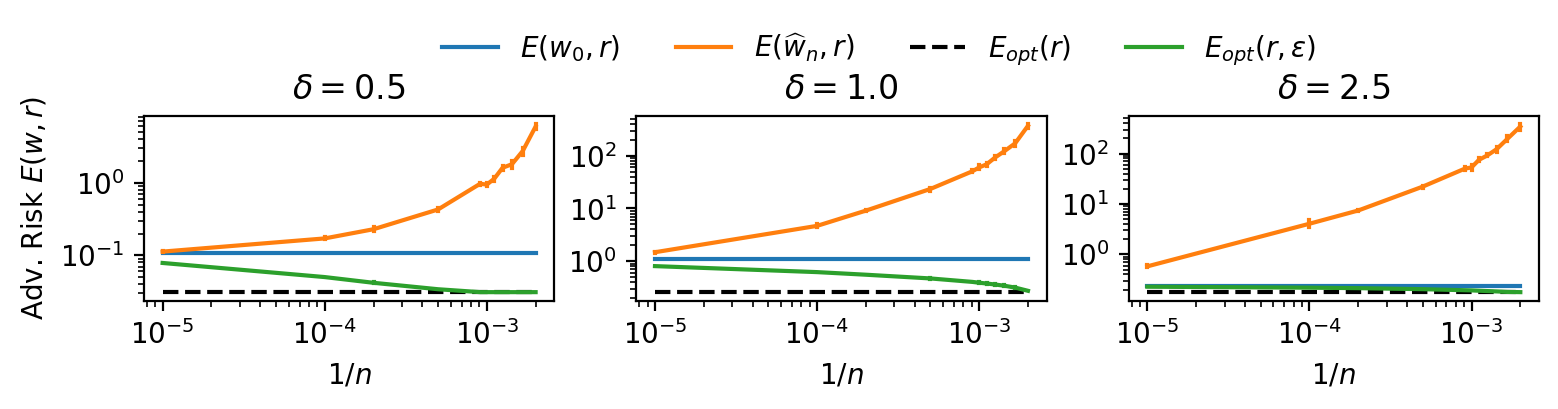}
\vspace{-.6cm}
    \caption{(\textbf{Experiment 2}) Empirical verification of Theorem \ref{thm:polydecay}. Here $\beta=2$ and $d=10^4$, while $n$ is the sample size. Error bars correspond to 5 different runs of computing the OLS estimator $\widehat w_n$. The curves $E_{opt}(r)$ and $E_{opt}(r,\epsilon)$ are as given by the theorem. Notice the conformity of the results with the theorem.
    }
    \label{fig:poly-decay-eps}
\end{figure}
\begin{figure}[!hbt]
    \centering
    \includegraphics[width=1\textwidth]{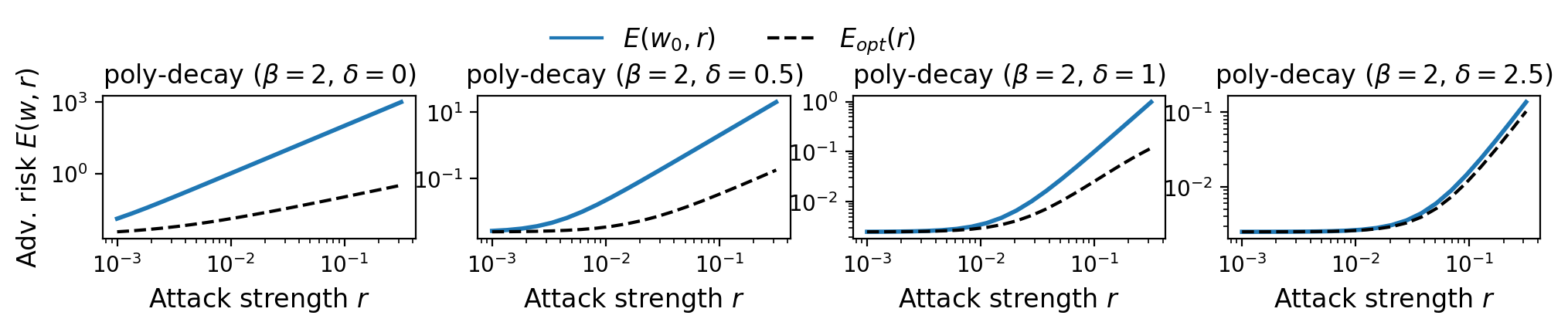}
    \caption{\textbf{(Experiment 2)} Empirical validation of Theorem \ref{thm:polydecay} for the case when $\epsilon=0$.  Notice the conformity of the results with the theorem.}
    \label{fig:poly-Eopt}
\end{figure}

\textbf{Experiment 3: Non-Robustness of Generative Model $w_0$.}
This experiment is meant to verify Theorem \ref{thm:peetre} and Theorem \ref{thm:isotropic-eps-different-p} for the case $\epsilon=0$ (corresponding to the generative model $w_0$). We fix the input dimension to $d=400$ as \textbf{Experiment 1}, and consider 3 different scenarios for the attacker's norm $\|\cdot\|$, the generative model $w_0$, and the feature covariance matrix $\Sigma$.
\begin{itemize}
\item \textbf{Experiment 3(a)}: Here, the feature covariance matrix is $\Sigma=(1/d)I_d$ and the generative model is $w_0 = 1_d = (1,\ldots,1)$. The attacker's norm is Euclidean, i.e. $p=2$ in Theorem \ref{thm:isotropic-eps-different-p}
\item \textbf{Experiment 3(b)}: Here, $\Sigma=(1/d)I_d$ and  $w_0$ is as in \eqref{eq:sparse-isotropic} with $s=20$. The attacker's norm is $\ell_\infty$, i.e. $p=\infty$ in Theorem \ref{thm:isotropic-eps-different-p}.
\item \textbf{Experiment 3(c)}: Here, $\Sigma$ and $w_0$ are as in Theorem \ref{thm:peetre} and the attacker's norm is $\ell_\infty$.
\end{itemize}
For different values of attack strength $r$, we compute the adversarial risk $E(w_0,r)$ of the generative model $w_0$ and compare it to the optimum adversarial risk $E_{opt}(r)$. The results of the experiment are shown in Figure \ref{fig:simple}. We see that the predictions of Theorem \ref{thm:isotropic-eps-different-p} (\textbf{Left} and \textbf{Middle} plots) and Theorem \ref{thm:peetre} (\textbf{Right} plot) are perfectly confirmed.

\begin{figure}[!hbt]
    \centering
    \includegraphics[width=1\textwidth]{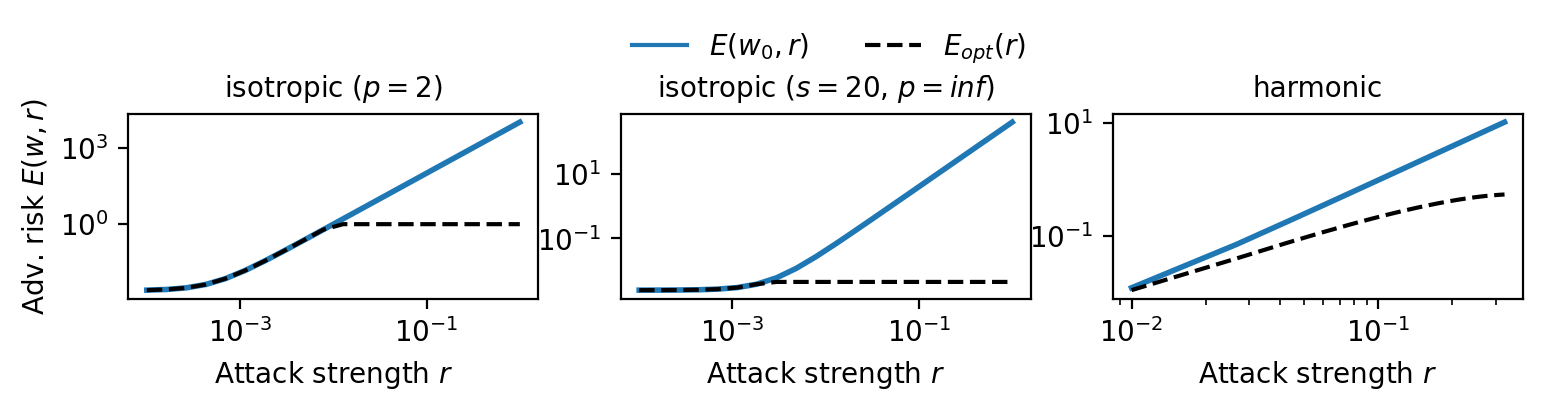}
    \vspace{-.5cm}
    \caption{Failure of ground-truth model  $w_0$ to be robust to small adversarial perturbations. From left to right, the plots show results for \textbf{Experiment 3(a--c)}. Here, we see perfect confirmation of Theorem \ref{thm:isotropic-eps-different-p} for $\epsilon=0$ and $p \in \{2,\infty\}$ (\textbf{Left} and \textbf{Middle} plots), and Theorem \ref{thm:peetre} (\textbf{Right} plot).}
    \label{fig:simple}
\end{figure}

\section{Concluding Remarks}
In this work, we have considered the problem of adversarial robustness in linear regression and have obtained precise quantitative estimates that allow us to uncover fundamental tradeoffs between adversarial robustness and standard accuracy in different regimes. Unlike previous works, our results apply to arbitrary covariance structures and attack norms.

\textbf{Possible Extensions.}
An interesting future direction of our work will be to extend the scope to neural networks in linearized regimes like random features. As in \cite{hassani2022curse}, such an analysis would rely on a careful application of random matrix theory, to reduce things to the linear case.

On another front, the analysis in our work applies to any linear model achieving a certain level of standard accuracy. In particular, our analysis is agnostic to whether the model is learned from data or not, and on how it is learned (gradient-descent, adversarial training, etc.). Though we have not studied these statistical regimes, our techniques can be adapted to this scenario to understand possible additional accuracy / robustness tradeoffs due to over-fitting, for example.










\bibliography{literature}
\bibliographystyle{apalike}



\clearpage

\addcontentsline{toc}{section}{Appendix} 
\parttoc 

\appendix
\section{Detailed Overview of Related Works}
\label{sec:related}
\paragraph{Adversarial Examples From High-Dimensional Geometry.} In the setting of classification, \cite{tsipras18} considers a specific data distribution where good accuracy implies poor robustness.
\citep{goldstein,saeed2018, gilmerspheres18,dohmatob19} show that for high-dimensional data distributions which have concentration property (e.g., multivariate Gaussians, distributions satisfying log-Sobolev inequalities, etc.), an imperfect classifier will admit adversarial examples. \cite{dobriban2020provable} studies tradeoffs in Gaussian mixture classification problems, highlighting the impact of class imbalance. On the other hand, \cite{closerlook2020} observed empirically that natural images are well-separated, and so locally-lipschitz classifiers should not suffer any kind of test error vs robustness tradeoff.

\paragraph{The Impact of Over-Parametrization.} \cite{RuiqiGao2019,lor,bubeck2021universal} show that over-parameterization may be necessary for robust interpolation in the presence of noise. In contrast, our paper considers a structured problem with noiseless signal and infinite training data, where the network width $m$ and the input dimension $d$ tend to infinity proportionately. In this under-complete asymptotic setting, our results show a systematic and precise tradeoff between approximation (test error) and robustness in different learning regimes. Thus, our work nuances the picture presented by previous works by exhibiting a nontrivial interplay between robustness and test error, which persists even in the case of infinite training data where the resulting model isn't affected by label noise. \cite{dohmatob2021fundamental,hassani2022curse} study the tradeoffs between interpolation, predictive performance (test error), and robustness for finite-width over-parameterized networks in kernel regimes with noisy linear target functions. In contrast, we consider structured quadratic target functions and compare different learning settings, including SGD optimization in a non-kernel regime, as well as lazy/linearized models.

\paragraph{Precise Analysis of Robustness in Linear Regression.}
\cite{Xing2021} studied Euclidean-norm attacks with general covariance matrices. They showed that the optimal robust model is a ridge regression whose ridge parameter depends implicitly on the strength of the attacks. \cite{Javanmard2020PreciseTI} studied tradeoffs between ordinary and adversarial risk in linear regression, and computed exact Pareto optimal curves in the case of Euclidean-norm attacks on isotropic features. Their results show a tradeoff between ordinary and adversarial risk for adversarial training. \cite{javanmard2021adversarial} also revisited this tradeoff for latent models and show that this tradeoff is mitigated when the data enjoys a low-dimensional structure. The analysis in \cite{Javanmard2020PreciseTI} is based on \emph{Gordon's Comparison Inequality} \cite{gordon88,Thrampoulidis15,Thrampoulidis18}, which is a very versatile tool in the analysis of regularized estimators but fails to produce analytic results when one deviates from the setting of Euclidean-norm attacks on isotropic features. In contrast, our analysis is based on basic Langrangian duality. It relies on some approximations which turn out to only introduce multiplicative absolute constants in the final result, but are completely harmless for the final analysis and interpretation.

Finally, the study of robustness of gradient-descent in the context of linear regression under general-norms attacks and feature covariance matrices has been initiated in \cite{MeyerAndDohmatob2023} which gave sufficient conditions for the generative model $w_0$ (and its estimators like gradient descent, ridge regression, etc.) to be robust. However, the the question of tradeoffs was not considered.
\section{Additional Experimental Results}
\subsection{Experiment 2 (Extended)}
We provide further empirical confirmation for Theorem \ref{thm:polydecay}. Figures \ref{fig:poly-decay-eps-suppmat} and \ref{fig:poly-Eopt-suppmat} are complementary to Figures \ref{fig:poly-decay-eps} and \ref{fig:poly-Eopt} respectively in the main text. They show results for \textbf{Experiment 2} (refer to Section \ref{sec:experiments}) for other values of the exponents $\beta$ and $\delta$.

\begin{figure}[!h]
    \centering
    \includegraphics[width=1\textwidth]{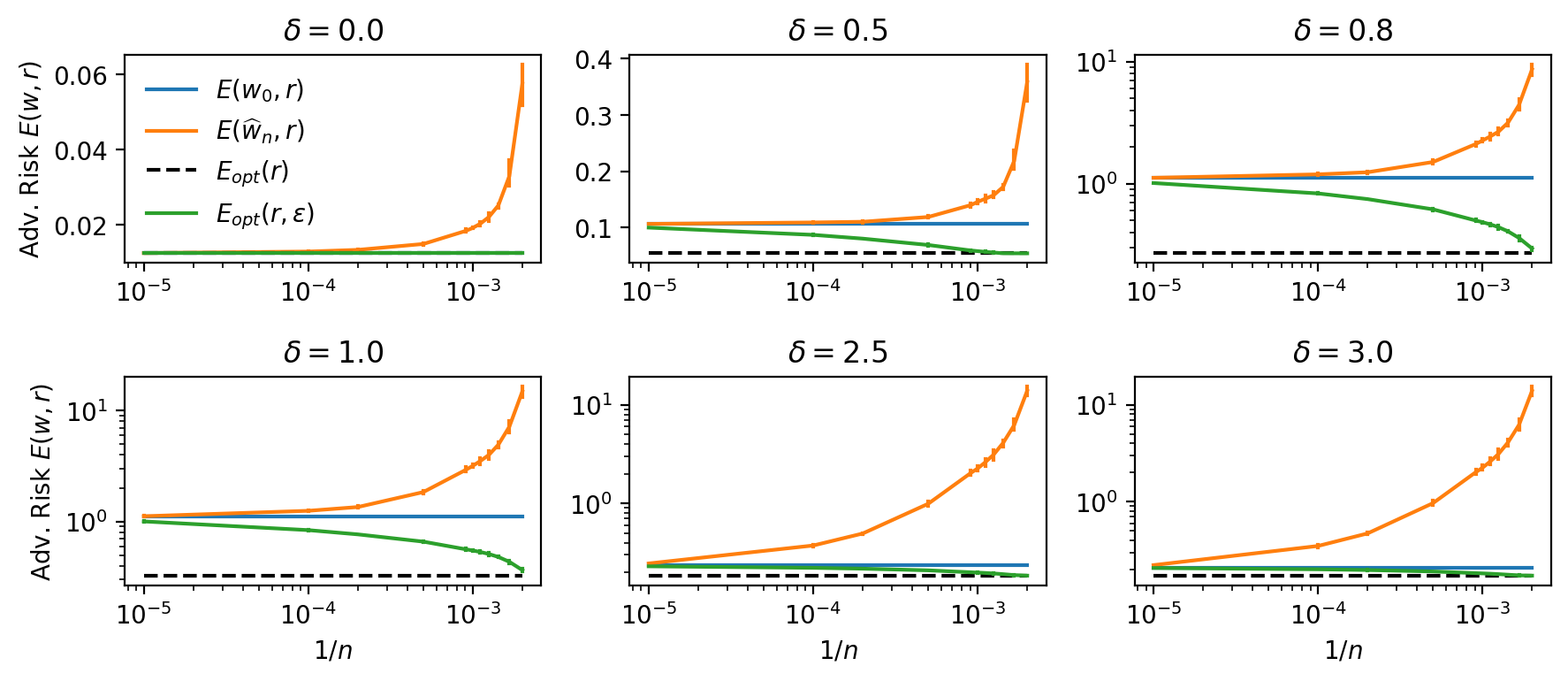}
\vspace{-.6cm}
    \caption{(\textbf{Experiment 2, extended}) Empirical verification of Theorem \ref{thm:polydecay}. Here $\beta=1.4$ and $d=10^4$. As in Figure \ref{fig:poly-decay-eps}, notice the conformity of the results with the theorem, namely: if the the model $\widehat w_n$ is accurate (small $\epsilon$), then it is robust (compared to the optimal achievable adversarial risk $E_{opt}(r)$) for $\delta \in (1,\infty)$, but non-robust for $\delta \in [0,1)$.
    }
    \label{fig:poly-decay-eps-suppmat}
\end{figure}

\begin{figure}[!hbt]
    \centering
    \includegraphics[width=1\textwidth]{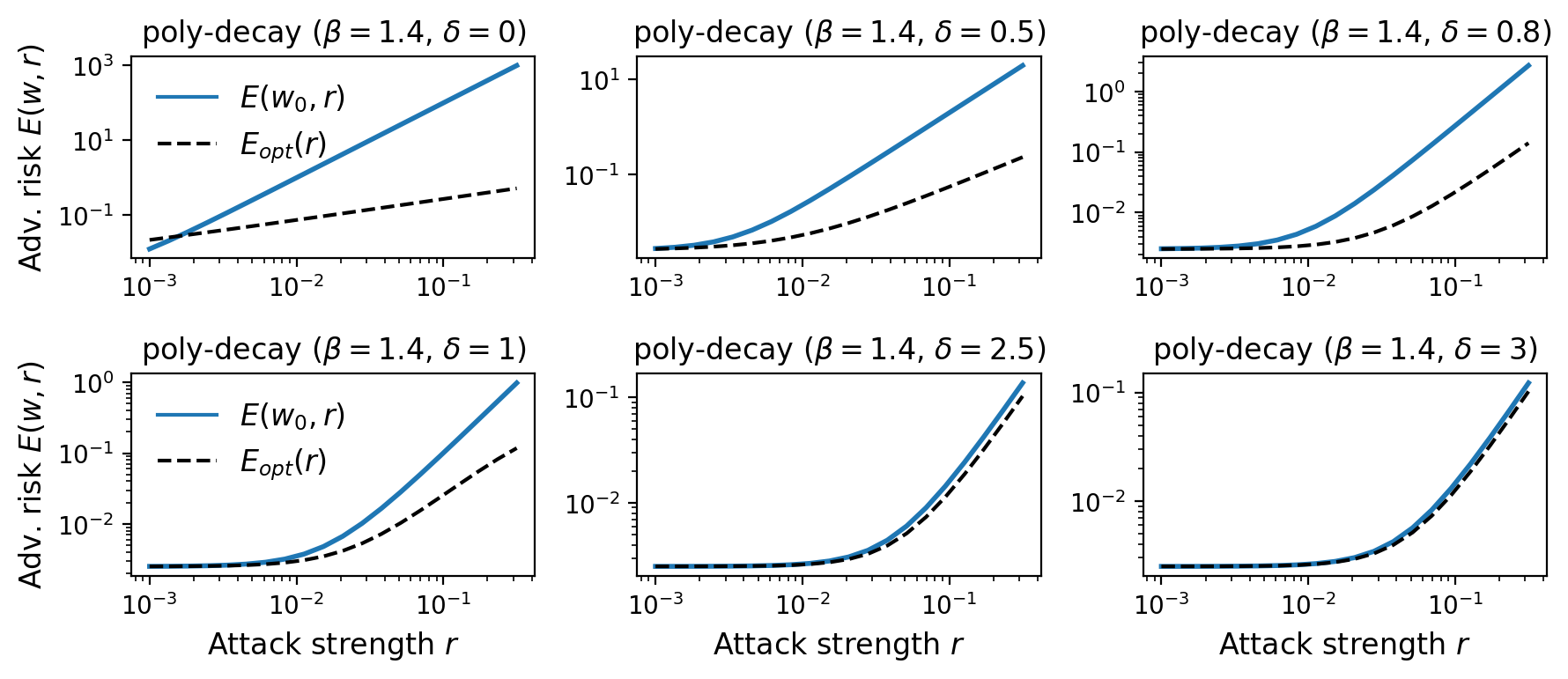}
    \caption{\textbf{(Experiment 2, extended)} Empirical validation of Theorem \ref{thm:polydecay} for the case when $\epsilon=0$. Here $\beta=1.4$. As in Figure \ref{fig:poly-Eopt}, notice the conformity of the results with the theorem, namely: the generative model $w_0$ is robust (compared to the optimal achievable adversarial risk $E_{opt}(r)$) for $\delta \in (1,\infty)$ but non-robust for $\delta \in [0,1)$.}
    \label{fig:poly-Eopt-suppmat}
\end{figure}
\section{Proof of Theorem \ref{thm:freelunch} (Main Result) and Theorem \ref{thm:Eopt}}
\freelunch*

\begin{proof}
Write $\overline E_{opt}(r,\epsilon) := \inf_{w \in \mathcal W_\epsilon}\overline E(w,r)$, where we recall that
$$
\mathcal W_\epsilon := \{w \in \mathbb R^d \mid \Delta(w) \le \epsilon^2\} = \{w \in \mathbb R^d \mid \|w-w_0\|_\Sigma^2 \le \epsilon^2\|w_0\|_\Sigma^2\}.
$$
Thus, if $\epsilon \ge \epsilon_{FL}(r):=\sqrt{\Delta(w^{prox}(r^2))} = \sqrt{G(r^2)}/\|w_0\|_\Sigma$, then $w^{prox}(r^2) \in \mathcal W_\epsilon$, and so we deduce from Lemma \ref{lm:proxy} that
\begin{eqnarray}
E_{opt}(r,\epsilon) \asymp \overline E_{opt}(r,\epsilon) = \overline E(w^{prox}(r^2),r) \asymp E(w^{prox}(r^2),r).
\end{eqnarray}

Henceforth, suppose $0 \le \epsilon \le \epsilon_{FL}(r)$. First observe that, for every $\lambda \in [0,r^2]$,
\begin{eqnarray}
\label{eq:trick}
    F(r,\lambda) = \inf_{\|w-w_0\|_\Sigma^2 \le G(\lambda)} \overline E(w,r).
\end{eqnarray}
Indeed, let $w \in \mathbb R^d$ such that $\|w-w_0\|_\Sigma^2 \le G(\lambda) := \|w^{prox}(\lambda)-w_0\|_\Sigma^2$. Let $t \ge 0$ such that $\lambda = r^2/(1+t)$, and set $L_t(w,r) := \overline E(w,r) + t\|w-w_0\|_\Sigma^2$. By definition of $w^{prox}(\lambda)$ in \eqref{eq:wt}, one has
\begin{eqnarray*}
\begin{split}
   F(r,\lambda) + t G(\lambda) &= \overline E(w^{prox}(\lambda),r) + t G(\lambda) = L_t(w^{prox}(\lambda),r)\\
   &= \|w^{prox}(\lambda)-w_0\|_\Sigma^2 + r^2\|w^{prox}(\lambda)\|_\star^2 + t\|w^{prox}(\lambda)-w_0\|_\Sigma^2\\
   &= (1+t)(\|w^{prox}(\lambda)-w_0\|_\Sigma^2 + \lambda \|w^{prox}(\lambda)\|_\star^2)\\
   &\le (1+t)(\|w-w_0\|_\Sigma^2 + \lambda \|w\|_\star^2)\text{ by definition of }w^{prox}(\lambda)\\
   &= L_t(w,r) = \overline E(w,r) + t\|w-w_0\|_\Sigma^2,
    \end{split}
\end{eqnarray*}
and it follows that $\overline E(w^{prox}(\lambda),r) = F(r,\lambda) \le \overline E(w,r) + t (\|w-w_0\|_\Sigma^2 - G(\lambda)) \le \overline E(w,r)$.

Now, equipped with \eqref{eq:trick} and the definition of $\overline E_{opt}(r,\epsilon)$, observe that
\begin{itemize}
    \item[--] If $G(\lambda) \le \epsilon^2\|w_0\|_\Sigma^2$, then $\overline E_{opt}(r,\epsilon) \ge F(r,\lambda)$.
    \item[--] Analogously, if $G(\lambda) \ge \epsilon^2\|w_0\|_\Sigma^2$, then $\overline E_{opt}(r,\epsilon) \le F(r,\lambda)$.
\end{itemize}
On the other hand, Lemma \ref{lm:decreasing} tells us that the equation $G(\lambda) = \epsilon^2\|w_0\|_\Sigma^2$ has a unique solution $\lambda_{opt}(r,\epsilon)$ in $[0,r^2]$, and we deduce from Lemma \ref{lm:proxy} that
$$
E_{opt}(r,\epsilon) \asymp \overline E_{opt}(r,\epsilon) = \overline E(w^{prox}(\lambda_{opt}(r,\epsilon)),r) \asymp E(w^{prox}(\lambda_{opt}(r,\epsilon)),r),
$$
which completes the proof.
\end{proof}

\subsection{Proof of Theorem \ref{thm:Eopt}}
\Eopt*
\begin{proof}
  Indeed, by Lemma \ref{lm:bigeps} we know that $E_{opt}(r) = E_{opt}(r,1)$. Also, by Lemma \ref{lm:implicit}, we know that $\lambda_{opt}(r,1) = r^2$. Combining these with part (A) of Theorem \ref{thm:freelunch} then gives the result.
\end{proof}
\subsection{A Corollary: Isotropic Features}
As an important corollary to Theorem \ref{thm:freelunch} (not stated in the main manuscript), consider the case of isotropic features considered in \cite{Javanmard2020PreciseTI}, where $\Sigma=I_d$.
\begin{restatable}{thm}{isotropic}
Consider the isotropic setting where $\Sigma=I_d$. For Euclidean-norm attack of strength $r \ge 0$, it holds for any tolerance level $\epsilon \in [0,1]$ that

\begin{itemize}
\item[(i)] (\textbf{Free Lunch Threshold}) $\epsilon_{FL}(r) = r^2/(1+r^2) \in [0,1)$.

\item[(ii)] (\textbf{Free Lunch}) If $\epsilon \ge \epsilon_{FL}(r)$, then the optimal regularization is $\lambda_{opt}(r,\epsilon) = r^2$, and we have
\begin{eqnarray}
E_{opt}(r,\epsilon) \asymp E_{opt}(r) \asymp  \sigma^2 + \|w_0\|_2^2\min(r^2,1).
\end{eqnarray}

\item[(ii)] (\textbf{Accuracy / Robustness Tradeoff}) If $\epsilon < \epsilon_{FL}(r)$, then the optimal regularization parameter is given by 
  $\lambda_{opt}(r,\epsilon) = \epsilon/(1-\epsilon)$, and we have
\begin{eqnarray}
    E_{opt}(r,\epsilon) \asymp \sigma^2 + \|w_0\|_2^2(\epsilon^2 + (1-\epsilon)^2 \min(r^2,1)).
\end{eqnarray}
\end{itemize}
\label{thm:isotropic}
\end{restatable}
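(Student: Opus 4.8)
The plan is to specialize Theorem~\ref{thm:freelunch} (and Theorem~\ref{thm:Eopt} for the borderline case $\epsilon=1$) to the isotropic Euclidean setting, where every quantity appearing in those theorems admits a closed form. Since $\Sigma=I_d$ and the attack norm is Euclidean, we have $\|\cdot\|_\star=\|\cdot\|_2=\|\cdot\|_\Sigma$, and the closed-form expression for Mahalanobis-norm attacks recalled just below \eqref{eq:wt} (applied with $B=I_d$) gives $w^{prox}(\lambda)=(\Sigma+\lambda I_d)^{-1}\Sigma w_0=\tfrac{1}{1+\lambda}w_0$. From this I would immediately compute
\[
G(\lambda)=\|w^{prox}(\lambda)-w_0\|_2^2=\frac{\lambda^2}{(1+\lambda)^2}\|w_0\|_2^2,\qquad
F(r,\lambda)=G(\lambda)+r^2\|w^{prox}(\lambda)\|_2^2=\frac{\lambda^2+r^2}{(1+\lambda)^2}\|w_0\|_2^2 .
\]
Note that $\lambda\mapsto G(\lambda)$ is strictly increasing on $\mathbb R_+$, which makes the existence/uniqueness assertion of Lemma~\ref{lm:implicit} transparent in this special case.

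Step 1 (part (i), the free-lunch threshold). By the definition \eqref{eq:threshold}, $\epsilon_{FL}(r)=\sqrt{G(r^2)}/\|w_0\|_2=r^2/(1+r^2)$, which obviously lies in $[0,1)$ for every $r\ge 0$.

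Step 2 (the optimal regularization, solving \eqref{eq:implicit}). The scalar equation $G(\lambda)=\epsilon^2\|w_0\|_\Sigma^2$ becomes $\lambda^2/(1+\lambda)^2=\epsilon^2$, i.e. $\lambda/(1+\lambda)=\epsilon$ (the relevant root, since $\lambda\ge 0$ and $\epsilon\ge 0$), hence $\lambda=\epsilon/(1-\epsilon)$. I would then check that $\epsilon/(1-\epsilon)\in[0,r^2]$ exactly when $\epsilon\le\epsilon_{FL}(r)$, in accordance with Lemma~\ref{lm:implicit}; for $\epsilon\ge\epsilon_{FL}(r)$ the extended definition sets $\lambda_{opt}(r,\epsilon)=r^2$.

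Step 3 (substitute into Theorem~\ref{thm:freelunch}). For part (ii), when $\epsilon\ge\epsilon_{FL}(r)$ we have $\lambda_{opt}(r,\epsilon)=r^2$, so Theorem~\ref{thm:freelunch}(B) yields $E_{opt}(r,\epsilon)\asymp E_{opt}(r)$, and Theorem~\ref{thm:Eopt} gives $E_{opt}(r)\asymp\sigma^2+F(r,r^2)=\sigma^2+\tfrac{r^2}{1+r^2}\|w_0\|_2^2\asymp\sigma^2+\|w_0\|_2^2\min(r^2,1)$, using $\tfrac{r^2}{1+r^2}\asymp\min(r^2,1)$. For part (iii), when $\epsilon<\epsilon_{FL}(r)$ we have $\lambda_{opt}(r,\epsilon)=\epsilon/(1-\epsilon)$ and $1+\lambda_{opt}(r,\epsilon)=1/(1-\epsilon)$; plugging this into the formula for $F$ above gives $F(r,\lambda_{opt}(r,\epsilon))=(\epsilon^2+(1-\epsilon)^2r^2)\|w_0\|_2^2$, and Theorem~\ref{thm:freelunch}(A) then gives the claimed estimate for $E_{opt}(r,\epsilon)$. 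The main obstacle here is essentially nil: given Theorems~\ref{thm:Eopt} and \ref{thm:freelunch} this corollary is a bookkeeping exercise, the only points needing a little care being (a) deriving the closed forms for $w^{prox}(\lambda)$, $G$ and $F$, (b) picking the correct root of $G(\lambda)=\epsilon^2\|w_0\|_2^2$ and verifying membership in the admissible interval $[0,r^2]$, and (c) the harmless replacement of $r^2/(1+r^2)$ by $\min(r^2,1)$ up to absolute constants.
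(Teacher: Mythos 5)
Your proposal is correct and follows essentially the same route as the paper's own proof: compute the closed forms $w^{prox}(\lambda)=\tfrac{1}{1+\lambda}w_0$, $G(\lambda)=\tfrac{\lambda^2}{(1+\lambda)^2}\|w_0\|_2^2$, $F(r,\lambda)=\tfrac{\lambda^2+r^2}{(1+\lambda)^2}\|w_0\|_2^2$, read off $\epsilon_{FL}(r)$ and $\lambda_{opt}(r,\epsilon)=\epsilon/(1-\epsilon)$, then substitute into Theorems~\ref{thm:Eopt} and~\ref{thm:freelunch}. Note only that, like the paper's own computation, your derivation yields $\sigma^2+\|w_0\|_2^2(\epsilon^2+(1-\epsilon)^2 r^2)$ in the tradeoff regime, i.e.\ without the $\min(r^2,1)$ appearing in the theorem statement (the two expressions agree only for $r\lesssim 1$), so this discrepancy lies in the statement rather than in your argument.
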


\section{Structure of Optima}
\label{sec:structure}
In this section, we explore the structure of the curve $\lambda \mapsto w^{prox}(\lambda)$ given in \eqref{eq:wt} for different choices of the attacker's norm $\|\cdot\|$.

\subsection{The Case of Mahalanobis-Norm Attacks}
Suppose the attacker's norm $\|\cdot\|$ is the Mahalanobis norm $\|\cdot\|_B$ induced by a positive-definite $d \times d$ matrix $B$. Then, for any $\lambda \ge 0$,  $w^{prox}(\lambda)$ minimizes $\|w-w_0\|_\Sigma^2 + \lambda\|w\|_\star^2 = \|w-w_0\|_\Sigma^2 + \lambda\|w\|_{B^{-1}}^2$, which gives the closed-form solution
\begin{eqnarray}
    w^{prox}(\lambda) = (\Sigma + \lambda B^{-1})^{-1} \Sigma w_0 = (B\Sigma + \lambda I_d)^{-1}B\Sigma w_0.
    \label{eq:wB}
\end{eqnarray}
Also, note that
\begin{eqnarray}
\begin{split}
G(\lambda) &:= \|w^{prox}(\lambda)-w_0\|_\Sigma^2 = \|((\Sigma + \lambda B^{-1})^{-1} \Sigma - I_d)w_0\|_\Sigma^2\\
&= \lambda^2\|(B\Sigma + \lambda I_d)^{-1}w_0\|_\Sigma^2\\
F(r,\lambda) - G(\lambda) &= r^2\|w^{prox}(\lambda)\|^2_2 = r^2\|(\Sigma + \lambda B^{-1})^{-1} \Sigma w_0\|_2^2\\
&= r^2\|(B\Sigma + \lambda I_d)^{-1} B\Sigma w_0\|_2^2.
\end{split}
\end{eqnarray}

\paragraph{Link with \cite{MeyerAndDohmatob2023}.}
Note that \eqref{eq:wB} recovers the structure established in \cite{MeyerAndDohmatob2023}, where $1/\lambda$ should be thought of as the time parameter in the population-wise \emph{adapted} (i.e. pre-conditioned) gradient-flow (GD+) proposed in that work, with the choise $M=B^{1/2}$. We deduce the following:
\begin{itemize}
\item GD+ started from zero and run for time $O(1/r^2)$ achieves the optimal adversarial risk $E_{opt}(r)$ (up to within multiplicative absolute constants). This follows from Theorem \ref{thm:Eopt} and the preceding argument.
\item More generally, for a tolerance parameter $\epsilon \in [0,1]$, GD+ started from zero and run for time $O(1/\lambda_{opt}(r,\epsilon))$ achieves the optimal adversarial risk, where $\lambda_{opt}(r,\epsilon) \in [0,r^2]$ is as given in Lemma \ref{lm:implicit}.
\end{itemize}

\paragraph{Link with \cite{Xing2021}.} In particular, for Euclidean-norm attacks corresponding to $B=I_d$, \eqref{eq:wB} reduces to
\begin{eqnarray}
w^{prox}(\lambda) = (\Sigma + \lambda I_d)^{-1} \Sigma w_0,
\end{eqnarray}
which recovers the structure established in \cite{Xing2021}.

\subsection{The Case of $\ell_p$-norm Attacks on Diagonal Feature Covariance Matrix}
Suppose the feature covariance matrix is $\Sigma=\mathrm{diag}(\lambda_1,\ldots,\lambda_d)$ and the attacker's norm is $\|\cdot\|_p$, for some $p \in [1,\infty]$. Let $q \in [1,\infty]$ be the harmonic conjugate of $p$. By \eqref{eq:wt}, $w^{prox}(\lambda)$ is the minimizer of $\|w-w_0\|_\Sigma^2 + \lambda \|w\|_q^2$ over $w \in \mathbb R^d$. If $R_q(\lambda):= \|w^{prox}(\lambda)\|_q$, then by first order optimality conditions, we have $\Sigma w_0 \in  \Sigma w + \lambda R(\lambda) \partial \|\cdot\|_q(w)$, i.e
\begin{eqnarray}
    w^{prox}(\lambda) = (I + R_q(\lambda)\lambda \partial \|\cdot\|_q)^{-1}(\Sigma w_0) = \mbox{prox}_{t_q(\lambda) \|\cdot\|_q}(\Sigma w_0),
\end{eqnarray}
where $t_q(\lambda) := R_q(\lambda)\lambda$.

\paragraph{The Case of $\ell_\infty$-Norm Attacks.}
In the special case where $p=\infty$, we have $q=1$, giving
\begin{eqnarray}
\label{eq:st}
    w^{prox}(\lambda)_k = \mathrm{ST}(\mu_k; t_1(\lambda)) = \begin{cases}
        \mu_k + t_1(\lambda),&\mbox{ if }\mu_k < -t_1(\lambda),\\
        0,&\mbox{ if }|\mu_k| \le t_1(\lambda),\\
         \mu_k - t_1(\lambda),&\mbox{ if }\mu_k > t_1(\lambda),
    \end{cases}
\end{eqnarray}
where $\mu_k := \lambda_k \cdot (w_0)_k$ for all $k \in [d]$ (i.e $\mu = \Sigma w_0$) and $\mathrm{ST}$ is the well-known \emph{soft-thresholding (ST)} operator. Thus, if $t_1(\lambda) \ge \|\mu\|_\infty$, then $w^{prox}(\lambda) = 0$. This means that we can always restrict our search of the optimal threshold $t_1(\lambda)$ to a compact interval,
\begin{eqnarray}
t_1(\lambda) \in [0,\|\mu\|_\infty].
\end{eqnarray}
The structure of the optimal \eqref{eq:st} is instructive: components $(w_0)_k$ of $w_0$ corresponding to to features with small values of $|\mu_k|$ are zeroed-out.


\section{Well-Conditioned Problems}
\subsection{Estimating $E_{opt}(r)$}
We now give a complete analysis of $E_{opt}(r)$ for the case of so-called "well-conditioned" problems (formally defined later).
To develop an intuition, first consider the simple case of Euclidean-norm attacks on isotropic features (i.e. $\Sigma=I_d$). In this case, Lemma \ref{lm:FGanalytic} tells us that
\begin{eqnarray}
G(\lambda) = \lambda^2\|w_0\|_2^2/(1+\lambda)^2,\, F(r,\lambda) = G(\lambda) + r^2\|w_0\|_2^2/(1+\lambda)^2.
\end{eqnarray}
Theorem \ref{thm:Eopt} then predicts that
\begin{eqnarray}
\begin{split}
    E_{opt}(r) &\asymp \sigma^2 + F(r,r^2) = \sigma^2 + \frac{r^4}{(1+r^2)^2}\|w_0\|_2^2 + \frac{r^2}{(1+r^2)^2}\|w_0\|_2^2\\
    &= \sigma^2 + \frac{r^2}{1+r^2}\|w_0\|_2^2 \asymp \sigma^2 + \|w_0\|_2^2 \min(r^2,1) \asymp \min(E(w_0,r),E(0,r)).
\end{split}
\end{eqnarray}
Thus, for $r \le 1$, the generative model $w_0$ attains the optimal adversarial risk $E_{opt}(r)$ (upto within multiplicative constant); for $r \ge 1$, the optimal adversarial risk is attained by the null model $w=0$. This recovers a result of \cite{MeyerAndDohmatob2023}. 

We now consider the situation of general norms and covariance matrices. Define $r_0, r_1,\eta > 0$ by
\begin{eqnarray}
\begin{split}
r_0  := \|w_0\|_\Sigma/\|w_0\|_\star,\,
r_1  := \|\Sigma w_0\|/\|w_0\|_\Sigma,\,
\eta_0 :=  r_1/r_0.
\end{split}
\label{eq:rOeta0}
\end{eqnarray}
Note that $\eta_0 \ge 1$ by Cauchy-Schwarz inequality. This scalar should be thought of as a kind of \emph{condition number} for $\Sigma$ w.r.t the attacker's norm $\|\cdot\|$. In particular, when this norm is Euclidean, then $\eta_0$ is upper-bounded by the usual linear-algebraic condition number of $\Sigma$.
In general, $r_1 = r_0\eta_0 \ge r_0$, with equality when $\eta_0=1$, which is the case when for example one considers Euclidean-norm attacks on isotropic features, i.e. $\Sigma = I_d$.

\begin{restatable}{df}{}
By \emph{well-conditioned} problems, we mean scenarios where $\eta_0 = O(1)$. 
\end{restatable}

\begin{restatable}{thm}{}
\label{thm:well-conditioned}
For an attack strength $r \ge 0$ w.r.t a general norm $\|\cdot\|$, it holds that 
\begin{eqnarray}
    \sigma^2 + \|w_0\|_\Sigma^2 \min(r/r_1,1)^2 \lesssim E_{opt}(r) \lesssim \sigma^2 + \|w_0\|_\Sigma^2 \min(r/r_0,1)^2.
\end{eqnarray}
In particular, for well-conditioned problems (i.e. $\eta_0 = O(1)$), it holds that
\begin{eqnarray}
E_{opt}(r) \asymp \sigma^2 + \|w_0\|_\Sigma^2\min(r/r_0,1)^2 \asymp \min(E(w_0,r),E(0,r)).
\end{eqnarray}
\end{restatable}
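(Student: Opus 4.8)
The plan is to reduce everything to Theorem \ref{thm:Eopt}, which already tells us that $E_{opt}(r) \asymp \sigma^2 + F(r,r^2)$ where $F(r,\lambda) = G(\lambda) + r^2\|w^{prox}(\lambda)\|_\star^2$ and $G(\lambda) = \|w^{prox}(\lambda)-w_0\|_\Sigma^2$. So the task is purely to estimate $G(r^2)$ and $r^2\|w^{prox}(r^2)\|_\star^2$ from above and below in terms of the scalars $r_0, r_1, \eta_0$ defined in \eqref{eq:rOeta0}, and then read off the two-sided bound.

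First I would record the two extremal competitors in the variational problem \eqref{eq:wt} defining $w^{prox}(\lambda)$: plugging in $w = w_0$ gives $\min_w \|w-w_0\|_\Sigma^2 + \lambda\|w\|_\star^2 \le \lambda\|w_0\|_\star^2$, and plugging in $w = 0$ gives the bound $\le \|w_0\|_\Sigma^2$. Since both $G(\lambda) = \|w^{prox}(\lambda)-w_0\|_\Sigma^2$ and $\lambda\|w^{prox}(\lambda)\|_\star^2$ are nonnegative and sum to at most $\min(\lambda\|w_0\|_\star^2,\|w_0\|_\Sigma^2)$, this immediately yields an upper bound $F(r,r^2) \lesssim \min(r^2\|w_0\|_\star^2, \|w_0\|_\Sigma^2) = \|w_0\|_\Sigma^2\min(r^2/r_0^2,1)$, using $r_0 = \|w_0\|_\Sigma/\|w_0\|_\star$. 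That handles the claimed upper bound $E_{opt}(r)\lesssim \sigma^2 + \|w_0\|_\Sigma^2\min(r/r_0,1)^2$.

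For the lower bound I would argue directly from the unconstrained minimization. Fix any $w$ and write $u = w - w_0$. We need to lower bound $\|u\|_\Sigma^2 + r^2\|w_0+u\|_\star^2$ (this is $\overline E$ minus $\sigma^2$, which is $\asymp E(w,r)-\sigma^2$ by Lemma \ref{lm:proxy}, so bounding it suffices). There are two cases. If $\|u\|_\Sigma \ge \tfrac12\|w_0\|_\Sigma$ then the first term alone is $\gtrsim \|w_0\|_\Sigma^2$, and since for a relevant (small) $r$ we have $\|w_0\|_\Sigma^2 \gtrsim \|w_0\|_\Sigma^2\min(r/r_1,1)^2$, we are done in that case. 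If $\|u\|_\Sigma < \tfrac12\|w_0\|_\Sigma$, I want to lower bound $\|w_0 + u\|_\star$ by something like $\|w_0\|_\star$ minus a correction; the natural tool is the reverse triangle inequality $\|w_0+u\|_\star \ge \|w_0\|_\star - \|u\|_\star$, combined with the norm-equivalence-type estimate $\|u\|_\star \le \|\Sigma^{-1/2}\|_{\star}$-flavored bound — concretely $\|u\|_\star \le (\|w_0\|_\star/\|\Sigma w_0\|)\cdot$ something won't work in general, so instead I would use the cruder but dimension-free bound relating $\|\cdot\|_\star$ and $\|\cdot\|_\Sigma$ that is implicit in the definitions of $r_0,r_1$: the point is that $r_1/r_0 = \eta_0$ is exactly the ratio that measures how badly $\|\cdot\|_\Sigma$ and the relevant dual geometry disagree, and the lower bound $\min(r/r_1,1)^2$ is designed so that the correction term from $\|u\|_\star$ is absorbed. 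Optimizing the tradeoff between the $\|u\|_\Sigma^2$ term and the $r^2(\|w_0\|_\star-\|u\|_\star)^2$ term over the allowed range of $\|u\|$, one gets $F(r,r^2) \gtrsim \|w_0\|_\Sigma^2\min(r/r_1,1)^2$. Finally, the ``in particular'' clause for well-conditioned problems is immediate: when $\eta_0 = O(1)$ we have $r_1 \asymp r_0$, so the upper and lower bounds match up to constants, giving $E_{opt}(r)\asymp \sigma^2 + \|w_0\|_\Sigma^2\min(r/r_0,1)^2$; and recognizing $E(w_0,r) \asymp \sigma^2 + r^2\|w_0\|_\star^2 = \sigma^2 + \|w_0\|_\Sigma^2(r/r_0)^2$ (from Lemma \ref{lm:analytic}, using $E(w_0)=\sigma^2$) and $E(0,r) \asymp \sigma^2 + \|w_0\|_\Sigma^2$ gives $E_{opt}(r)\asymp \min(E(w_0,r),E(0,r))$.

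The main obstacle is the lower bound in the non-Euclidean/anisotropic case: unlike the Euclidean isotropic situation where $w^{prox}$ has the closed form $(\Sigma+\lambda I)^{-1}\Sigma w_0$ and one can diagonalize, here one must control $\|w_0+u\|_\star$ from below without any spectral decomposition, using only the defining ratios $r_0, r_1$. The delicate point is getting the right exponent/threshold $\min(r/r_1,1)^2$ rather than a weaker bound — this requires carefully choosing the case split radius (a constant fraction of $\|w_0\|_\Sigma$) and then, inside the ``small $u$'' regime, using reverse-triangle on $\|\cdot\|_\star$ together with the fact that $\|u\|_\star$ is controlled by $\|u\|_\Sigma$ only up to the factor $\eta_0$ encoded in $r_1$. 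I expect this step to be where all the real work lies; everything else is plugging the two obvious test points $w_0$ and $0$ into \eqref{eq:wt} and invoking Theorem \ref{thm:Eopt} and Lemma \ref{lm:analytic}.
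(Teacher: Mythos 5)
Your upper bound and your final assembly (matching of the two bounds when $\eta_0=O(1)$, and the identification with $\min(E(w_0,r),E(0,r))$ via Lemma \ref{lm:analytic}) are fine, and you correctly locate where the real work lies. But the mechanism you propose for the lower bound does not work, and you do not supply a substitute. In the regime $\|u\|_\Sigma<\tfrac12\|w_0\|_\Sigma$ you want to lower-bound $\|w_0+u\|_\star$ via the reverse triangle inequality $\|w_0+u\|_\star\ge\|w_0\|_\star-\|u\|_\star$ together with some control of $\|u\|_\star$ by $\|u\|_\Sigma$ ``up to the factor $\eta_0$.'' No such control exists: the scalars $r_0,r_1,\eta_0$ only constrain the norms along the single direction $w_0$, not along an arbitrary perturbation $u$. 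Concretely, take $\ell_\infty$-attacks ($\|\cdot\|_\star=\|\cdot\|_1$), $\Sigma=I_d$, $w_0=e_1$, so $r_0=r_1=\eta_0=1$; for $u=\epsilon\mathbf{1}/\sqrt d$ one has $\|u\|_\Sigma=\epsilon$ but $\|u\|_\star=\epsilon\sqrt d$, so the reverse-triangle bound is vacuous as soon as $\epsilon\ge 1/\sqrt d$, while the theorem's lower bound is nontrivial there. Asserting that ``the correction term is absorbed'' and that ``optimizing the tradeoff one gets'' the claimed bound is exactly the step that cannot be carried out by this route.

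The correct mechanism, which is how $r_1$ actually enters, is the duality pairing against $\Sigma w_0$: for any $w$, $\|w\|_\star\ge w^\top\Sigma w_0/\|\Sigma w_0\|$, whence for $\|w-w_0\|_\Sigma\le\tfrac12\|w_0\|_\Sigma$ one gets $\|w\|_\star\ge(\|w_0\|_\Sigma^2-\|w-w_0\|_\Sigma\|w_0\|_\Sigma)/\|\Sigma w_0\|\ge\|w_0\|_\Sigma/(2r_1)$, which combined with your case split yields $F\gtrsim\|w_0\|_\Sigma^2\min(r/r_1,1)^2$. The paper packages this same inequality geometrically: it proves the theorem as the $\epsilon=1$ special case of Theorem \ref{thm:lousy}, by projecting any candidate $w$ onto the line $\langle w_0\rangle$ with the $\Sigma$-orthogonal projection $P_{C,\Sigma}(w)=(w^\top\Sigma w_0/\|w_0\|_\Sigma^2)\,w_0$, which is $\|\cdot\|_\Sigma$-non-expansive and satisfies $\|P_{C,\Sigma}(w)\|_\star\le\eta_0\|w\|_\star$ by that same pairing (Proposition \ref{prop:keyprop}); the resulting one-dimensional problem is then solved exactly (Proposition \ref{prop:shrinky}). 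Until you replace the reverse-triangle step by the pairing against $\Sigma w_0$ (or the equivalent projection argument), the lower bound is not established.
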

Thus, for well-conditioned problems, the generative model $w_0$ is optimally robust (up to an absolute multiplicative constant) for small values of $r$ (i.e. $r \le r_0$), while for large values of $r$ ($r \ge r_0$), the null model $w=0$ is optimally robust (up to an absolute multiplicative constant).

\subsection{Estimating $E_{opt}(r,\epsilon)$}
We now generalize the results of the previous subsection and establish some results which are complementary to the results in Sections \ref{sec:lagrangian}. These use different techniques but arrive at qualitatively and quantitatively similar results in certain settings.

Define an auxiliary function $H:\mathbb R_+ \times \mathbb R_+ \to \mathbb R_+$ by
\begin{align}
H(r,\epsilon) &:= \begin{cases}
    r,&\mbox{ if }0 \le r < 1,\\
    \delta + (1-\delta)r,&\mbox{ else,}
    \end{cases},\label{eq:H}
\end{align}
where $\delta = \delta(\epsilon) := \min(1,\epsilon)$. 
This is the same function which appears in Theorem \ref{thm:isotropic-eps-different-p} (see Table \ref{tab:isotropic-eps}). Note that $r=H(r,0) = H(r,\epsilon) \ge H(r,1) = \min(r,1)$, for all $r \ge 0$ and $\epsilon \in [0,1]$.
The following which holds for any attacker norm, is one of our main contributions.
\begin{restatable}{thm}{lousy}
\label{thm:lousy}
For any $r \ge 0$ and $\epsilon \in [0,1]$, the following bounds hold
\begin{eqnarray}
\label{eq:lousybounds}
    \sigma^2 
 + \|w_0\|_\Sigma^2H(r/r_1,\epsilon)^2 \lesssim E_{opt}(r,\epsilon) \lesssim \sigma^2 
 + \|w_0\|_\Sigma^2H(r/r_0,\epsilon)^2.
\end{eqnarray}
In particular, if $r_0 \asymp r_1$, then $E_{opt}(r,\epsilon) \asymp \sigma^2 
 + \|w_0\|_\Sigma^2H(r/r_0,\epsilon)^2$.
\end{restatable}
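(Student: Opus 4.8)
\textbf{Proof plan for Theorem \ref{thm:lousy}.}

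The plan is to bound $E_{opt}(r,\epsilon)$ by sandwiching it between two ``isotropic-like'' problems obtained by comparing the dual norm $\|\cdot\|_\star$ against multiples of the Mahalanobis norm $\|\cdot\|_\Sigma$. The key observation is that for any $w$, the quantities $r_0$ and $r_1$ from \eqref{eq:rOeta0} encode exactly how $\|w\|_\star$ and $\|w\|_2$ (appearing in $\overline E(w,r)$ via $r^2\|w\|_\star^2$) relate to $\|w\|_\Sigma$ on the relevant scale. Since Lemma \ref{lm:proxy} lets us replace $E(w,r)$ by $\overline E(w,r)$ up to absolute constants, it suffices to estimate $\overline E_{opt}(r,\epsilon):=\inf_{w\in\mathcal W_\epsilon}\overline E(w,r) = \inf_{w\in\mathcal W_\epsilon}\bigl(\sigma^2+\|w-w_0\|_\Sigma^2+r^2\|w\|_\star^2\bigr)$.

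First I would handle the \emph{upper bound}. The idea is to restrict the infimum to the one-parameter family $w=tw_0$, $t\in[0,1]$, which automatically satisfies $\|w-w_0\|_\Sigma = (1-t)\|w_0\|_\Sigma \le \epsilon\|w_0\|_\Sigma$ as soon as $t\ge 1-\epsilon$ (and for $\epsilon\ge 1$ any $t\in[0,1]$ works, recovering Lemma \ref{lm:bigeps}). Along this family, $\|w\|_\star = t\|w_0\|_\star = t\|w_0\|_\Sigma/r_0$, so $\overline E(tw_0,r) = \sigma^2 + (1-t)^2\|w_0\|_\Sigma^2 + (r/r_0)^2 t^2\|w_0\|_\Sigma^2$. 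Minimizing the scalar function $g(t):=(1-t)^2 + (r/r_0)^2 t^2$ over $t\in[\max(0,1-\epsilon),1]$ — a routine one-variable quadratic optimization — yields exactly $g_{\min}\asymp H(r/r_0,\epsilon)^2$: when $r/r_0<1$ the unconstrained optimum $t^\star = 1/(1+(r/r_0)^2)$ lies in the feasible interval and gives value $\asymp (r/r_0)^2$, while for $r/r_0\ge 1$ the constraint $t\ge 1-\epsilon$ bites and one gets $\asymp \epsilon^2 + (1-\epsilon)^2(r/r_0)^2 \asymp (\epsilon + (1-\epsilon)r/r_0)^2 = H(r/r_0,\epsilon)^2$ (recalling $\delta=\min(1,\epsilon)$ in \eqref{eq:H}). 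This gives $E_{opt}(r,\epsilon)\lesssim \sigma^2 + \|w_0\|_\Sigma^2 H(r/r_0,\epsilon)^2$.

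For the \emph{lower bound}, I would argue that for \emph{any} feasible $w\in\mathcal W_\epsilon$ one has $\overline E(w,r)\gtrsim \sigma^2 + \|w_0\|_\Sigma^2 H(r/r_1,\epsilon)^2$. The mechanism: drop $\sigma^2$, and note that $\|w-w_0\|_\Sigma^2$ already contributes; if $\|w-w_0\|_\Sigma \ge \tfrac12\epsilon\|w_0\|_\Sigma$ this term alone is $\gtrsim \epsilon^2\|w_0\|_\Sigma^2$. Otherwise $w$ is close to $w_0$, so $\|w\|_\Sigma \ge \|w_0\|_\Sigma - \|w-w_0\|_\Sigma \gtrsim \|w_0\|_\Sigma$; combining with the definition $r_1 = \|\Sigma w_0\|/\|w_0\|_\Sigma$ and Cauchy--Schwarz-type estimates one shows $\|w\|_\star \gtrsim \|w\|_\Sigma/r_1 \gtrsim \|w_0\|_\Sigma/r_1$, whence $r^2\|w\|_\star^2\gtrsim (r/r_1)^2\|w_0\|_\Sigma^2$. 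Taking the minimum of the two cases and a further split on whether $r/r_1\lessgtr 1$ reproduces $H(r/r_1,\epsilon)^2$ up to constants. The main obstacle is making the inequality $\|w\|_\star \gtrsim \|w\|_\Sigma/r_1$ rigorous for a \emph{general} norm — the clean relation $\|w_0\|_\star = \|w_0\|_\Sigma/r_0$ holds only at $w_0$ by definition of $r_0$, and transferring a comparison to nearby $w$ requires controlling how $\|\cdot\|_\star$ and $\|\cdot\|_\Sigma$ vary; I expect one needs to invoke the dual characterization $\|w\|_\star = \sup_{\|\delta\|\le 1}\delta^\top w$ together with $\|\Sigma w_0\|$ (note $r_1$ is built from $\|\Sigma w_0\|$, i.e.\ the \emph{primal} norm of $\Sigma w_0$) via the inequality $\delta^\top w \ge \delta^\top w_0 - |\delta^\top(w-w_0)|$ and a careful choice of the witness $\delta$. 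Finally, the ``in particular'' clause is immediate: if $r_0\asymp r_1$ then $H(r/r_0,\epsilon)\asymp H(r/r_1,\epsilon)$ since $H(\cdot,\epsilon)$ is, up to constants, monotone and scale-robust in its first argument, so the two bounds in \eqref{eq:lousybounds} match.
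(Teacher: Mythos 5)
Your upper bound is correct and is essentially the paper's: restricting to the ray $\{tw_0\}$ and solving the resulting one-dimensional problem over $|t-1|\le\epsilon$ gives $\inf_t \overline E(tw_0,r)\asymp \sigma^2+\|w_0\|_\Sigma^2 H(r/r_0,\epsilon)^2$. (The paper phrases this with $K(w,r)=\|w-w_0\|_\Sigma+r\|w\|_\star$ instead of $\overline E$, via Lemma \ref{lm:thorny}; the two are interchangeable up to absolute constants by Lemma \ref{lm:abc}.)

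The lower bound is where your argument has a genuine gap, which you flag yourself. Two issues. First, the inequality $\|w\|_\star\gtrsim\|w\|_\Sigma/r_1$ for general $w\in\mathcal W_\epsilon$ cannot simply be asserted: $r_1$ is defined only through $w_0$, and nothing relates $\|\cdot\|_\star$ to $\|\cdot\|_\Sigma$ at other points. Second, even granting it, your case split does not assemble into the claimed bound: in the case $\|w-w_0\|_\Sigma\ge\tfrac12\epsilon\|w_0\|_\Sigma$ you only obtain $\gtrsim\epsilon^2\|w_0\|_\Sigma^2$, which is strictly weaker than $H(r/r_1,\epsilon)^2\|w_0\|_\Sigma^2\asymp(\epsilon+(1-\epsilon)r/r_1)^2\|w_0\|_\Sigma^2$ when $r/r_1\gg 1$ and $\epsilon<1$. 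The repair is exactly the dual-witness idea you gesture at, and it removes the need for any case split: take $\delta=\Sigma w_0/\|\Sigma w_0\|$, so that $\|\delta\|=1$, $\delta^\top w_0=\|w_0\|_\Sigma/r_1$, and by Cauchy--Schwarz in the $\Sigma$-inner product $|\delta^\top(w-w_0)|\le\|w-w_0\|_\Sigma/r_1$; hence $\|w\|_\star\ge(\|w_0\|_\Sigma-s)/r_1$ with $s:=\|w-w_0\|_\Sigma\in[0,\epsilon\|w_0\|_\Sigma]$. Then
\begin{equation*}
K(w,r)\;\ge\; s+\frac{r}{r_1}\bigl(\|w_0\|_\Sigma-s\bigr)\;\ge\;\|w_0\|_\Sigma\inf_{|t-1|\le\epsilon}\Bigl(|t-1|+\frac{r}{r_1}|t|\Bigr)\;=\;\|w_0\|_\Sigma H(r/r_1,\epsilon)
\end{equation*}
by Lemma \ref{lm:thorny}, which is the desired lower bound. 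For comparison, the paper proves the same inequality by projecting $w$ onto the line $\langle w_0\rangle$ with the $\Sigma$-orthogonal projection $P_{C,\Sigma}(w)=(w^\top\Sigma w_0/\|w_0\|_\Sigma^2)\,w_0$, showing it is non-expansive in $\|\cdot\|_\Sigma$ (so it preserves $\mathcal W_\epsilon$) and has $\|\cdot\|_\star$-operator norm at most $\eta_0=r_1/r_0$ (Proposition \ref{prop:keyprop}); your witness argument and the paper's projection argument are dual formulations of the same Cauchy--Schwarz computation with $\Sigma w_0$, so once the above fix is in place the two proofs are essentially equivalent.
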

Note that when $\epsilon=1$, we have $H(r,\epsilon) = H(r,1) = \min(r,1)$ for any $r \ge 0$, and the above result recovers Theorem \ref{thm:well-conditioned} as a special case.

\subsection{The Case of Euclidean-Norm Attacks}
In the special case of Euclidean-norm attacks, one computes
$$
1 \le \eta_0 = \frac{r_0}{r_1} = \frac{\|\Sigma w_0\|_2\|w_0\|_2}{\|w_0\|_\Sigma^2} = \frac{\|\Sigma^{1/2}\Sigma^{1/2} w_0\|_2}{\|\Sigma^{1/2} w_0\|_2}\frac{\|w_0\|_2}{\|\Sigma^{1/2} w_0\|_2} \le \sqrt{\kappa(\Sigma)},
$$
where $\kappa(\Sigma)$ is the ordinary condition number of the covariance matrix $\Sigma$.

\paragraph{A Well-Conditioned Example.} In particular, in the case of isotropic features where $\Sigma = I_d$, we have $r_0=r_1=\|\Sigma^{1/2}\|_{op} = 1$ and $\eta_0 = 1$, and Theorem \ref{thm:lousy} then gives
\begin{eqnarray}
\begin{split}
    E_{opt}(r,\epsilon) &\asymp \sigma^2 + \|w_0\|_\Sigma^2 \cdot H(r/r_0,\epsilon)^2 = \sigma^2 + (\epsilon^2 + (1-\epsilon)^2r^2)\|w_0\|_\Sigma^2,
    \end{split}
    \label{eq:well-conditioned}
\end{eqnarray}
for all $r \ge r_0$.
This is exactly the result obtained in Theorem \ref{thm:isotropic}.

\paragraph{A Case of Failure for Theorem \ref{thm:lousy}.} Note that even in the Euclidean case, Theorem \ref{thm:lousy} might become vacuous when the "condition number" $\eta_0$ is too large. This is for example, the case of polynomial decaying eigenvalues of the covariance matrix $\Sigma$, considered in Section \ref{subsec:polydecay}. Indeed, that example with $\delta \in (1,\infty)$, one easily computes $r_0 = \|w_0\|_\Sigma/\|w_0\|_2  \to 0$ in the limit $\delta \to 1^+$, and $r_1 = \|\Sigma w_0\|_2/\|w_0\|_\Sigma = \Theta(1)$, and the lower-bound in \eqref{eq:lousybounds} becomes vacuous. However, the results of Section \ref{subsec:polydecay} (Theorem \ref{thm:polydecay}) remain valid even in this ill-conditioned limit.



\subsection{Sketch of Proof of Theorem \ref{thm:lousy}}
\label{subsec:lousyproof}
We now outline the key ideas underlying  the proof of Theorem \ref{thm:lousy}, 
split into various steps. The details are provided in Section \ref{sec:lousyproofdetails}.




\paragraph{Step 1: Proxy for Adversarial Risk.}
From Lemma \ref{lm:proxy}, we know that $E(w,r) \asymp \widetilde E(w,r)$, and so
\begin{eqnarray}
E_{opt}(r,\epsilon) \asymp \sigma^2 + K_{opt}(r,\epsilon)^2,
\end{eqnarray}
where $\widetilde E(w,r) := \|w-w_0\|_\Sigma^2 + r^2\|w\|_\star^2$ and $K_{opt}(r,\epsilon) := \inf_{w \in \mathcal W_\epsilon} K(w,r)$.

\paragraph{Step 2: Restricting the Search to a Chord.}
Computing $K_{opt}(r,\epsilon)$, even though conceivably easier than $E_{opt}(r,\epsilon)$, is still difficult. Instead, we will restrict the optimization to a line / chord in $\mathcal W_\epsilon$, parallel to the generative model $w_0$. It will turn out that up to within multiplicative constants, this strategy gives the correct value of $K_{opt}(r,\epsilon)$ as a function of all relevant problem parameters. To this end, let $K_{shrink}(r,\epsilon)$ be the optimal adversarial risk achieved by a linear model which is co-linear with the generative model $w_0$, i.e
\begin{eqnarray}
K_{shrink}(r,\epsilon) := \inf_{w \in \mathcal W_\epsilon \cap \langle w_0\rangle}K(w,r),
\end{eqnarray}
where $\langle w_0\rangle := \{t w_0 \mid t \in \mathbb R\}$ is the one-dimensional subspace of $\mathbb R^d$ spanned by the generative model $w_0$.

\begin{restatable}{prop}{keyprop}
For any $r,\epsilon \ge 0$, the following bounds hold
\label{prop:keyprop}
\begin{align}
 K_{shrink}(r\eta_0,\epsilon) \le K_{opt}(r,\epsilon) \le K_{shrink}(r,\epsilon).
\end{align}
\end{restatable}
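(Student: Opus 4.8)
\textbf{Proof plan for Proposition \ref{prop:keyprop}.}

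The goal is to sandwich $K_{opt}(r,\epsilon) = \inf_{w \in \mathcal W_\epsilon} K(w,r)$ between the two ``shrinkage'' quantities $K_{shrink}(r\eta_0,\epsilon)$ and $K_{shrink}(r,\epsilon)$. The upper bound $K_{opt}(r,\epsilon) \le K_{shrink}(r,\epsilon)$ is immediate, since $\mathcal W_\epsilon \cap \langle w_0\rangle \subseteq \mathcal W_\epsilon$, so the infimum over the smaller set can only be larger. All the work is in the lower bound $K_{shrink}(r\eta_0,\epsilon) \le K_{opt}(r,\epsilon)$, i.e. showing that restricting to the chord $\langle w_0\rangle$ only costs a factor $\eta_0 = r_1/r_0$ in the effective attack strength.

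The plan for the lower bound is as follows. Take an arbitrary $w \in \mathcal W_\epsilon$, so that $\|w - w_0\|_\Sigma \le \epsilon\|w_0\|_\Sigma$. I want to exhibit a colinear model $w' = t w_0 \in \mathcal W_\epsilon \cap \langle w_0\rangle$ with $K(w', r\eta_0) \le C\, K(w,r)$ for an absolute constant (ideally $C=1$). The natural candidate is the orthogonal projection of $w$ onto $\langle w_0\rangle$ in the $\Sigma$-geometry, namely $t^\star := \langle w, w_0\rangle_\Sigma / \|w_0\|_\Sigma^2$ and $w' := t^\star w_0$. By the Pythagorean identity in the $\Sigma$-inner product, $\|w' - w_0\|_\Sigma \le \|w - w_0\|_\Sigma \le \epsilon \|w_0\|_\Sigma$, so $w' \in \mathcal W_\epsilon$, and also $\|w'\|_\Sigma \le \|w\|_\Sigma$. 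It then remains to control the dual-norm term: I need $r\eta_0 \|w'\|_\star \lesssim \|w - w_0\|_\Sigma + r\|w\|_\star$. Since $w' = t^\star w_0$, we have $\|w'\|_\star = |t^\star|\,\|w_0\|_\star$, and by definition $\|w_0\|_\star = \|w_0\|_\Sigma / r_0$, so $r\eta_0\|w'\|_\star = r\eta_0 |t^\star| \|w_0\|_\Sigma / r_0 = (r/r_1)\, |t^\star|\, \|w_0\|_\Sigma \cdot \eta_0^2 /\eta_0 $... more cleanly: $r\eta_0\|w'\|_\star = r (r_1/r_0)|t^\star|\|w_0\|_\Sigma/r_0$. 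The key is then to bound $|t^\star| \|w_0\|_\Sigma$ or rather $|t^\star|$ in terms of $\|w\|$-type quantities and to use the definition $r_1 = \|\Sigma w_0\|/\|w_0\|_\Sigma$ to convert a $\Sigma$-norm estimate on $w$ into a dual-norm estimate. Concretely, $|t^\star| = |\langle w, \Sigma w_0\rangle| / \|w_0\|_\Sigma^2 \le \|w\|_\star \|\Sigma w_0\| / \|w_0\|_\Sigma^2 = \|w\|_\star r_1 / \|w_0\|_\Sigma$ by Hölder's inequality (pairing $w$ against $\Sigma w_0$ via the norm/dual-norm). Plugging this in, $r\eta_0\|w'\|_\star = r\eta_0 |t^\star|\|w_0\|_\star \le r\eta_0 (\|w\|_\star r_1/\|w_0\|_\Sigma)(\|w_0\|_\Sigma/r_0) = r \eta_0 r_1 \|w\|_\star / r_0 = r\eta_0^2 \|w\|_\star$. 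Hmm, that gives $\eta_0^2$, not the clean bound; I will instead need a sharper splitting — decompose $t^\star w_0 = w_0 + (w' - w_0)$ and bound $\|w'\|_\star \le \|w_0\|_\star + \|w' - w_0\|_\star$, handling the $w' - w_0$ piece (which is colinear with $w_0$, hence its dual norm is a scalar multiple of $\|w_0\|_\star$ controlled by $\|w'-w_0\|_\Sigma \le \|w-w_0\|_\Sigma$ via the ratio $\|w_0\|_\star/\|w_0\|_\Sigma = 1/r_0$) and bounding $r\eta_0\|w_0\|_\star = r\eta_0/r_0 \cdot \|w_0\|_\Sigma = (r/r_1)\cdot \eta_0^2 \cdot (r_0/r_0)\|w_0\|_\Sigma$. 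I expect that the honest statement uses $r\eta_0 = r r_1/r_0$ so that $r\eta_0\|w_0\|_\star = r r_1/r_0 \cdot \|w_0\|_\Sigma/r_0$; matching against $K_{shrink}$ evaluated at $w_0$ itself (which contributes $r\eta_0\|w_0\|_\star$) forces the bookkeeping, and the cross term $r\|w\|_\star$ absorbs the deviation via $r\eta_0\|w'-w_0\|_\star \le r\eta_0 \|w'-w_0\|_\Sigma/r_0$ — I will need $\eta_0/r_0 \cdot \|w'-w_0\|_\Sigma \lesssim$ something available, and here the triangle-type estimate $\|w\|_\star \ge \|w_0\|_\star - \|w-w_0\|_\star \ge \|w_0\|_\star - \|w-w_0\|_\Sigma \cdot (\eta_0/r_0)$-type reverse bound closes the loop.

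\textbf{Main obstacle.} The crux is the dual-norm bookkeeping in the lower bound: translating the $\Sigma$-geometry (in which $\mathcal W_\epsilon$ is a ball and the projection onto $\langle w_0\rangle$ behaves well) into the dual norm $\|\cdot\|_\star$ that appears in $K$, using only the three scalars $r_0 = \|w_0\|_\Sigma/\|w_0\|_\star$, $r_1 = \|\Sigma w_0\|/\|w_0\|_\Sigma$, $\eta_0 = r_1/r_0$, and the Hölder inequality $|\langle u, v\rangle| \le \|u\|_\star \|v\|$ (equivalently $\ge$ with a witness). I expect the chord parameter $t^\star$ should be chosen as the $\Sigma$-projection coefficient, and the factor $\eta_0$ appears precisely because converting $\langle w, \Sigma w_0\rangle$ into a dual-norm bound loses the ratio $\|\Sigma w_0\|\cdot\|w_0\|_\star/\|w_0\|_\Sigma^2 = r_1/r_0 = \eta_0$. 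Getting the constant to be exactly $1$ (rather than an absolute constant) — as the proposition claims — will require being careful that the projection simultaneously does not increase $\|\cdot\|_\Sigma$ (automatic by Pythagoras) and increases the effective $r$ by at most $\eta_0$ (the Hölder step), with no slack; I would verify this by writing $K(t^\star w_0, r\eta_0) = |t^\star|\,\|w_0 - w_0/t^\star \cdot \mathbf{1}\|$... i.e. directly as $\big|\,|t^\star|-1\,\big|\,\|w_0\|_\Sigma \cdot(\text{sign bookkeeping}) + r\eta_0|t^\star|\|w_0\|_\star$ and comparing term-by-term with $K(w,r) = \|w-w_0\|_\Sigma + r\|w\|_\star$, using $\|w-w_0\|_\Sigma \ge \|t^\star w_0 - w_0\|_\Sigma$ and $\|w\|_\star \ge |\langle w, \Sigma w_0\rangle|/(\|\Sigma w_0\|\,\ldots)$ — I would fill in the exact inequalities in the detailed proof in Section \ref{sec:lousyproofdetails}.
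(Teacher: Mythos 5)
Your approach is the same as the paper's: project $w$ onto the chord $\langle w_0\rangle$ in the $\Sigma$-geometry via $w' = P_{C,\Sigma}(w) = t^\star w_0$ with $t^\star = w^\top\Sigma w_0/\|w_0\|_\Sigma^2$, use non-expansiveness/Pythagoras to get $\|w'-w_0\|_\Sigma\le\|w-w_0\|_\Sigma$ (hence $w'\in\mathcal W_\epsilon$), and use the duality pairing $|w^\top\Sigma w_0|\le\|w\|_\star\,\|\Sigma w_0\|$ to control the dual-norm term. Your Hölder computation is exactly the paper's and is correct: it yields $\|P_{C,\Sigma}(w)\|_\star\le\eta_0\,\|w\|_\star$, i.e. the projection has $\|\cdot\|_\star$-operator norm at most $\eta_0$. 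The upper bound $K_{opt}(r,\epsilon)\le K_{shrink}(r,\epsilon)$ is indeed immediate.

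However, your write-up does not close the lower bound, and the rescue you sketch (splitting $t^\star w_0 = w_0 + (w'-w_0)$ and running triangle inequalities) is a dead end. The reason you land on $\eta_0^2$ is that you are aiming at the inequality as literally printed, $K_{shrink}(r\eta_0,\epsilon)\le K_{opt}(r,\epsilon)$, which cannot be what is meant: since $K_{shrink}$ is nondecreasing in its first argument and $\eta_0\ge 1$, that chain would force $K_{shrink}(r\eta_0,\epsilon)=K_{opt}(r,\epsilon)=K_{shrink}(r,\epsilon)$ for all $r$. The statement the paper actually proves (and the one used downstream, since the lower bound in the well-conditioned theorem involves $H(r/r_1,\epsilon)$ and $r/r_1=(r/\eta_0)/r_0$) is $K_{shrink}(r/\eta_0,\epsilon)\lesssim K_{opt}(r,\epsilon)$; the $r\eta_0$ in the displayed proposition is a typo for $r/\eta_0$. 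With the correct target your own estimate finishes the proof in one line: for $w\in\mathcal W_\epsilon$,
\begin{equation*}
\|P_{C,\Sigma}(w)-w_0\|_\Sigma + \frac{r}{\eta_0}\,\|P_{C,\Sigma}(w)\|_\star \;\le\; \|w-w_0\|_\Sigma + \frac{r}{\eta_0}\cdot\eta_0\,\|w\|_\star \;=\; \|w-w_0\|_\Sigma + r\,\|w\|_\star,
\end{equation*}
so $K(P_{C,\Sigma}(w),r/\eta_0)\lesssim K(w,r)$ and, taking infima, $K_{shrink}(r/\eta_0,\epsilon)\lesssim K_{opt}(r,\epsilon)$. (The constant is exactly $1$ if $K$ denotes the sum $\|w-w_0\|_\Sigma+r\|w\|_\star$, and only an absolute constant if, as in the paper's appendix, $K$ denotes the sum of squares, which is compared to the square of the sum via $\asymp$.) In short: right construction, right lemmas, but you should fix the direction of the rescaling rather than chase the extra factor of $\eta_0$.
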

This auxiliary result, which is the main component of the proof of Theorem \ref{thm:lousy}, is proved in Section \ref{sec:lousyproofdetails}.

\paragraph{Step 3: Computing the Value of $K_{shrink}(r,\epsilon)$.} To complete the proof of Theorem \ref{thm:lousy}, it remains to show that 
the proxy $K_{shrink}(r,\epsilon)$ equals $H(r/r_0,\epsilon)^2$ up to within multiplicative absolute constants. The proof of Theorem \ref{thm:lousy} would then follow upon plugging such estimates into the bounds given in Proposition \ref{prop:keyprop}.
\begin{restatable}{prop}{shrinky}
For any $r,\epsilon \ge 0$, it holds that 
\begin{eqnarray}
\dfrac{K_{shrink}(r,\epsilon)}{\|w_0\|_\Sigma^2 }  \asymp \frac{K(t_{opt}w_0,r)}{\|w_0\|_\Sigma^2} \asymp H(\frac{r}{r_0},\epsilon)^2,
\end{eqnarray}
where where the function $H$ is as defined in \eqref{eq:H}, and $r_0$ is the scalar defined in \eqref{eq:rOeta0}, and $t_{opt}=t_{opt}(r/r_0,\epsilon) \in [0,1]$ is the optimal
and where the function $T$ is as defined in \eqref{eq:T}.
\label{prop:shrinky}
\end{restatable}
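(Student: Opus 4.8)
\textbf{Proof strategy for Proposition \ref{prop:shrinky}.}

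The plan is to reduce the computation of $K_{shrink}(r,\epsilon)$ to an explicit one-dimensional optimization problem and then evaluate that problem by a case analysis on the regime of $r/r_0$ and on whether the accuracy constraint is active. First I would parametrize the chord: any $w \in \langle w_0\rangle$ is of the form $w = t w_0$ for $t \in \mathbb R$, so that $\|w-w_0\|_\Sigma = |1-t|\,\|w_0\|_\Sigma$ and $\|w\|_\star = |t|\,\|w_0\|_\star$. Hence $K(tw_0,r) = |1-t|\,\|w_0\|_\Sigma + r|t|\,\|w_0\|_\star = \|w_0\|_\Sigma\left(|1-t| + (r/r_0)|t|\right)$, using the definition $r_0 = \|w_0\|_\Sigma/\|w_0\|_\star$ from \eqref{eq:rOeta0}. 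The constraint $w \in \mathcal W_\epsilon$ reads $|1-t|\,\|w_0\|_\Sigma \le \epsilon\|w_0\|_\Sigma$, i.e. $t \in [1-\epsilon, 1+\epsilon]$ (intersected with $t\ge 0$, which is automatic for the minimizer since negative $t$ only increases both terms). So after dividing by $\|w_0\|_\Sigma$ the problem becomes
\begin{eqnarray}
\frac{K_{shrink}(r,\epsilon)}{\|w_0\|_\Sigma} = \min_{t \in [1-\epsilon,\,1+\epsilon]} \left( |1-t| + (r/r_0)\,t \right) =: T(r/r_0,\epsilon),
\end{eqnarray}
which is where the auxiliary function $T$ referenced in the statement enters.

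Next I would solve this piecewise-linear program explicitly. Writing $\rho := r/r_0$ and $\delta := \min(1,\epsilon)$, the objective $g(t) := |1-t| + \rho t$ is convex and piecewise linear with a kink at $t=1$: on $[1-\delta,1]$ it equals $(1-t) + \rho t = 1 - (1-\rho)t$, and on $[1,1+\delta]$ it equals $(t-1)+\rho t = (1+\rho)t - 1$. If $\rho \le 1$ the first piece is non-increasing, so the minimum is attained at $t=1$, giving value $\rho$; this recovers $H(\rho,\epsilon) = \rho$ for $\rho \le 1$ (note $\rho \ge 1$ is the only place where $\epsilon$ matters). If $\rho > 1$, the first piece is strictly increasing and the second piece is increasing, so the minimizer is the left endpoint $t_{opt} = 1-\delta$, with value $g(1-\delta) = \delta + \rho(1-\delta) = \delta + (1-\delta)\rho = H(\rho,\epsilon)$, by the definition of $H$ in \eqref{eq:H}. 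In both regimes $t_{opt} \in [1-\delta,1] \subseteq [0,1]$, as claimed. Since the objective here is exactly $K(t_{opt}w_0,r)/\|w_0\|_\Sigma$ with an equality, the two "$\asymp$"s in the statement are actually equalities up to squaring: $K_{shrink}(r,\epsilon)^2/\|w_0\|_\Sigma^2 = K(t_{opt}w_0,r)^2/\|w_0\|_\Sigma^2 = H(r/r_0,\epsilon)^2$. (The statement as written has a minor indexing slip between $K_{shrink}$ and its square, and between $H$ and $T$; the content is the identity $T(\rho,\epsilon) = H(\rho,\epsilon)$ just established.)

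The only genuine subtlety — and the step I would be most careful about — is justifying that one may restrict $t$ to the positive part of the interval, i.e. that allowing $t < 0$ (when $\epsilon > 1$, so $\mathcal W_\epsilon \cap \langle w_0\rangle$ extends past the origin) cannot help; this is immediate since for $t<0$ both $|1-t| = 1-t > 1$ and $\rho t < 0$... wait, one must note $\rho t$ is negative there, so a tiny extra argument is needed: on $t \le 0$ we have $g(t) = (1-t) + \rho t = 1 + (\rho-1)(-(-t))$, hmm — cleanly, $g(t) = 1 - (1-\rho)t$ on $t\le 1$, which for $\rho<1$ is minimized over $t\le 1$ at the largest admissible $t$, and for $\rho \ge 1$ is minimized at the smallest, but then the $t\ge 1$ branch dominates the comparison; in all cases checking the finitely many breakpoints $\{1-\epsilon, 0, 1, 1+\epsilon\}\cap\text{(feasible set)}$ settles it. Everything else is routine piecewise-linear optimization, so the proposition follows by assembling these cases and invoking the definitions \eqref{eq:rOeta0} and \eqref{eq:H}.
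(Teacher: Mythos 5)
Your proposal is correct and follows essentially the same route as the paper: parametrize the chord as $w=tw_0$, factor out $\|w_0\|_\Sigma$ so that the ratio $r/r_0$ appears, and reduce to the one-dimensional problem $\inf_{|t-1|\le\epsilon}|t-1|+(r/r_0)|t|$, which the paper evaluates in Lemma \ref{lm:thorny} via the substitution $t=1-\sqrt{u}$ while you do a direct piecewise-linear case analysis --- an immaterial difference. Your handling of the $t<0$ branch (and your remark that the $\asymp$'s are really equalities modulo the squared-vs-unsquared convention for $K$) is consistent with the paper.
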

Comparing Propositions \ref{prop:keyprop} and \ref{prop:shrinky}, it becomes clear how the function $H$ and $T$ enter the bounds in Theorem \ref{thm:lousy}.

\section{Details of the Proof of Theorem \ref{thm:lousy}}
\label{sec:lousyproofdetails}
\lousy*
The proof was sketched in Section \ref{subsec:lousyproof} with the help of auxiliary propositions, namely Proposition \ref{prop:keyprop} and \ref{prop:shrinky}. Here we just need to provide the proofs for these propositions.

\subsection{Proof of Proposition \ref{prop:shrinky}}
\shrinky*
\begin{proof}
Define an auxiliary function $t_{opt}:\mathbb R_+^2 \to \mathbb R_+$ by
\begin{eqnarray}
    t_{opt}(r,\epsilon) := \begin{cases}1,&\mbox{ if } 0 \le r < 1,\\
   1-\delta,&\mbox{ if }r \ge 1,
     \end{cases}
     \label{eq:T}
\end{eqnarray}
with $\delta=\delta(\epsilon) := \min(1,\epsilon)$ as before.
Also define $K_{shrink}(r,\epsilon)$ by
\begin{eqnarray}
 K_{shrink}(r,\epsilon) :=  \inf_{w \in \mathcal W_\epsilon \cap \langle w_0\rangle}K(w,r).
\end{eqnarray}
By definition of the set $\mathcal W_\epsilon$, note that $w \in \mathcal W_\epsilon \cap \langle w_0\rangle $ iff $w=tw_0$ for some $t \in \mathbb R$ such that $|t-1| \le \epsilon$. Thus, noting that $K(w,r) := \|w-w_0\|_\Sigma^2 + r^2\|w\|_\star^2 \asymp (\|w-w_0\|_\Sigma + r\|w\|_\star)^2$, one computes
\begin{equation*}
    \begin{split}
    \sqrt{K_{shrink}(r,\epsilon)} &:= \inf_{w \in \mathcal W_\epsilon \cap \langle w_0\rangle } \sqrt{K(w,r)}\\
    &\asymp \inf_{|t-1| \le \epsilon} 
 \|w_0\|_\Sigma|t-1| + r\|w_0\|_\star |t|\\
    & = \|w_0\|_\Sigma \cdot \inf_{|t-1| \le \epsilon} |t-1| + \frac{r\|w_0\|_\star}{\|w_0\|_\Sigma}|t|\\
    & = \|w_0\|_\Sigma \cdot \inf_{|t-1| \le \epsilon} |t-1| + \frac{r}{r_0}|t|\\
    & = \|w_0\|_\Sigma \cdot H(r/r_0,\epsilon),
    \end{split}
\end{equation*}
where the last line is thanks to Lemma \ref{lm:thorny}.
\end{proof}

\subsection{Proof of Proposition \ref{prop:keyprop}}
\keyprop*
\begin{proof}
Let $C = \langle w_0\rangle:=\{t w_0 \mid t \in \mathbb R\} \subseteq \mathbb R^d$ be the one-dimensional subspace spanned by the generative model $w_0$, and let $P_{C,\Sigma}:\mathbb R^d \to C$ be the projection operator onto $C$, w.r.t the the Mahalanobis norm $\|\cdot\|_\Sigma$. Then, by non-expansiveness of $P_{C,\Sigma}$ (see \cite{BauschkeCombettesConvexAnalysis2011}, for example), one has for any $w \in \mathbb R^d$,
$$
\|P_{C,\Sigma}(w)-w_0\|_\Sigma = \|P_{C,\Sigma}(w)-P_{C,\Sigma}(w_0)\|_\Sigma \le \|w-w_0\|_\Sigma. 
$$
Now, for any $w \in \mathbb R^d$, we have $P_{C,\Sigma}(w) = t w_0$, where $t \in \mathbb R$ minimizes $f(t):=\|w-t w_0\|_\Sigma^2$. Now, $f'(t) = 2w_0^\top \Sigma (t w_0 - w) = 2(\|w_0\|_\Sigma ^2t - w^\top \Sigma w_0)$. Thus, the optimal $t$ is $w^\top \Sigma w_0 / \|w_0\|_\Sigma^2$, and so
\begin{eqnarray}
P_{C,\Sigma}(w) = \frac{w^\top\Sigma w_0}{\|w_0\|_\Sigma^2}w_0.
\end{eqnarray}

Let us now bound the operator norm of $P_{C,\Sigma}$ w.r.t the dual norm $\|\cdot\|_\star$. For any $w \in \mathbb R^d$, one has
\begin{eqnarray*}
\begin{split}
\frac{\|P_{C,\Sigma}(w)\|_\star}{\|w\|_\star} = \frac{\|(w^\top \Sigma w_0)w_0\|_\star}{\|w\|_\star\|w_0\|_\Sigma^2} &= \frac{|w^\top \Sigma w_0|}{\|w\|_\star\|w_0\|_\Sigma}\frac{\|w_0\|_\star}{\|w_0\|_\Sigma}\le \frac{\|\Sigma w_0\|}{\|w_0\|_\Sigma}\frac{\|w_0\|_\star}{\|w_0\|_\Sigma} =:\eta_0,
\end{split}
\end{eqnarray*}
where the second line is an application of the Cauchy-Schwarz inequality. We deduce that
\begin{eqnarray*}
\begin{split}
\sqrt{K(P_{C,\Sigma}(w),r)}
&\asymp \|P_{C,\Sigma}(w)-w_0\|_\Sigma + r\|P_{C,\Sigma}(w)\|_\star\\
&\le \|w-w_0\|_\Sigma + r\eta_0 \|w\|_\star \asymp \sqrt{K(w,r\eta_0)}.
\end{split}
\end{eqnarray*}
Thus, $K(w,r) \gtrsim K(P_{C,\Sigma}(w), r/\eta_0)$. On the other hand, if $w \in \mathcal W_\epsilon$, then the non-expansiveness of $P_{C,\Sigma}$ (again!) gives
$$
\|P_{C,\Sigma}(w) - w_0\|_\Sigma = \|P_{C,\Sigma}(w) - P_{C,\Sigma}(w_0)\|_\Sigma \le \|w-w_0\|_\Sigma \le \epsilon\|w_0\|_\Sigma,
$$
that is, $P_{C,\Sigma}(w) \in \mathcal W_\epsilon$.
Putting things together yields: for any $w \in \mathcal W_\epsilon$, there exists $z \in \mathcal W_\epsilon \cap C$ such that $K(z,r/\eta_0) \le K(w,r)$. Therefore, 
\begin{eqnarray*}
\begin{split}
K_{opt}(r,\epsilon)  &:= \inf_{w \in \mathcal W_\epsilon} K(w,r)
\gtrsim \inf_{z \in \mathcal W_\epsilon \cap C} K(z,r/\eta_0) =: K_{shrink}(r/\eta_0,\epsilon).
\end{split}
\end{eqnarray*}
This establishes the lower-bound Proposition \ref{prop:keyprop}.

As for the upper-bound, one computes
\begin{eqnarray*}
\begin{split}
K_{opt}(r,\epsilon) &:= \inf_{w \in \mathcal W_\epsilon} K(w,r) \le \inf_{w \in \mathcal W_\epsilon \cap C} K(w,r) =: K_{shrink}(r,\epsilon),
\end{split}
\end{eqnarray*}
as claimed.
\end{proof}

\section{Technical Proofs}
\subsection{Proof of Lemma \ref{lm:analytic}: Analytic Formula for Adversarial Risk}
\analytic*
For the proof, we will need the following auxiliary lemma.
\begin{restatable}{lm}{}
For any $x,w \in \mathbb R^d$, $r \ge 0$, and $y \in \mathbb R$, the following identity holds
\begin{eqnarray}
    \sup_{\|\delta\| \le r}|(x+\delta)^\top w - y| = |x^\top w - y| + r\|w\|_\star.
\end{eqnarray}
\label{lm:wellknown}
\end{restatable}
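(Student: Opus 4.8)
The statement to prove is Lemma~\ref{lm:wellknown}: for any $x,w\in\mathbb{R}^d$, $r\ge 0$, and $y\in\mathbb{R}$, one has $\sup_{\|\delta\|\le r}|(x+\delta)^\top w - y| = |x^\top w - y| + r\|w\|_\star$. The plan is to show the two inequalities separately, the $\ge$ direction by exhibiting a near-optimal $\delta$, and the $\le$ direction by the triangle inequality together with the definition of the dual norm.

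\textbf{Upper bound.} For any $\delta$ with $\|\delta\|\le r$, write $(x+\delta)^\top w - y = (x^\top w - y) + \delta^\top w$, so by the triangle inequality $|(x+\delta)^\top w - y| \le |x^\top w - y| + |\delta^\top w|$. By definition of the dual norm, $|\delta^\top w| \le \|\delta\|\,\|w\|_\star \le r\|w\|_\star$ (using that $\|\cdot\|_\star$ controls both signs, since $|\delta^\top w| = \max(\delta^\top w, (-\delta)^\top w)$ and $\|-\delta\| = \|\delta\|$). Taking the supremum over admissible $\delta$ gives $\sup_{\|\delta\|\le r}|(x+\delta)^\top w - y| \le |x^\top w - y| + r\|w\|_\star$.

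\textbf{Lower bound.} Pick $\delta_0$ with $\|\delta_0\|\le 1$ achieving (or approaching, if the sup in the definition of $\|w\|_\star$ is not attained) $\delta_0^\top w = \|w\|_\star$; such $\delta_0$ exists since the unit ball is compact and $\delta\mapsto\delta^\top w$ is continuous. Set $\delta := s\, r\, \delta_0$ where $s := \operatorname{sign}(x^\top w - y) \in \{-1,+1\}$ (taking $s=1$ if $x^\top w - y = 0$). Then $\|\delta\| = r\|\delta_0\| \le r$, and $(x+\delta)^\top w - y = (x^\top w - y) + s\, r\, \|w\|_\star$, whose absolute value equals $|x^\top w - y| + r\|w\|_\star$ because the two summands $x^\top w - y$ and $s\,r\|w\|_\star$ have the same sign (or one is zero). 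Hence the supremum is at least $|x^\top w - y| + r\|w\|_\star$, and combined with the upper bound this proves the claimed identity.

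\textbf{Anticipated obstacle.} There is no serious obstacle; the only mild care needed is the degenerate case $x^\top w - y = 0$ (handled by choosing $s=1$) and, if one does not want to invoke compactness, replacing the maximizer $\delta_0$ by a sequence $\delta_0^{(n)}$ with $(\delta_0^{(n)})^\top w \to \|w\|_\star$ and passing to the limit. Once Lemma~\ref{lm:wellknown} is established, Lemma~\ref{lm:analytic} follows by squaring, expanding $\bigl(|x^\top w - y| + r\|w\|_\star\bigr)^2 = (x^\top w - y)^2 + r^2\|w\|_\star^2 + 2r\|w\|_\star|x^\top w - y|$, taking expectations over $(x,y)\sim P$, and using $\mathbb{E}[(x^\top w - y)^2] = E(w)$ together with $\mathbb{E}\,|x^\top w - y| = \sqrt{2/\pi}\,\sqrt{E(w)}$ (the mean absolute value of the centered Gaussian $x^\top w - y$, whose variance is $E(w)$).
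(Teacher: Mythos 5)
Your proof is correct, and it takes a genuinely different route from the paper's. The paper proves the lemma by working directly with the \emph{squared} residual $h(x,y,\delta) = ((x+\delta)^\top w - y)^2$: it splits off the term $g(x,y,\delta) = (\delta^\top w)^2 - 2\eta(x,y)\,\delta^\top w$ and dualizes the supremum using the fact that $z \mapsto z^2/2$ is its own Fenchel--Legendre conjugate, eventually arriving at $\sup_{\|\delta\|\le r} h/2 = \left(|\eta(x,y)| + r\|w\|_\star\right)^2/2$, from which the stated absolute-value identity follows by taking square roots. You instead prove the absolute-value identity head-on: the upper bound by the triangle inequality together with $|\delta^\top w| \le \|\delta\|\,\|w\|_\star$, and the lower bound by exhibiting the explicit worst-case perturbation $\delta = \operatorname{sign}(x^\top w - y)\, r\, \delta_0$, where $\delta_0$ is a unit-ball maximizer of $\delta \mapsto \delta^\top w$ (which exists by compactness in finite dimension). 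Your argument is more elementary and has the pedagogical advantage of making the structure of the optimal attack visible; the paper's conjugate-function machinery is heavier than needed here but is the kind of argument that transfers to more general convex losses where no closed-form attack is available. Your handling of the degenerate case $x^\top w - y = 0$ and your concluding remark on how Lemma~\ref{lm:analytic} follows by squaring and taking Gaussian expectations both match the paper exactly.
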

\begin{proof}
Note that $h(x,y,\delta)/2 = \eta(x,y)^2/2 + g(x,y,\delta)/2$, where $g(x,y,\delta) := w(\delta)^2-2\eta(x,y)w(\delta)$, and $\eta(x,y) := w(x) - y$, and $w(x) := x^\top w$. Now, because the real function $z \mapsto z^2/2$ is its own Fenchel-Legendre conjugate, we can "dualize" our problem as follows
\begin{eqnarray*}
\begin{split}
\sup_{\|\delta\| \le r}g(x,y,\delta)/2 &= \sup_{\|\delta\|_\star \le r}-\eta(x,y)w(\delta) + \sup_{z \in \mathbb R} zw(\delta) - z^2/2\\
&=\sup_{z \in \mathbb R}-z^2/2 + \sup_{\|\delta\| \le r}(z-\eta(x,y))w(\delta)\\
&= \sup_{z \in \mathbb R}r\|w\|_\star|z-\eta(x,y)| - z^2/2\\
&= \sup_{s \in \{\pm 1\}}\sup_{z \in \mathbb R}rs(z-\eta(x,y))-z^2/2\\
&= \sup_{s \in \{\pm 1\}}-r\|w\|_\star s\eta(x,y)+\sup_{z \in \mathbb R}r\|w\|_\star sz-z^2/2\\
&= \sup_{s \in \{\pm 1\}}-r\|w\|_\star s\eta(x,y) + r^2\|w\|_\star^2 /2\\
&= r\|w\|_\star |\eta(x,y)|+r^2\|w\|^2_\star /2.
\end{split}
\end{eqnarray*}
We deduce that
\begin{align*}
\sup_{\|\delta\| \le r}h(x,y,\delta)/2 &= \eta(x,y)^2/2+r\|w\|_\star |\eta(x,y)|+r^2\|w\|_\star^2 /2 = (|\eta(x,y)|+r\|w\|_\star)^2/2,
\end{align*}
from which the result follows.
\end{proof}

\begin{proof}[Proof of Lemma \ref{lm:analytic}]
Indeed, thanks to Lemma \ref{lm:wellknown},
one has
\begin{eqnarray}
    E(w,r) := \mathbb E\sup_{\|\delta\|\le r} h(x,y,\delta) = \mathbb E[(\eta(x,y)+r\|w\|_\star)^2],
\end{eqnarray}
where the functions $h$ and $\eta$ are as in the proof of Lemma \ref{lm:wellknown}. The result then follows upon noting that, for $x \sim N(0,\Sigma)$ and $y|x \sim N(x^\top w_0,\sigma^2)$,
\begin{eqnarray*}
\begin{split}
\mathbb E[\eta(x,y)^2] &= \mathbb E[(x^\top w - y)^2] = E(w) = \|w-w_0\|_\Sigma^2 + \sigma^2,\\
\mathbb E|\eta(x,y)| &= \mathbb E_x|x^\top w - y| = c_0\sqrt{\|w-w_0\|_\Sigma^2+\sigma^2} = c_0\sqrt{E(w)}, 
\end{split}
\end{eqnarray*}
where $c_0 := \sqrt{2/\pi}$ as in the lemma.
\end{proof}

\subsection{Proof of Lemma \ref{lm:proxy}}
\proxy*
We will need the following elementary lemma.
\begin{restatable}{lm}{}
\label{lm:abc}
For any $a,b,c \ge 0$ with $c \le 1$, it holds that
\begin{eqnarray}
\begin{split}
(a+b)^2 &\geq  a^2 + b^2 + 2abc \geq \frac{1+c}{2}(a+b)^2,\\
a^2 + b^2 &\leq a^2 + b^2 + 2abc \leq (1+c)(a^2 + b^2).
\end{split}
\end{eqnarray}
\label{lm:blend}
\end{restatable}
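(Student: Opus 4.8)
The final statement to prove is Lemma \ref{lm:abc} (equivalently \ref{lm:blend}): for $a,b,c \ge 0$ with $c \le 1$, one has $(a+b)^2 \ge a^2+b^2+2abc \ge \tfrac{1+c}{2}(a+b)^2$ and $a^2+b^2 \le a^2+b^2+2abc \le (1+c)(a^2+b^2)$.

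The plan is to reduce everything to the elementary inequality $2ab \le a^2+b^2$ (AM--GM) and its companion $2ab \ge 0$, both valid since $a,b \ge 0$. For the first chain of inequalities: the left inequality $(a+b)^2 = a^2+b^2+2ab \ge a^2+b^2+2abc$ is immediate because $c \le 1$ and $ab \ge 0$, so $2abc \le 2ab$. For the right inequality, I would write $a^2+b^2+2abc = (a+b)^2 - 2ab(1-c)$ and then use $2ab \le a^2+b^2 \le \tfrac{1}{2}(a+b)^2 + \tfrac{1}{2}(a-b)^2$... actually more simply $2ab \le \tfrac12 (a+b)^2$ fails; instead note $2ab \le a^2 + b^2$ and $(a+b)^2 = a^2+b^2+2ab \le 2(a^2+b^2)$, hence $2ab \le (a+b)^2 - 2ab$, giving $4ab \le (a+b)^2$, i.e. $2ab \le \tfrac12(a+b)^2$. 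Then $a^2+b^2+2abc = (a+b)^2 - 2ab(1-c) \ge (a+b)^2 - \tfrac{1-c}{2}(a+b)^2 = \tfrac{1+c}{2}(a+b)^2$, using $1-c \ge 0$. For the second chain: the left inequality is just $2abc \ge 0$; the right inequality $a^2+b^2+2abc \le (1+c)(a^2+b^2)$ is equivalent to $2abc \le c(a^2+b^2)$, i.e. $c(2ab - a^2 - b^2) \le 0$, which holds since $c \ge 0$ and $2ab \le a^2+b^2$.

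I expect no real obstacle here — this is a routine exercise in bounding via AM--GM; the only mild care needed is getting the constant $\tfrac12$ in $2ab \le \tfrac12(a+b)^2$ correct (it follows from $0 \le (a-b)^2 = (a+b)^2 - 4ab$) and keeping track of the direction of each inequality when $c$ ranges in $[0,1]$. I would present the four inequalities separately, each as a one-line consequence of $2ab \le a^2+b^2$, $4ab \le (a+b)^2$, $ab \ge 0$, and $0 \le c \le 1$.

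\begin{proof}[Proof sketch of Lemma \ref{lm:abc}]
All four inequalities follow from the elementary facts $0 \le 2ab \le a^2+b^2$ and $0 \le 4ab \le (a+b)^2$ (the latter from $(a-b)^2 \ge 0$), valid for $a,b \ge 0$, together with $0 \le c \le 1$. For the first chain, since $ab \ge 0$ and $c \le 1$ we get $2abc \le 2ab$, hence $a^2+b^2+2abc \le a^2+b^2+2ab = (a+b)^2$; and writing $a^2+b^2+2abc = (a+b)^2 - 2ab(1-c)$ and using $2ab \le \tfrac12(a+b)^2$ with $1-c \ge 0$ gives $a^2+b^2+2abc \ge (a+b)^2 - \tfrac{1-c}{2}(a+b)^2 = \tfrac{1+c}{2}(a+b)^2$. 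For the second chain, $2abc \ge 0$ gives the left inequality, while $2ab \le a^2+b^2$ and $c \ge 0$ give $2abc \le c(a^2+b^2)$, hence $a^2+b^2+2abc \le (1+c)(a^2+b^2)$.
\end{proof}
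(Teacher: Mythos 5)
Your proof is correct. It takes a slightly different route from the paper's: the paper sets $t := b/a$ and studies the one-variable ratios $h(a,b,c)/(a+b)^2$ and $h(a,b,c)/(a^2+b^2)$ (where $h(a,b,c) := a^2+b^2+2abc$), arguing that each is extremized at $t=1$, whereas you avoid any optimization by decomposing $a^2+b^2+2abc = (a+b)^2 - 2ab(1-c)$ and invoking the elementary bounds $0 \le 2ab \le a^2+b^2$ and $4ab \le (a+b)^2$ directly. The two arguments rest on the same underlying fact (extremality at $a=b$, i.e.\ $(a-b)^2 \ge 0$), but yours is arguably cleaner: the paper's claim that the ratio is minimized at $t=1$ is asserted rather than fully justified, while your chain of inequalities is self-contained and each step is a one-liner. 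The only cosmetic blemish is the mid-paragraph detour where you first say $2ab \le \tfrac12(a+b)^2$ ``fails'' and then correctly derive it from $(a-b)^2 \ge 0$; the final proof sketch is clean and you should simply delete that false start.
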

\begin{proof}
Let $h(a,b,c):=a^2+b^2+2abc$.
For the LHS, it suffices to observe that $h(a,b,c) \le h(a,b,1) = (a+b)^2$. For the RHS, WLOG assume that $a \ne 0$, and set $t:=b/a \ge 0$. Observe
$$
1 \ge \dfrac{h(a,b,c)}{(a+b)^2} = \dfrac{1+ t^2 + 2ct}{(1+t)^2},
$$
and the RHS is minimized when $t=1$, because $0 \le c \le 1$ by assumption. We deduce that $h(a,b,c)/(a+b)^2 \geq (1+1+2c)/(1+1)^2 = (1+c)/2$. This proves the first line of inequalities in the lemma.

On the other hand, observe that
\begin{eqnarray}
    1 \le \frac{h(a,b,c)}{a^2 + b^2} = \frac{1+t^2 + 2ct}{1+t^2} = 1 + \frac{2ct}{1+t^2}.
\end{eqnarray}
Clearly, the RHS attains a maximum value of $1+c$ at $t=1$. This proves the second line of inequalities in the lemma.
\end{proof}

We are now ready to proof Lemma \ref{lm:proxy}.
\begin{proof}[Proof of Lemma \ref{lm:proxy}]
From Lemma \ref{lm:abc} above applied with $c=c_0 = \sqrt{2/\pi}$, we deduce that Lemma \ref{lm:proxy} holds with $c_1 = 2/(1+c_0) \approx 1.11$ and $c_2 = 1+c_0 \approx 1.8$.
\end{proof}
\subsection{Proof of Lemma \ref{lm:bigeps}}
\bigeps*
\begin{proof}
Recall that $E_{opt}(r) := \inf_{w \in \mathbb R^d}E(w,r)$ and $E_{opt}(r,\epsilon) := \inf_{w \in \mathcal W_\epsilon}E(w,r)$, where
$$
\mathcal W_\epsilon := \{w \in \mathbb R^d \mid \|w-w_0\|_\Sigma \le \epsilon\|w_0\|_\Sigma\}.
$$
Observe that, if $w \in \mathbb R^d \setminus \mathcal W_1$, then $\|w-w_0\|_\Sigma > \|w_0\|_\Sigma$. We deduce that
$$
E(w,r) \ge E(w) = \|w-w_0\|_\Sigma^2 + \sigma^2 > \|w_0\|_\Sigma^2 + \sigma^2 = E(0) = E(0,r).
$$

On the other hand, if $\epsilon \ge 1$, then $0 \in \mathcal W_1 \subseteq \mathcal W_\epsilon$. Combining with the above inequality gives
\begin{eqnarray*}
    \begin{split}
E_{opt}(r,\epsilon) &= \inf_{w \in \mathcal W_\epsilon}E(w,r) = \min\left(\inf_{w \in \mathcal W_1}E(w,r),\inf_{w \in \mathcal W_\epsilon\setminus \mathcal W_1}E(w,r)\right)\\
&= \inf_{w \in \mathcal W_1}E(w,r) =: E_{opt}(r,1).
    \end{split}
\end{eqnarray*}
and the proof is complete.
\end{proof}

\subsection{Proof of Lemma \ref{lm:implicit}}
\implicit*
\begin{proof}
Indeed, thanks to \cite[Lemma 4]{NonconvexSuvrit} and the definition of $w^{prox}(\lambda)$ in \eqref{eq:wt} and $G(\lambda)$ in \eqref{eq:FG}, the function $G$ is increasing on $[0,r^2]$ with minimal value $G(0) = °$ and maximal value $G(r^2) = \epsilon_{FL}(r) \|w_0\|_\Sigma^2$. Thus, if $0 \le \epsilon \le \epsilon_{FL}(r)$, then $0 \le \epsilon^2 \|w_0\|_\Sigma^2 \le G(r^2)$, and so $\epsilon^2\|w_0\|_\Sigma^2$ is in the range of $G$ over $\lambda \in [0,r^2]$.
\end{proof}

\subsection{On the Auxiliary Function $H$}
\begin{restatable}{rmk}{HTproperties}
The following properties of the function $H$ are easily verified
\begin{itemize}
    \item[(i)] $H(r,\epsilon) \ge \delta + (1-\delta)r$ for all $r \ge 1$ and $\epsilon \ge 0$.
    \item[(ii)] $H(r,\epsilon) = H(r,1) = \min(r,1)$ for all $r \ge 0$ and $\epsilon \ge 1$.
    \item[(iii)] $H(\eta r,\epsilon) \ge \eta H(r,\epsilon)$ for all $r,\epsilon,\eta \ge 0$.
\end{itemize}
\end{restatable}
The functions $H$ and $T$ are linked by the following lemma.
\begin{restatable}{lm}{thorny}
For any $r,\epsilon \ge 0$, we have
\begin{eqnarray}
    \inf_{|t-1| \le \epsilon}k(t) = H(r,\epsilon) = k(t_{opt}(r,\epsilon)),
\end{eqnarray}
where  $k:\mathbb R \to \mathbb R$ is the function defined by $k(t):= |t-1|+ r|t|$. 
\label{lm:thorny}
\end{restatable}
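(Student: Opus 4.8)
The plan is to exploit that $k(t) := |t-1| + r|t|$ is a convex, piecewise-affine function of the single scalar $t$, with kinks only at $t=0$ and $t=1$, so minimizing it over the feasible interval $I_\epsilon := [1-\epsilon,\,1+\epsilon]$ reduces to locating its global minimizer on $\mathbb R$ and projecting that point onto $I_\epsilon$. First I would record the three affine pieces: on $[1,\infty)$ one has $k(t) = (1+r)t - 1$ (slope $1+r>0$), on $[0,1]$ one has $k(t) = 1 + (r-1)t$ (slope $r-1$), and on $(-\infty,0]$ one has $k(t) = 1 - (1+r)t$ (slope $-(1+r)<0$). I would also note the elementary fact that the minimizer of a convex function over an interval is the Euclidean projection of any global minimizer onto that interval: if $t^\star$ is a global minimizer and $t^\star \notin I_\epsilon$, then $I_\epsilon$ lies entirely on one side of $t^\star$, and $k$ is monotone on that side pointing towards $t^\star$, so the endpoint of $I_\epsilon$ nearest $t^\star$ is optimal.

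Next I would split into two cases according to the sign of $r-1$. In the case $0 \le r < 1$, the middle piece has negative slope, so $k$ is (strictly) decreasing on $(-\infty,1]$ and increasing on $[1,\infty)$; hence $t=1$ is the unique global minimizer with $k(1)=r$. Since $1\in I_\epsilon$ for every $\epsilon\ge 0$, the constrained infimum equals $r = H(r,\epsilon)$ and is attained at $t=1=t_{opt}(r,\epsilon)$, in agreement with \eqref{eq:T}.

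In the case $r\ge 1$, the function $k$ is non-decreasing on $[0,\infty)$ (slope $r-1\ge 0$, then $1+r>0$) and strictly decreasing on $(-\infty,0]$, so $t=0$ is a global minimizer, with $k(0)=1$. By the projection fact above, the minimizer of $k$ over $I_\epsilon$ is $\mathrm{proj}_{I_\epsilon}(0) = \max(1-\epsilon,\,0)$. Since $\delta = \delta(\epsilon) = \min(1,\epsilon)\in[0,1]$, we have $\max(1-\epsilon,0) = 1-\delta = t_{opt}(r,\epsilon)$, and evaluating gives $k(1-\delta) = |{-\delta}| + r\,|1-\delta| = \delta + (1-\delta)r = H(r,\epsilon)$, using $1-\delta\ge 0$. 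This closes the case and completes the proof.

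I do not anticipate a real obstacle here: the argument is a routine one-dimensional convexity analysis. The only points requiring a little care are the sub-case $\epsilon>1$ within the case $r\ge 1$ (there $I_\epsilon$ straddles $0$, so the projection is $0$ itself, consistent with $\delta=1$, $t_{opt}=0$, $H(r,\epsilon)=1$) and the non-strict monotonicity / possibly non-unique minimizer when $r=1$ — but committing to the specific global minimizer $t=0$ makes the projection step valid regardless, so no separate treatment is needed.
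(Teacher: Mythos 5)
Your proof is correct, and it reaches the same case split ($r<1$ versus $r\ge 1$) and the same optimal points as the paper, but by a somewhat different device. The paper first reduces to $t\ge 0$ by symmetry and then substitutes $t=1-\sqrt{u}$, $u\in[0,1]$, reading off the answer from the monotonicity of $h(u)=\sqrt{u}+r(1-\sqrt{u})$; strictly speaking that substitution only parametrizes $t\in[0,1]$, so the paper implicitly also needs that $k$ is increasing on $[1,\infty)$. Your route — writing out the three affine pieces, invoking convexity, and projecting the global minimizer ($t=1$ when $r<1$, $t=0$ when $r\ge 1$) onto $[1-\epsilon,1+\epsilon]$ — handles all of $\mathbb R$ in one stroke, deals cleanly with the boundary cases $\epsilon>1$ and $r=1$, and directly exhibits the minimizer as $t_{opt}(r,\epsilon)=\max(1-\epsilon,0)=1-\delta$ in the second case, matching \eqref{eq:T}. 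Both arguments are elementary one-dimensional monotonicity analyses; yours is arguably the more transparent and self-contained of the two.
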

\begin{proof}
First notice that $k(-t) \ge k(t)$ and $|-t-1| = t + 1 \ge |t-1|$ if $t \ge 0$. Thus, WLOG we may assume $t \ge 0$ in the optimization problem.  Now, consider the change of variable $t=t(u):=1-\sqrt{u}$, for $u \in [0,1]$, so that $k(t) = h(u) := \sqrt{u} + r(1-\sqrt{u})$. Note that $|t-1| \le \epsilon$ iff $0 \le u \le \delta^2$, where $\delta=\delta(\epsilon) := \min(1,\epsilon)$.

Now, $2 h'(u) = (1-r) / \sqrt{u}$. We deduce that $h$ is non-decreasing if $r \in [0,1)$ and non-increasing if $r \ge 1$. Thus,
\begin{eqnarray*}
    \begin{split}
\inf_{|t-1| \le \epsilon}k(t) = \inf_{0 \le u \le \delta^2}h(u) &
=
\begin{cases}
h(0) = r,&\mbox{ if }0 \le r < 1,\\
h(\delta^2) =  \delta + r(1-\delta),&\mbox{ if } r \ge 1.
\end{cases}\\
&=: H(r,\epsilon),
    \end{split}
\end{eqnarray*}
as claimed.

To conclude the proof, one manually checks that $k(T(r,\epsilon)) = H(r,\delta) = H(r,\epsilon)$.
\end{proof}

\section{Link with Pareto Fronts}
Let us now link our methods to the idea of Pareto Fronts employed in \cite{Javanmard2020PreciseTI}. For any $r \ge 0$, consider the \emph{Pareto Front} $\overline {\mathcal C}_r \subseteq \mathbb R_+^2$ of the standard risk $E(w)$ and the adversarial risk proxy $\overline E(w,r)$, i.e.
\begin{eqnarray}
\overline {\mathcal C}_r = \{(E(w(r,t)),\overline E((w(r,t)))) \mid t \ge 0\},    
\end{eqnarray}
where $w(r,t)$ is the unique minimizer of $L_t(w,r) := tE(w) + \overline E(w,r)$ over $w \in \mathbb R^d$.
\begin{restatable}{prop}{front}
\label{prop:front}
For an attack strength $r \ge 0$ w.r.t to general norm $\|\cdot\|$, it holds that
\begin{align}
\overline{\mathcal C}_r &= \{(\sigma^2 + G(\lambda), \sigma^2 + F(r,\lambda)) \mid \lambda \in [0,r^2]\},\\
\overline{\mathcal C}_r &= \{(\sigma^2 + \epsilon^2 \|w_0\|_\Sigma^2, \overline E(w^{prox}(\lambda_{opt}(r,\epsilon),r)) \mid \epsilon \in [0,1]\}.
\end{align}
\end{restatable}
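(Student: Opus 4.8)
The plan is to unfold all the definitions and recognize that the scalarized objective $L_t(\cdot,r)$ is, up to an affine rescaling, exactly the objective defining the proximal map $w^{prox}$ of \eqref{eq:wt}. Concretely, since $E(w) = \sigma^2 + \|w-w_0\|_\Sigma^2$ and $\overline E(w,r) = \sigma^2 + \|w-w_0\|_\Sigma^2 + r^2\|w\|_\star^2$, one has
\begin{equation*}
L_t(w,r) = (1+t)\bigl(\sigma^2 + \|w-w_0\|_\Sigma^2\bigr) + r^2\|w\|_\star^2 .
\end{equation*}
Dropping the additive constant $(1+t)\sigma^2$ and dividing by $1+t>0$, minimizing $L_t(\cdot,r)$ over $\mathbb R^d$ is the same as minimizing $\|w-w_0\|_\Sigma^2 + \tfrac{r^2}{1+t}\|w\|_\star^2$, i.e. it is the problem defining $w^{prox}(\lambda)$ with $\lambda = r^2/(1+t)$. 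Since $\Sigma$ is positive definite the minimizer is unique, so $w(r,t) = w^{prox}\bigl(r^2/(1+t)\bigr)$. First I would check that, as $t$ runs over $[0,\infty)$ together with the limiting value $t=\infty$ (which gives $w=w_0$), the parameter $\lambda = r^2/(1+t)$ runs over exactly $[0,r^2]$. Plugging $w(r,t) = w^{prox}(\lambda)$ into the definitions \eqref{eq:FG} of $G$ and $F$ then gives $E(w(r,t)) = \sigma^2 + G(\lambda)$ and $\overline E(w(r,t),r) = \sigma^2 + G(\lambda) + r^2\|w^{prox}(\lambda)\|_\star^2 = \sigma^2 + F(r,\lambda)$, which is the first displayed identity.

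For the second identity I would re-parametrize the same curve by the normalized excess standard risk of $w^{prox}(\lambda)$ instead of by $\lambda$. By \cite[Lemma~4]{NonconvexSuvrit} (the monotonicity already invoked in the proof of Lemma \ref{lm:implicit}), $\lambda\mapsto G(\lambda)$ is continuous and strictly increasing on $[0,r^2]$ with $G(0)=0$ and $G(r^2)=\epsilon_{FL}(r)^2\|w_0\|_\Sigma^2$, hence a bijection of $[0,r^2]$ onto $[0,\epsilon_{FL}(r)^2\|w_0\|_\Sigma^2]$. Setting $\epsilon := \sqrt{G(\lambda)}/\|w_0\|_\Sigma$, the defining relation $G(\lambda)=\epsilon^2\|w_0\|_\Sigma^2$ together with the uniqueness assertion of Lemma \ref{lm:implicit} identifies $\lambda$ with $\lambda_{opt}(r,\epsilon)$, and $\epsilon$ ranges over $[0,\epsilon_{FL}(r)]$ as $\lambda$ ranges over $[0,r^2]$. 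Substituting this change of variables into the first identity turns the generic point $(\sigma^2+G(\lambda),\sigma^2+F(r,\lambda))$ into $\bigl(\sigma^2+\epsilon^2\|w_0\|_\Sigma^2,\ \overline E(w^{prox}(\lambda_{opt}(r,\epsilon)),r)\bigr)$, which is the second description, where the parameter effectively ranges over $[0,\epsilon_{FL}(r)]$ (equivalently over $[0,1]$ upon invoking the extension of $\lambda_{opt}(r,\cdot)$ from Lemma \ref{lm:implicit} beyond the free-lunch threshold).

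The computation is mostly bookkeeping, and I expect the two points needing care to be: (i) the identification $w(r,t)=w^{prox}(r^2/(1+t))$, which is what lets one import the explicit solution curve verbatim; and (ii) the endpoints — the scalarization with finite $t\ge 0$ only produces $\lambda\in(0,r^2]$, so writing the set with $\lambda\in[0,r^2]$ requires adjoining the $t\to\infty$ limit $w=w_0$, and the $\epsilon$–range must be pinned down accordingly. The only genuinely external input is the strict monotonicity (hence invertibility) of $G$ from \cite{NonconvexSuvrit}, which is precisely what makes the reparametrization $\lambda\leftrightarrow\epsilon$ legitimate; everything else follows from Lemma \ref{lm:implicit} and the definitions \eqref{eq:wt} and \eqref{eq:FG}.
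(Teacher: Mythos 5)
Your proposal is correct and follows essentially the same route as the paper's proof: the same bijection $t \mapsto \lambda = r^2/(1+t)$ from $[0,\infty]$ onto $[0,r^2]$, the same identification $w(r,t) = w^{prox}(\lambda(r,t))$, and the same reparametrization via $G(\lambda_{opt}(r,\epsilon)) = \epsilon^2\|w_0\|_\Sigma^2$ for the second identity. Your extra care about the $t\to\infty$ endpoint and the effective $\epsilon$-range $[0,\epsilon_{FL}(r)]$ is a sound clarification of details the paper leaves implicit.
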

In \cite{Javanmard2020PreciseTI}, the Pareto front was computed for the case of Euclidean-norm attacks on isotropic features allowing them to obtain explicit tradeoffs in this setting.
\begin{proof}
Consider the bijective map $t \mapsto \lambda(r,t) := r^2/(1+t)$ from $[0,\infty]$ to $[0,r^2]$ and observe that $w(r,t) = w^{prox}(\lambda(r,t))$. The first part of the result then follows from the definition of $\overline{\mathcal C}_r$.

For the second part, observe from the definition of $\lambda_{opt}(r,\epsilon)$ in Lemma \ref{lm:implicit} that $G(\lambda_{opt}(r,\epsilon)) = \epsilon^2\|w_0\|_\Sigma^2$. The result then follows from the first part.
\end{proof}

\section{Spectral Analysis of Euclidean-Norm Attacks}
In the case of Euclidean-norm attacks, it turns out that the functions $F$ and $G$ are completely given in terms of spectral information as we now show. Let $\Sigma = \sum_{k \ge 1} \lambda_k \phi_k\phi_k^\top$ be the eigenvalue-decomposition of the feature covariance matrix $\Sigma$ and $c_k = \phi_k^\top w_0$ be the $k$th alignment coefficient of the generative model $w_0$, so that $w_0 = \sum_{k \ge 1} c_k \phi_k$. We shall occasionally consider the infinite-dimensional case where $d=\infty$, and we will require
\begin{eqnarray}
    \mbox{tr}(\Sigma) = \sum_k \lambda_k < \infty,
\end{eqnarray}
which ensures that the covariance operator $\Sigma$ is of \emph{trace class}.
Furthermore, it is easy to see that
\begin{eqnarray}
    \|w_0\|_2^2 = \sum_k c_k^2,\,\|w_0\|_\Sigma^2 = \sum_k \lambda_k c_k^2.
\end{eqnarray}
In this setting, the following result shows that the functions $F$ and $G$ \eqref{eq:FG} are given explicitly in terms of the spectral information $(\lambda_k,c_k^2)_{k \ge 1}$.
\begin{restatable}{lm}{}
For Euclidean-norm attacks, it holds for any $r,\lambda \ge 0$ that
\begin{eqnarray}
    G(\lambda) = \lambda^2\sum_k \frac{\lambda_k c_k^2}{(\lambda_k + \lambda)^2},\, F(r,\lambda) = G(\lambda) + r^2\sum_k  \frac{\lambda_k^2 c_k^2}{(\lambda_k + \lambda)^2}.
\end{eqnarray}
In particular, for $\lambda=r^2$, it holds that
\begin{eqnarray}
    F(r,r^2) = r^2\sum_k \frac{\lambda_k c_k^2}{\lambda_k + r^2}.
\end{eqnarray}
\label{lm:FGanalytic}
\end{restatable}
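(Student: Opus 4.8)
The plan is to diagonalize everything in the eigenbasis $(\phi_k)_{k\ge 1}$ of $\Sigma$ and reduce the minimization defining $w^{prox}(\lambda)$ to a collection of one-dimensional problems. In the Euclidean case the dual norm is $\|\cdot\|_\star = \|\cdot\|_2$, so from \eqref{eq:wt} we have $w^{prox}(\lambda) = \arg\min_w \|w-w_0\|_\Sigma^2 + \lambda\|w\|_2^2$, which is a strictly convex quadratic; setting the gradient to zero yields the normal equation $\Sigma(w-w_0) + \lambda w = 0$, i.e. $w^{prox}(\lambda) = (\Sigma + \lambda I_d)^{-1}\Sigma w_0$ (this is already recorded in the excerpt as the $B=I_d$ case of \eqref{eq:wB}). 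Writing $w_0 = \sum_k c_k\phi_k$ and using $\Sigma\phi_k = \lambda_k\phi_k$, the $k$-th coordinate of $w^{prox}(\lambda)$ along $\phi_k$ is simply $\tfrac{\lambda_k}{\lambda_k+\lambda}c_k$, so that $w^{prox}(\lambda) - w_0$ has $k$-th coordinate $\bigl(\tfrac{\lambda_k}{\lambda_k+\lambda}-1\bigr)c_k = -\tfrac{\lambda}{\lambda_k+\lambda}c_k$.

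Next I would plug these coordinates into the definitions \eqref{eq:FG}. Since $\|v\|_\Sigma^2 = \sum_k \lambda_k v_k^2$ when $v = \sum_k v_k\phi_k$, we get
\begin{eqnarray*}
G(\lambda) = \|w^{prox}(\lambda)-w_0\|_\Sigma^2 = \sum_k \lambda_k\cdot\frac{\lambda^2}{(\lambda_k+\lambda)^2}c_k^2 = \lambda^2\sum_k\frac{\lambda_k c_k^2}{(\lambda_k+\lambda)^2},
\end{eqnarray*}
and similarly, using $\|w^{prox}(\lambda)\|_2^2 = \sum_k \bigl(\tfrac{\lambda_k}{\lambda_k+\lambda}\bigr)^2 c_k^2$,
\begin{eqnarray*}
F(r,\lambda) = G(\lambda) + r^2\|w^{prox}(\lambda)\|_2^2 = G(\lambda) + r^2\sum_k\frac{\lambda_k^2 c_k^2}{(\lambda_k+\lambda)^2}.
\end{eqnarray*}
Finally, specializing to $\lambda = r^2$ and combining the two sums over a common factor $\tfrac{r^2 \lambda_k c_k^2}{(\lambda_k+r^2)^2}$ gives $F(r,r^2) = \sum_k \tfrac{r^2\lambda_k c_k^2(r^2 + \lambda_k)}{(\lambda_k+r^2)^2} = r^2\sum_k\tfrac{\lambda_k c_k^2}{\lambda_k+r^2}$, as claimed.

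The computation is entirely routine; the only point that needs a word of care is the infinite-dimensional case $d=\infty$, where one should check that the series converge and that the coordinatewise minimization is legitimate. Convergence is immediate since $\tfrac{\lambda_k}{(\lambda_k+\lambda)^2}\le \tfrac{1}{\lambda}$ (for $\lambda > 0$) and $\sum_k \lambda_k c_k^2 = \|w_0\|_\Sigma^2 < \infty$, so $G(\lambda) \le \lambda\|w_0\|_\Sigma^2 < \infty$, and likewise $\tfrac{\lambda_k^2}{(\lambda_k+\lambda)^2}\le 1$ bounds the second sum by $\|w_0\|_2^2$; at $\lambda=0$ one has $G(0)=0$ trivially. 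The interchange of the infimum with the (orthogonal, hence coordinatewise-separable) sum is justified because the objective $\|w-w_0\|_\Sigma^2 + \lambda\|w\|_2^2$ decouples across the eigenbasis into a sum of nonnegative one-dimensional strictly convex terms, each minimized independently. This is the main (and only mild) obstacle; everything else is bookkeeping.
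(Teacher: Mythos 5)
Your proposal is correct and follows essentially the same route as the paper: identify $w^{prox}(\lambda) = (\Sigma+\lambda I)^{-1}\Sigma w_0$ from the first-order condition, substitute into the definitions of $G$ and $F$, and simplify at $\lambda=r^2$; your coordinatewise computation in the eigenbasis is just the diagonalized form of the paper's matrix-norm computation. The extra remarks on convergence and separability in the trace-class infinite-dimensional case are a harmless (and welcome) addition not spelled out in the paper.
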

The proof (which will be provided shortly) relies on observing that $w^{prox}(\lambda) = (\Sigma + \lambda I)^{-1}\Sigma w_0$.
\subsection{Robustness and Statistical Dimension} Recall that,
for any $\lambda \ge 0$, the statistical dimension of $\Sigma$ is defined by $d_\Sigma(\lambda) := \mathrm{tr}(\Sigma(\Sigma + \lambda I)^{-1}) = \sum_k \lambda_k / (\lambda_k + \lambda)$. The following result highlights the role of the statistical dimension on adversarial robustness in the case of uniform source condition.
\begin{restatable}{cor}{}
If $c_k^2 \asymp c^2$ for some constant $c>0$, then $E_{opt}(r) \asymp \sigma^2 + c^2 r^2 d_\Sigma(r^2)$ for all $r \ge 0$.
\end{restatable}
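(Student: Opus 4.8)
The plan is to obtain the estimate directly from Theorem \ref{thm:Eopt} together with the spectral formula of Lemma \ref{lm:FGanalytic}, and then to use the uniform source condition $c_k^2 \asymp c^2$ to factor the constant $c^2$ out of the resulting spectral sum, leaving exactly the statistical dimension $d_\Sigma(r^2)$.

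First I would invoke Theorem \ref{thm:Eopt} with $\lambda = r^2$, which gives $E_{opt}(r) \asymp \sigma^2 + F(r,r^2)$, so that everything reduces to estimating $F(r,r^2)$. Next, since the attack norm here is Euclidean, the last display of Lemma \ref{lm:FGanalytic} supplies the closed form $F(r,r^2) = r^2\sum_k \lambda_k c_k^2/(\lambda_k + r^2)$.

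Then I would feed in the hypothesis: $c_k^2 \asymp c^2$ means there are absolute constants $0 < a_1 \le a_2$ with $a_1 c^2 \le c_k^2 \le a_2 c^2$ for every $k$. Substituting these bounds term-by-term into the sum and recognizing that $\sum_k \lambda_k/(\lambda_k+r^2) = d_\Sigma(r^2)$ yields $a_1 c^2 r^2 d_\Sigma(r^2) \le F(r,r^2) \le a_2 c^2 r^2 d_\Sigma(r^2)$, i.e. $F(r,r^2) \asymp c^2 r^2 d_\Sigma(r^2)$. Adding $\sigma^2$ to all sides and combining with the first step gives the claim. The case $r=0$ is immediate since $E_{opt}(0) = E(w_0) = \sigma^2$ and the right-hand side reduces to $\sigma^2$; and for $r>0$ the sum defining $d_\Sigma(r^2)$ converges even when $d=\infty$, because $\lambda_k/(\lambda_k+r^2) \le \lambda_k/r^2$ and $\Sigma$ is trace class.

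I do not expect a genuine obstacle here: the corollary is essentially a one-line consequence once the spectral identity of Lemma \ref{lm:FGanalytic} is available, and that identity in turn rests only on the closed form $w^{prox}(\lambda) = (\Sigma + \lambda I)^{-1}\Sigma w_0$ valid for Euclidean-norm attacks. The one point to be mildly careful about is tracking that the constants hidden in ``$c_k^2 \asymp c^2$'' are \emph{absolute} — independent of the index $k$ and of the problem index $d$ — so that they merge cleanly with the absolute constants already present in Theorem \ref{thm:Eopt} without disturbing the $\asymp$.
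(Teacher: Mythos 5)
Your proposal is correct and follows exactly the paper's own route: reduce to $F(r,r^2)$ via Theorem \ref{thm:Eopt}, apply the second part of Lemma \ref{lm:FGanalytic}, and factor out $c^2$ using the uniform source condition to recover $d_\Sigma(r^2)$. The extra care you take with the $r=0$ case and the uniformity of the constants in $c_k^2 \asymp c^2$ is sound but not needed beyond what the paper already records.
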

\begin{proof}
Indeed, applying the second part of Lemma \ref{lm:FGanalytic} gives
$$
F(r,r^2) = r^2 \sum_k \lambda_k c_k^2 /(\lambda_k + r^2) = c^2 r^2 d_\Sigma(r^2).
$$
The result then follows directly from Theorem \ref{thm:Eopt}.
\end{proof}

\subsection{Proof of Lemma \ref{lm:FGanalytic}}
Indeed, in this setting, for any $\lambda \ge 0$, the solution of \eqref{eq:wt}  is explicitly given by $w^{prox}(\lambda) = (\Sigma + \lambda I)^{-1}\Sigma w_0$.
Plugging this into the definition of $F$ and $G$ given in \eqref{eq:FG} respectively then gives
\begin{align*}
G(\lambda) &= \|w^{prox}(\lambda)-w_0\|_\Sigma^2 = \|(\Sigma+\lambda I_d)\Sigma w_0-w_0\|_\Sigma^2\nonumber \\
&= \lambda^2\|(\Sigma + \lambda I)^{-1} w_0\|_\Sigma^2 = \lambda^2 \sum_k \frac{\lambda_k c_k^2}{(\lambda_k + \lambda)^2}
\end{align*}
and
\begin{align*}
F(r,\lambda) &= G(\lambda) + r^2\|w^{prox}(\lambda)\|_2^2 = G(\lambda) + r^2\|(\Sigma + \lambda I)^{-1}\Sigma w_0\|_2^2\nonumber \\
&= G(\lambda) + r^2\sum_k \frac{\lambda_k^2 c_k^2}{(\lambda_k+\lambda)^2},
\end{align*}
which proves the first part of the claim.

For the second part, applying the first part with $\lambda=r^2$ gives
\begin{eqnarray}
\begin{split}
F(r,r^2) &= \sum_k \frac{(r^4\lambda_k + r^2\lambda_k^2)c_k^2}{(\lambda_k + r^2)^2} = r^2\sum_k \frac{\lambda_k c_k^2}{\lambda_k + r^2}, 
\end{split}
\end{eqnarray}
where the second step is a basic algebraic manipulation.


\subsection{Proof of Theorem \ref{thm:isotropic}}
\isotropic*
\begin{proof}
Indeed, for any $\lambda \ge 0$, one easily computes
\begin{align}
G(\lambda) &= \lambda^2 \sum_{k=1}^d \frac{c_k^2}{(1 +\lambda)^2} = \dfrac{\lambda^2\|w_0\|_\Sigma^2}{(1+\lambda)^2},\\
F(r,\lambda) &= G(\lambda) + r^2\sum_{k=1}^d \frac{c_k^2}{(1+\lambda )^2} = \frac{(r^2+ \lambda^2)\|w_0\|_\Sigma^2}{(1+\lambda)^2}.
\end{align}
Now, one easily computes the free lunch threshold as $\epsilon_{FL}(r) = \sqrt{G(r^2)} / \|w_0\|_\Sigma= r^2/(1+r^2) \in [0,1]$. Observe that $\epsilon_{FL}(r) \asymp \min(r^2,1) \in [0,1]$. We then deduce from Theorem \ref{thm:freelunch} that the absolute optimal adversarial risk is given by
\begin{align*}
E_{opt}(r) &\asymp \sigma^2 + F(r,r^2) = \sigma^2 + \dfrac{r^2+r^4 }{(1+r^2)^2}\|w_0\|_2^2 = \sigma^2 + \dfrac{r^2}{1+r^2}\|w_0\|_2^2 \\
&\asymp \sigma^2 + \min(r^2,1)\|w_0\|_\Sigma^2.
\end{align*}
Moreover, if $\epsilon \ge \epsilon_{FL}(r)$, then $E_{opt}(r,\epsilon) \asymp E_{opt}(r)$ and there is free lunch: no tradeoff is required between accuracy and robustness. On the other hand, if $\epsilon \in [0,\epsilon_{FL}(r))$,
then solving the equation $G(\lambda) = \epsilon^2\|w_0\|_\Sigma^2$, we deduce that the optimal regularization parameter is given by
\begin{eqnarray}
\label{eq:isotropic-epsstar}
\lambda_{opt}(\epsilon) = \frac{ \epsilon}{1-\epsilon}.
\end{eqnarray}
Consequently, Theorem \ref{thm:freelunch} tells us that that
\begin{eqnarray*}
\begin{split}
    E_{opt}(r,\epsilon) \asymp \sigma^2 + F(r,\lambda_{opt}(\epsilon)) &= \sigma^2 + \frac{(r^2 + \epsilon^2/(1-\epsilon)^2))\|w_0\|_\Sigma^2}{(1+\epsilon/(1-\epsilon))^2}\\
    &= \sigma^2 + \|w_0\|_\Sigma^2(\epsilon^2  + (1-\epsilon)^2 r^2)\\
    &= \sigma^2 + \|w_0\|_\Sigma^2(\epsilon^2 + (1-\epsilon)^2 r^2),
    \end{split}
\end{eqnarray*}
from which the result follows.
\end{proof}
\subsection{Proof of Theorem \ref{thm:polydecay}}
\polydecay*
 We will need the following crucial lemma.

 \begin{restatable}{lm}{}
 Let the sequence $(\lambda_k)_{k \ge 1}$ of positive numbers be such that $\lambda_k \asymp k^{-\beta}$ for some constant $\beta > 0$, and let $m,n \ge 0$ with $n\beta > 1$. Then, for $D \gg 1$, it holds that 
 \begin{eqnarray}
     \sum_{k=1}^\infty \frac{\lambda_k^n}{(1+D\lambda_k)^m} \asymp D^{-c}\begin{cases}
         \log D,&\mbox{ if }m=n-1/\beta,\\
         1,&\mbox{ else,}
     \end{cases}
 \end{eqnarray}
 where $c:=\min(m,n-1/\beta) \ge 0$.
 \label{lm:fracture}
 \end{restatable}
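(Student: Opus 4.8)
## Proof Proposal for Lemma \ref{lm:fracture}

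The plan is to estimate the sum $S := \sum_{k=1}^\infty \lambda_k^n/(1+D\lambda_k)^m$ by comparing it with an integral, using the assumption $\lambda_k \asymp k^{-\beta}$ to replace $\lambda_k$ with $k^{-\beta}$ up to multiplicative absolute constants throughout (which is harmless since the whole statement is an $\asymp$ claim). So I would first reduce to showing
\begin{equation*}
\sum_{k=1}^\infty \frac{k^{-\beta n}}{(1+D k^{-\beta})^m} \asymp D^{-c}\log D \text{ or } D^{-c},
\end{equation*}
and then, since $k \mapsto k^{-\beta n}/(1+Dk^{-\beta})^m$ is eventually monotone decreasing (for $k$ past the point where $Dk^{-\beta}\asymp 1$ the numerator and the bracket are both essentially monotone), invoke the integral test to get $S \asymp \int_1^\infty x^{-\beta n}(1+Dx^{-\beta})^{-m}\,dx$ up to absolute constants, perhaps after splitting off finitely many initial terms.

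Next I would compute the integral $I(D) := \int_1^\infty x^{-\beta n}(1+Dx^{-\beta})^{-m}\,dx$ by the substitution $u = D x^{-\beta}$, i.e. $x = (D/u)^{1/\beta}$, $dx = -\tfrac{1}{\beta}D^{1/\beta}u^{-1/\beta-1}\,du$. This turns $I(D)$ into a constant times $D^{1/\beta - n}\int_0^{D} u^{n - 1/\beta - 1}(1+u)^{-m}\,du$. Writing $a := n - 1/\beta > 0$ (valid since $n\beta > 1$), the task becomes to estimate $J(D) := \int_0^D u^{a-1}(1+u)^{-m}\,du$ as $D \to \infty$. The behavior of $J(D)$ splits into three cases according to the sign of $a - m$: near $u=0$ the integrand is $\asymp u^{a-1}$ which is integrable since $a>0$, so the small-$u$ part contributes $\Theta(1)$; near $u = \infty$ the integrand is $\asymp u^{a-1-m}$, so the integral $\int_1^D u^{a-1-m}\,du$ is $\Theta(1)$ if $a < m$, is $\Theta(\log D)$ if $a = m$, and is $\Theta(D^{a-m})$ if $a > m$. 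Hence $J(D) \asymp 1$ if $m > a$; $J(D)\asymp \log D$ if $m = a$; $J(D) \asymp D^{a-m}$ if $m < a$. Multiplying back by $D^{1/\beta - n} = D^{-a}$ gives $I(D) \asymp D^{-a}$, $D^{-a}\log D$, or $D^{-m}$ respectively, which is exactly $D^{-c}$ (times $\log D$ in the boundary case) with $c = \min(m,a) = \min(m, n-1/\beta)$.

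The main obstacle — really the only nontrivial point — is justifying the replacement of the sum by the integral uniformly in $D$: the summand is not monotone on all of $k \ge 1$ (it increases while $Dk^{-\beta} \gg 1$, then decreases), so I would handle this by splitting the sum at $k_0 \asymp D^{1/\beta}$, bounding the head $\sum_{k \le k_0}$ separately (on that range $1 + Dk^{-\beta} \asymp Dk^{-\beta}$, so the head is $\asymp D^{-m}\sum_{k\le k_0} k^{\beta(m-n)}$, an elementary power sum), and comparing the monotone tail $\sum_{k > k_0}$ with $\int_{k_0}^\infty$. One then checks that the head contribution is of the same order as (or dominated by) $I(D)$ in each of the three regimes — e.g. $D^{-m}\sum_{k \le k_0}k^{\beta(m-n)} \asymp D^{-m}\cdot D^{(\beta(m-n)+1)/\beta} = D^{-a}$ when $m > a$, etc., with the borderline producing the $\log D$. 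A secondary bookkeeping point is that "for $D \gg 1$" lets me ignore the finitely many terms where the large-$D$ asymptotics have not kicked in, and absorb the condition $\lambda_k \asymp k^{-\beta}$ into the implied constants. Assembling the three cases yields the stated formula.
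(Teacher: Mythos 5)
Your proposal is correct and follows essentially the same route as the paper: both arguments hinge on splitting the sum at $k \asymp D^{1/\beta}$, approximating $(1+D\lambda_k)^{-m}$ by $\min\bigl(1,(D\lambda_k)^{-m}\bigr)$ on each piece, and evaluating the resulting power sums by integral comparison, with the borderline $m=n-1/\beta$ producing the $\log D$. Your substitution $u=Dx^{-\beta}$ reducing everything to $J(D)=\int_0^D u^{a-1}(1+u)^{-m}\,du$ with $a=n-1/\beta$ is a slightly cleaner bookkeeping device than the paper's case-by-case evaluation, but it is not a different proof, and your handling of the non-monotonicity of the summand is the same head/tail split the paper uses implicitly.
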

 \begin{proof}
    First observe that
    \begin{eqnarray*}
        \begin{split}
    \lambda_k^n/(1+D\lambda_k)^m &\asymp \lambda_k^n\min(1,(D\lambda_k)^{-m})\\
    &= \begin{cases}
        \lambda_k^n
        =k^{-n\beta},&\mbox{ if }D\lambda_k < 1, \text{ i.e if }k > D^{1/\beta},\\
        D^{-m}\lambda_k^{-(m-n)}=D^{-m}k^{(m-n)\beta},&\mbox{ else.}
    \end{cases} 
        \end{split}
    \end{eqnarray*}
We deduce that
\begin{eqnarray}
\sum_{k=1}^\infty \frac{\lambda_k^n}{(1+D\lambda_k)^m} \asymp D^{-m}\sum_{1 \le k \le D^{1/\beta}}k^{(m-n)\beta} + \sum_{k > D^{1/\beta}}k^{-n\beta}.
\label{eq:fracture}
\end{eqnarray}
By comparing with the corresponding integral, one can write the first sum in \eqref{eq:fracture} as
\begin{eqnarray*}
    \begin{split}
D^{-m}\sum_{1 \le k \le D^{1/\beta}}k^{(m-n)\beta} &\asymp D^{-m}\int_1^{D^{1/\beta}}u^{(m-n)\beta}\mathrm{d}u\\
&\asymp D^{-m}
\begin{cases}
    (D^{1/\beta})^{1+(m-n)\beta}=D^{-(n-1/\beta)},&\mbox{ if }n - 1/\beta <  m,\\
    \log D,&\mbox{ if }m=n-1/\beta,\\
    1,&\mbox{ else.} 
\end{cases}\\
&=
\begin{cases}
    D^{-(n-1/\beta)},&\mbox{ if }n - 1/\beta <  m,\\
    D^{-m}\log D,&\mbox{ if }m=n-1/\beta,\\
    D^{-m},&\mbox{ else.}  
\end{cases}\\
&= D^{-c}
\begin{cases}
    \log D,&\mbox{ if }m=n-1/\beta,\\
    1,&\mbox{ else,}  
\end{cases}
    \end{split}
\end{eqnarray*}
where $c \ge 0$ is as given in the lemma.

Analogously, one can write the second sum in \eqref{eq:fracture} as
\begin{eqnarray*}
    \begin{split}
\sum_{k > D^{1/\beta}}k^{-n\beta} \asymp \int_{D^{1/\beta}}^\infty u^{-n\beta}\mathrm{d}u \asymp (D^{1/\beta})^{1-n\beta} = D^{-(n-1/\beta)},
    \end{split}
\end{eqnarray*}
and the result follows upon putting things together.
 \end{proof}

\begin{restatable}{cor}{}
\label{cor:polydecay}
Let $\beta$ and $\delta$ be as Theorem \ref{thm:polydecay}. For any $r \ge 0$ and small $\lambda > 0$, the functions $G$ and $F$ defined in \eqref{eq:FG} satisfy
\begin{align}
G(\lambda) &\asymp
\begin{cases}
\lambda^{1-\theta},&\mbox{ if }0 \le \delta < \beta+1,\\
\lambda^2\log(1/\lambda),&\mbox{ if }\delta=\beta+1,\\
\lambda^2,&\mbox{ if }\delta > \beta+1.
\end{cases}\label{eq:polyG}
\\
F(r,\lambda)
&\asymp \begin{cases}
\lambda^{1-\theta} + r^2\lambda^{-\theta},&\mbox{ if }0 \le \delta < 1,\\
\lambda^{1-\theta} + r^2\log(1/\lambda),&\mbox{ if }\delta=1,\\
\lambda^{1-\theta} + r^2,&\mbox{ if }1 < \delta < \beta + 1,\\
\lambda^2 \log(1/\lambda) + r^2,&\mbox{ if }\delta = \beta + 1,\\
\lambda^2 + r^2,&\mbox{ if }\delta > \beta+1.
\end{cases}\label{eq:polyF}.
\end{align}
Moreover, for small $r > 0$, it holds that
\begin{eqnarray}
F(r,r^2)
\asymp \begin{cases}
r^{2(1-\theta)},&\mbox{ if }0 \le \delta < 1,\\
r^2\log(1/r),&\mbox{ if }\delta=1,\\
r^2,&\mbox{ if }\delta > 1.
\end{cases}
\label{eq:polyFr2}
\end{eqnarray}
\end{restatable}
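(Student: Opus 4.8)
The plan is to feed the explicit spectral formulas for $G$ and $F$ supplied by Lemma~\ref{lm:FGanalytic} into the asymptotic estimate of Lemma~\ref{lm:fracture}, after a change of variables that puts each series into the canonical shape $\sum_k \lambda_k^n/(1+D\lambda_k)^m$. Once this reduction is made, the only remaining work is bookkeeping: locating the critical exponents $\delta$ at which $c=\min(m,\,n-1/\beta)$ switches regime or the logarithmic correction turns on, and then reading off \eqref{eq:polyG}, \eqref{eq:polyF}, \eqref{eq:polyFr2}.

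First I would record that in the power-law regime \eqref{eq:polyregime} one has $c_k^2 \asymp k^{-\delta} \asymp \lambda_k^{\delta/\beta}$, hence $\lambda_k c_k^2 \asymp \lambda_k^{1+\delta/\beta}$ and $\lambda_k^2 c_k^2 \asymp \lambda_k^{2+\delta/\beta}$. Then, for small $\lambda>0$, factoring $(\lambda_k+\lambda)^2=\lambda^2(1+D\lambda_k)^2$ with $D:=1/\lambda\gg 1$ and substituting into Lemma~\ref{lm:FGanalytic} gives
\begin{align*}
G(\lambda) &\asymp \sum_k \frac{\lambda_k^{1+\delta/\beta}}{(1+D\lambda_k)^2}, \\
F(r,\lambda) - G(\lambda) &\asymp r^2 D^2 \sum_k \frac{\lambda_k^{2+\delta/\beta}}{(1+D\lambda_k)^2}.
\end{align*}
Both series are exactly of the form handled by Lemma~\ref{lm:fracture}: apply it to the first with $(n,m)=(1+\delta/\beta,\,2)$ and to the second with $(n,m)=(2+\delta/\beta,\,2)$. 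The hypothesis $n\beta>1$ holds in both cases because $\beta>1$ (here $n\beta=\beta+\delta$, resp.\ $2\beta+\delta$), and the requirement ``$D\gg 1$'' is precisely the ``small $\lambda$'' hypothesis of the corollary.

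Next I would run the case analysis. Writing $\theta=(1-\delta)/\beta$, one has $n-1/\beta=1-\theta$ for the $G$-series and $n-1/\beta=2-\theta$ for the $F$-remainder, so Lemma~\ref{lm:fracture} yields $G(\lambda)\asymp D^{-\min(2,\,1-\theta)}$ (times $\log D$ iff $2=1-\theta$, i.e.\ $\delta=\beta+1$) and the remainder $\asymp r^2 D^{2-\min(2,\,2-\theta)}$ (times $\log D$ iff $2=2-\theta+\tfrac1\beta\cdot 0$, i.e.\ iff $\delta=1$). Converting $D=1/\lambda$ back: $G(\lambda)\asymp\lambda^{1-\theta}$ for $\delta<\beta+1$, $\lambda^2\log(1/\lambda)$ for $\delta=\beta+1$, $\lambda^2$ for $\delta>\beta+1$ (this is \eqref{eq:polyG}); and $F(r,\lambda)-G(\lambda)\asymp r^2\lambda^{-\theta}$ for $\delta<1$, $r^2\log(1/\lambda)$ for $\delta=1$, $r^2$ for $\delta>1$. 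Summing with $G(\lambda)$ and recording each sub-case separately produces \eqref{eq:polyF}. Finally, for \eqref{eq:polyFr2} I would just substitute $\lambda=r^2$ with $r$ small: for $0\le\delta<1$ the two terms merge since $r^2(r^2)^{-\theta}=(r^2)^{1-\theta}=r^{2(1-\theta)}$; for $\delta=1$ we get $r^2+r^2\log(1/r^2)\asymp r^2\log(1/r)$; and for $\delta>1$ the $G$-contribution ($r^{2(1-\theta)}$, $r^4\log(1/r)$, or $r^4$, each $o(r^2)$) is swamped by $r^2$.

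The main obstacle—really the only delicate point—is keeping the two threshold values $\delta=1$ and $\delta=\beta+1$ straight across the two distinct series, and applying the $\min$ inside $c$ together with the presence/absence of the $\log$ factor consistently; there is no analytic difficulty once Lemma~\ref{lm:fracture} and Lemma~\ref{lm:FGanalytic} are in hand, the rest being algebraic simplification.
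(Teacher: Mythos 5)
Your proposal is correct and follows essentially the same route as the paper's proof: both substitute the spectral formulas of Lemma \ref{lm:FGanalytic}, rewrite each series as $\sum_k \lambda_k^n/(1+D\lambda_k)^m$ with $D=1/\lambda$, and apply Lemma \ref{lm:fracture} with $(n,m)=(1+\delta/\beta,2)$ for $G$ and $(n,m)=(2+\delta/\beta,2)$ for $F-G$, yielding the same exponents $\min(2,1-\theta)$ and $\max(\theta,0)$ and the same logarithmic thresholds $\delta=\beta+1$ and $\delta=1$. The final substitution $\lambda=r^2$ and the case bookkeeping also match the paper's argument.
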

\begin{proof}
Set $D := 1/\lambda$. One can write
\begin{eqnarray}
    G(\lambda) = \lambda^2\sum_{k \ge 1}\frac{\lambda_k c_k^2}{(\lambda + \lambda_k)^2} = \sum_{k \ge 1}\frac{\lambda_k c_k^2}{(1+D\lambda_k)^2} \asymp \sum_{k \ge 1}\frac{\lambda_k^{1+\delta/\beta}}{(1+D\lambda_k)^2},
\end{eqnarray}
where we have used the fact that $\lambda_k c_k^2 \asymp k^{-\beta - \delta} = k^{-(1+\delta/\beta)\beta} \asymp \lambda_k^{1+\delta/\beta}$. Applying Lemma \ref{lm:fracture} with $n=1+\delta/\beta$ and $m=2$, we deduce that
\begin{eqnarray}
    G(\lambda) \asymp D^{-c}
    \begin{cases}
        \log D,&\mbox{ if }m=n-1/\beta,\text{ i.e if }\delta=\beta+1,\\
        1,&\mbox{ else,}
    \end{cases}
\end{eqnarray}
where $c = \min(m,n-1/\beta) = \min(2,1+\delta/\beta - 1/\beta) = \min(2,1-\theta)$. This proves \eqref{eq:polyG}.

Analogously, one can rewrite
\begin{eqnarray*}
    \begin{split}
        (F(r,\lambda) - G(\lambda))/r^2 = \sum_{k \ge 1}\frac{\lambda_k^2 c_k^2}{(\lambda + \lambda_k)^2} = D^2 \sum_{k \ge 1}\frac{\lambda_k^2 c_k^2}{(1+D\lambda_k)^2} \asymp D^2\sum_{k \ge 1}\frac{\lambda_k^{2+\delta/\beta}}{(1+D\lambda_k)^2}.
    \end{split}
\end{eqnarray*}
Applying Lemma \ref{lm:fracture} to the RHS with $n=2+\delta/\beta$ and $m=2$ then gives
\begin{eqnarray*}
(F(r,\lambda) - G(\lambda))/r^2 \asymp D^2\sum_{k \ge 1}\frac{\lambda_k^{2+\delta/\beta}}{(1+D\lambda_k)^2} \asymp D^{C}
\begin{cases}
    \log D,&\mbox{ if }m=n-1/\beta,\text{ i.e if }\delta=1,\\
    1,&\mbox{ else,}
\end{cases}
\end{eqnarray*}
where $C = 2-\min(m,n-1/\beta) = 2-\min(2,2-\theta) = -\min(0,-\theta) = \max(\theta,0)$. Combining with \eqref{eq:polyG} proves \eqref{eq:polyF}.

Finally, \eqref{eq:polyFr2} follows by pluggin $\lambda=r^2$ in \eqref{eq:polyF} and simplifying.
\end{proof}

We are now ready to prove Theorem \ref{thm:polydecay}
\begin{proof}[Proof of Theorem \ref{thm:polydecay}]
Equipped with Corollary \ref{cor:polydecay}, first observe that if $\delta > 1$, then $E_{opt}(r) \asymp \sigma^2 + F(r,r^2) \asymp \sigma^2 + r^2$ which matches $E(w_0,r) \asymp \sigma^2 + r^2\|w_0\|_2^2 \asymp \sigma^2 + r^2$, and so no tradeoff is needed: the ground-truth model $w_0$ achieves the optimal level of robustness $E_{opt}(r)$.

Now, if $\delta \in [0,\beta+1)$, we deduce that for $r=o(1)$, the free lunch threshold is given by
\begin{eqnarray}
\epsilon_{FL}(r) :=  \frac{\sqrt{G(r^2)}}{\|w_0\|_\Sigma} \asymp r^{(1-\theta)}=o(1).
\end{eqnarray}
For $\epsilon \in [0,\epsilon_{FL}(r))$, solving the equation $G(\lambda) = \epsilon^2\|w_0\|_\Sigma^2$ for $\lambda \in [0,r^2]$ gives
\begin{eqnarray}
\lambda_{opt}(r,\epsilon) \asymp \epsilon^{2/(1-\theta)}.
\label{eq:powerlaw-topt}
\end{eqnarray}

On the other hand, if $\delta=\beta+1$, then 
$G(\lambda) \asymp \lambda^2 \log(1/\lambda)$ for $\lambda=o(1)$, and so
\begin{eqnarray}
\epsilon_{FL}(r) := \frac{\sqrt{G(r^2)}}{\|w_0\|_\Sigma} \asymp r^2\log(1/r).
\end{eqnarray}
For $\epsilon \in [0,\epsilon_{FL}(r))$, solving \eqref{eq:implicit} for $\lambda \in [0,r^2]$ then gives
\begin{eqnarray}
\lambda_{opt}(r,\epsilon) \asymp e^{W(-\Theta(\epsilon^2))/2},    
\end{eqnarray}
where $W$ is an appropriate branch of the Lambert function.

Finally, if $\delta > \beta + 1$, then $G(\lambda) \asymp \lambda^2$ for $\lambda=o(1)$, and so 
\begin{eqnarray}
    \epsilon_{FL}(r) := \frac{\sqrt{G(r^2)}}{\|w_0\|_\Sigma} \asymp r^2.
\end{eqnarray}
For $\epsilon \in [0,\epsilon_{FL}(r))$, solving \eqref{eq:implicit} for $\lambda \in [0,r^2]$ then gives
\begin{eqnarray}
    \lambda_{opt}(r,\epsilon) \asymp r.
\end{eqnarray}

Combining with Theorem \ref{thm:freelunch} and putting things together gives the estimates stated in Table \ref{tab:poly}.
\end{proof}

\section{Other Proofs}
\subsection{Proof of Theorem \ref{thm:isotropic-eps-different-p}}
\isotropicEps*
\begin{proof}
    First observe that $\|w_0\|_\Sigma^2 = \|w_0\|_2^2 = s$ and $\|\Sigma w_0\|_p = \|w_0\|_p =s^{1/p}$.
We deduce that $r_0 := \|w_0\|_\Sigma/\|w_0\|_\star = s^{1/2}/\|w_0\|_q = s^{1/2-1/q}$. Likewise, $r_1 := \|\Sigma w_0\| / \|w_0\|_\Sigma = s^{1/p-1/2}=r_0$, since $1/p+1/q = 1$ by definition. We conclude that the problem is well-conditioned in the sense of Section xyz, and the claimed formulae for $E_{opt}(r)$ and $E_{opt}(r,\epsilon)$ follow from Theorem \ref{thm:well-conditioned}.

Furthermore, in the special case of Euclidean-norm attacks (i.e $p=2$), the claimed formula for the free lunch threshold $\epsilon_{FL}(r)$ and the optimal regularization parameter $\lambda_{opt}(r,\epsilon)$ follow from Theorem \ref{thm:isotropic}.

To conclude the proof, we know establish \eqref{eq:dust}. Indeed, if $\sigma^2=o(1)$, $p \in [1,\infty)$, and $1 \ll s \le d$ and we take $r \asymp 1 / s^{1/q}$ in the limit $d \to \infty$, then we see from Table \ref{tab:isotropic-eps} that $E_{opt}(r) \asymp (s/d)\min(r\sqrt d,1)^2 \asymp  \sigma^2 + (s/d)\min(s^{1/q}\sqrt d,1)^2 = \sigma^2 + (s/d) = o(1)$ since $\sigma^2=o(1)$ and $s/d =o(1)$.

Also from the same table, one reads $E_{opt}(r,\eps) \asymp (s/d)H(r/r_0(p),\epsilon)^2$ with $r_0(p) = s^{1/p-1/2}/\sqrt d$. Now, for any fixed $\epsilon \in [0,1)$, one computes
$$
H(r/r_0(p),\epsilon) \asymp H(s^{1/2-1/p-1/q} \sqrt d,\epsilon) = H(\sqrt{d/s},\epsilon) = (\epsilon + (1-\epsilon)\sqrt{d/s}) \asymp (1-\epsilon)\sqrt{d/s},
$$
and so $E_{opt}(r,\epsilon) \asymp (s/d)H(r/r_0(p),\epsilon)^2 \asymp (1-\epsilon)^2$ as claimed.
\end{proof}

\subsection{Proof of Theorem \ref{thm:peetre}}
\peetre*
\begin{proof}
For any $a \in \mathbb R^d$ and $t \ge 0$, define $\kappa_a(t) := \inf_{u \in \mathbb R^d}t \|u-a\|_2 + \|u\|_1$. Let $H_d := \sum_{k=1}^d 1/k$ be $d$th harmonic number. In the particular case where $a = (1/(k H_d))_{k \in [d]} \in \mathbb R^d$, it was shown in \cite{CanonneDistributionTesting2019} that: $\kappa_a(t) = \dfrac{2\log t + O(1)}{\log d}$ for $1 \ll t \le \sqrt d$. Noting that $w_0=H_d a$ and taking $t=1/r$, we deduce that
\begin{eqnarray}
    \begin{split}
K(w,r) &:= \inf_{w \in \mathbb R^d} \|w-w_0\|_2 + r\|w\|_1 = \inf_{w \in \mathbb R^d} H_d\|w/H_d-a\|_2 + rH_d\|w/H_d\|_1\\
&= rH_d \cdot \inf_{u \in \mathbb R^d} r^{-1}\|u-a\|_2 + \|u\|_1\text{ with change of variable }u = w/H_d\\
&= rH_d \cdot \kappa_a(1/r).
    \end{split}
\end{eqnarray}
Thus, for $1/\sqrt d \le r = o(1)$, taking $t=1/r$ gives
$$
K(w,r) = \dfrac{rH_d(\log(1/r) + O(1))}{\log d} = r (\log(1/r) + o(1)),
$$
from which the first part of the claim follows.

For the second part, taking $r=1/\log d$ gives 
$$
E_{opt}(r) \asymp \sigma^2 + r^2\log(1/r)^2 = \sigma^2 + (\log^2 d/\log d)^2 = \sigma^2+o(1) = o(1).
$$
On the other hand, it is clear that for any $r \ge 0$,
$$
E(w_0,r) = \sigma^2 + r^2\|w_0\|_1^2 = \sigma^2 + r^2(\sum_{j=1}^d 1/j)^2 \asymp \sigma^2 + (r\log d)^2,
$$
Thus, for $r = 1/\log d$ and $\sigma^2=o(1)$, then $E(w_0,r) \asymp \sigma^2 + 1 = \Theta(1)$.
\end{proof}

\end{document}